\documentclass[11pt,letterpaper,english]{article}


\usepackage[small,compact]{titlesec} 
\usepackage{amsmath}
\usepackage{amsthm}
\usepackage{amssymb}
\usepackage{graphicx}
\graphicspath{{Figures/}}
\usepackage{hyperref}
\usepackage{times}
\usepackage{setspace}
\usepackage{booktabs}
\usepackage{algorithm}
\usepackage{algpseudocode}
\usepackage{enumitem}
\usepackage{microtype}
\usepackage{tikz}
\usepackage{pgfplots}
\pgfplotsset{compat=1.17}
\usepackage{multirow}
\usepackage{soul}
\usepackage{caption}
\usepackage{subcaption}
\usepackage{float}
\usepackage{rotating}
\usepackage{array}
\usepackage{bm}
\PassOptionsToPackage{table}{xcolor}
\usepackage{xcolor}
\usepackage{colortbl} 
\definecolor{darkgrey}{gray}{0.25}

\usepackage{pifont}
\usepackage[bottom]{footmisc}
\usepackage{wrapfig}
\usepackage[makeroom]{cancel}
\usepackage[edges]{forest}
\usepackage{makecell}
\usepackage{longtable}
\usepackage{dsfont}
\usepackage{comment}
\usepackage{blkarray}
\usepackage{stackengine}
\usepackage{accents}
\usepackage[utf8]{inputenc}
\usepackage[T1]{fontenc}
\usepackage{etoolbox}
\DeclareUnicodeCharacter{0141}{\L}
\DeclareUnicodeCharacter{0142}{\l}
\DeclareUnicodeCharacter{2013}{--}
\DeclareUnicodeCharacter{2014}{---}
\DeclareUnicodeCharacter{2011}{-}
\DeclareUnicodeCharacter{2212}{-}
\usepackage{epstopdf}
\usepackage{xfrac}
\usepackage{fullpage}

\usetikzlibrary{arrows}
\usetikzlibrary{positioning}
\usepackage{bibunits}
\usepackage[sectionbib,round]{natbib}
\usepackage{ragged2e}
\def\bibsep{\smallskipamount}
\setlength{\bibsep}{0.0pt}
\setlength{\footskip}{30pt}
\defaultbibliographystyle{abbrvnat} 
\defaultbibliography{bib} 
\usepackage{listings}
\bibpunct[, ]{(}{)}{;}{a}{}{,}

\setlength{\bibhang}{0pt}
\apptocmd{\thebibliography}{\RaggedRight}{}{}


\onehalfspacing

\newcommand{\cmark}{\ding{51}} 
\newcommand{\lmark}{$\bigtriangleup$} 
\newcommand{\lbe}{\mathcal{L}_{BE}}

\newcommand*{\QED}[1][$\square$]{%
\leavevmode\unskip\penalty9999 \hbox{}\nobreak\hfill
    \quad\hbox{#1}%
}
\usepackage[toc,page]{appendix}


\DeclareMathOperator*{\argmin}{arg\,min}

\newcommand{\squishlist}{
   \begin{list}{$\bullet$}
    { \setlength{\itemsep}{0pt} \setlength{\parsep}{1pt}
      \setlength{\topsep}{1pt} \setlength{\partopsep}{1pt}
      \setlength{\leftmargin}{1.5em} \setlength{\labelwidth}{1em}
      \setlength{\labelsep}{0.5em} } }

\newcommand{\squishlisttwo}{
   \begin{list}{$\bullet$}
    { \setlength{\itemsep}{0pt} \setlength{\parsep}{0pt}
      \setlength{\topsep}{0pt} \setlength{\partopsep}{0pt}
      \setlength{\leftmargin}{1em} \setlength{\labelwidth}{1.5em}
      \setlength{\labelsep}{0.5em} } }

\newcommand{\squishend}{
    \end{list}  }

\newtheorem{thm}{Theorem}
\newtheorem{lem}[thm]{Lemma}
\newtheorem{cor}[thm]{Corollary}

\newtheorem{asmp}{Assumption}
\newtheorem{defn}{Definition}

\newtheorem{rem}{Remark}

\DeclareGraphicsRule{.tif}{png}{.png}{`convert #1 `dirname #1`/`basename #1 .tif`.png}

\DeclareMathAlphabet{\mathcalligra}{T1}{calligra}{m}{n}

\usepackage{appendix}

\title{
An Empirical Risk Minimization Approach for \\Offline Inverse RL and Dynamic Discrete Choice Model}

\author{
Enoch H. Kang\thanks{We would like to thank John Rust, Kyoungseok Jang, Zikun Ye, and the anonymous reviewers at EC 2025 for their detailed feedback, which has significantly improved the paper. We also thank participants of the 2024 University of Washington Marketing PhD Workshop, the 2025 ACM Conference on Economics and Computation, 2025 AIM conference, and the Dynamic Structural Econometrics 2025 Summer School. Please ask all correspondance to: ehwkang@uw.edu, hemay@uw.edu and lalitj@uw.edu}\\
Foster School of Business, University of Washington
\and
Hema Yoganarasimhan\\
Foster School of Business, University of Washington
\and
Lalit Jain\\
Foster School of Business, University of Washington
}

\date{\today}

\begin{document}

\maketitle

\begin{abstract}
We study the problem of estimating Dynamic Discrete Choice (DDC) models, also known as offline Maximum Entropy-Regularized Inverse Reinforcement Learning (offline MaxEnt-IRL) in machine learning. The objective is to recover the reward function that governs agent behavior from offline behavior data. In this paper, we propose a globally convergent gradient-based method for solving these problems without the restrictive assumption of linearly parameterized rewards. The novelty of our approach lies in introducing the Empirical Risk Minimization (ERM) based IRL/DDC framework, which circumvents the need for explicit state transition probability estimation in the Bellman equation. Furthermore, our method is compatible with non-parametric estimation techniques such as neural networks. Therefore, the proposed method has the potential to be scaled to high-dimensional, infinite state spaces. A key theoretical insight underlying our approach is that the Bellman residual satisfies the Polyak-Łojasiewicz (PL) condition —- a property that, while weaker than strong convexity, is sufficient to ensure fast global convergence guarantees. Through a series of synthetic experiments, we demonstrate that our approach consistently outperforms benchmark methods and state-of-the-art alternatives.
\end{abstract}

\textbf{Keywords:} Dynamic Discrete Choice, Offline Inverse Reinforcement Learning, Gradient-based methods, Empirical Risk Minimization, Neural Networks


\newpage
\section{Introduction}\label{sec:Intro}
Learning from previously collected datasets has become an essential paradigm in sequential decision-making problems where exploration during interactions with the environment is infeasible (e.g., self-driving cars, medical applications) or leveraging large-scale offline data is preferable (e.g., social science, recommendation systems, and industrial automation) \citep{levine2020offline}. However, in such cases, defining a reward function (a flow utility function) that accurately captures the underlying decision-making process is often challenging due to the unobservable/sparse rewards \citep{zolna2020offline} and complexity of real-world environments \citep{foster2021offline}. To circumvent these limitations, learning from expert demonstrations has gained prominence, motivating approaches such as Imitation Learning (IL) and offline Inverse Reinforcement Learning (offline IRL) or equivalently, Dynamic Discrete Choice (DDC) model estimation\footnote{Refer to Section \ref{sec:DDCIRLequiv} for the equivalence between Offline Maximum Entropy IRL (MaxEnt-IRL) and DDC.}.

While IL directly learns a policy by mimicking expert actions, it is susceptible to \textit{distribution shift}, i.e., when the testing environment (reward, transition function) is different from the training environment. On the other hand, offline IRL (or DDC) aims to infer the underlying reward function that best explains expert behavior. Given that this reward function is identified under a suitable normalization assumption, a new policy can be trained after a change in the environment's transition dynamics (e.g., modifications in recommendation systems) or in the reward function (e.g., marketing interventions). This capability enables offline IRL (or DDC) to be employed in counterfactual simulations, such as evaluating the effects of different policy decisions without direct experimentation. However, an imprecise reward function can lead to suboptimal policy learning and unreliable counterfactual analyses, ultimately undermining its practical utility. As a result, offline IRL (or DDC)'s key metric becomes the \textit{precision} of reward inference. 

While the precise reward function estimation objective has been studied in recent offline IRL literature, theoretically guaranteed existing methods that satisfy the Bellman equation have been limited to explicitly learning a transition model \citep{zeng2023understanding}, which suffers exponential statistical complexity as the state dimension increases, or requiring deterministic transition for identification and estimation \citep{fu2017learning, cao2021identifiability, kang2026lecturenoteofflinerl}. The Dynamic Discrete Choice (DDC) literature in econometrics has separately explored the same problem \citep{rust1994structural, hotz1993conditional, aguirregabiria2007sequential, su2012constrained, adusumilli2019temporal, kristensen2021solving, geng2023data}. However, existing methodologies with theoretical precision guarantees suffer from the curse of dimensionality (computational or statistical complexity exponentially grows as state dimension increases \citep{ kristensen2021solving}) or algorithmic instabilities \citep{adusumilli2019temporal, kang2026lecturenoteofflinerl}. 
This motivates us to ask the following question:
\begin{center}
    \textit{Can we propose a scalable gradient-based method to infer rewards  (or Q$^*$ function) while provably ensuring global optimality with no assumption on reward structure/transition function knowledge?}
\end{center}

\noindent \textbf{Our contributions. }In this paper, we propose an Empirical Risk Minimization (ERM)--based gradient-based method for IRL/DDC as an inverse Q-learning method. This method provably finds the true parameter $\boldsymbol{\theta}$ for $Q^*$ estimation (up to statistical error, which diminishes at an $O(N^{-1/2})$ rate with $N$ samples) with $O(T^{-1/4})$ rate of convergence, where $T$ is the number of gradient iterations. In addition, the true reward function can be computed from the estimated $Q^\ast$ with no extra statistical or computational cost, given the estimated $Q^\ast$ function. In developing this method, we make the following technical contributions:
\begin{itemize}[leftmargin=0.3cm]
    \item We propose an empirical risk minimization (ERM) problem formulation, which we refer to as ERM-IRL in the IRL literature and ERM-DDC in the DDC literature, reflecting the shared problem. This formulation allows us to circumvent the need for explicit transition function estimation.\footnote{Transition function estimation can also be avoided in the counterfactual reasoning stage. Once the estimated reward function $\hat{r}$ is in hand, one can perform counterfactual policy evaluation and optimization by running an offline RL routine over a reward info-augmented data of $(s, a, \hat{r})$ tuples.} Notably, this formulation also allows us to conclude that imitation learning (IL) is a strictly easier problem than IRL/DDC estimation problem. 
    \item We show that the objective function of the ERM-IRL satisfies the Polyak-Łojasiewicz (PL) condition, which is a weaker but equally useful alternative to strong convexity for providing theoretical convergence guarantees. This is enabled by showing that each of its two components -- expected negative log-likelihood and mean squared Bellman error -- satisfies the PL condition\footnote{The sum of two PL functions is not necessarily PL; in the proof, we show that our case is an exception.}. 
    \item Since the mean squared Bellman error term is a solution to a strongly concave inner maximization problem \citep{dai2018sbeed, patterson2022generalized}, minimization of the ERM-IRL objective becomes a mini-max problem with two-sided PL condition \citep{yang2020global}. Using this idea, we propose an alternating gradient ascent-descent algorithm that provably converges to the true $Q^*$, which is the unique saddle point of the problem.
\end{itemize}
In addition to establishing theoretical global convergence guarantees, we demonstrate the empirical effectiveness of the algorithm through standard benchmark simulation experiments. 
Specifically, we evaluate using a series of simulations: (1) The Rust bus engine replacement problem \citep{rust1987optimal}, which is the standard framework for evaluation used in the dynamic discrete choice literature, and (2) A high-dimensional variant of the Rust bus-engine problem, where we allow a very large state space.
In both settings, we show that our algorithm outperforms/matches the performance of existing approaches. It is particularly valuable in large state-space settings, where many of the standard algorithms become infeasible due to their need to estimate state-transition probabilities. We expect our approach to be applicable to a variety of business and economic problems where the state and action space are infinitely large, and firms/policy-makers do not have {\it a priori} knowledge of the parametric form of the reward function and/or state transitions. 

The remainder of the paper is organized as follows. In Section \ref{sec:Related}, we discuss related work in greater detail. Section \ref{sec:SetupBackgrounds} introduces the problem setup and provides the necessary background. In Section \ref{sec:ERM-IRL}, we present the ERM-IRL framework, followed by an algorithm for solving it in Section \ref{sec:Algorithm}. Section \ref{sec:Analysis} establishes the global convergence guarantees of the proposed algorithm. Finally, Section \ref{sec:Experiments} presents experimental results demonstrating the effectiveness of our approach.


\section{Related works}
\label{sec:Related}

The formulations of DDC and (Maximum Entropy) IRL are fundamentally equivalent (see Section \ref{sec:DDCIRLequiv} for details). In the econometrics literature, stochastic decision-making behaviors are usually considered to come from the random utility model \citep{mcfadden2001economic}, which assumes that the effect of unobserved covariates appears in the form of additive and conditionally independent randomness in agent utilities \citep{rust1994structural}. On the other hand, in the computer science literature, stochastic decision-making behaviors are modeled as a `random choice'. That is, the assumption is that agents play a stochastic strategy (where they randomize their actions based on some probabilities). This model difference, however, is not a critical differentiator between the two literatures. The two modeling choices yield equivalent optimality equations, meaning that the inferred rewards are identical under both formulations \citep{ermon2015learning}.

The main difference between DDC and IRL methods stems from their distinct objectives. DDC's objective is to estimate the \textit{exact} reward function that can be used for subsequent counterfactual policy simulations (e.g., sending a coupon to a customer to change their reward function). Achieving such strong identifiability necessitates a strong anchor action assumption (Assumption \ref{ass:anchor}). A direct methodological consequence of this pursuit for an exact function is the requirement to solve the Bellman equation, which in turn causes significant scalability issues. On the other hand, IRL's objective is to identify \textit{a set of} reward functions that are compatible with the data. This allows for a weaker identification assumption \citep{ng1999policy} than DDC's assumption (Assumption \ref{ass:anchor}) and avoids the computational burden inherent in the DDC framework caused by the Bellman equation.

Table \ref{tab:lit} compares DDC and IRL methods based on several characteristics, which are defined here as they correspond to the table's columns. The first set of characteristics is typically satisfied by IRL methods but not by DDC methods with global optimality. \textit{One-shot optimization} indicates that a method operates on a single timescale without requiring an inner optimization loop or inner numerical integration like forward roll-outs. \textit{Transition Estimation-Free} signifies that the method avoids the explicit estimation of a transition function. A method is considered \textit{Gradient-based} if its primary optimization process relies on gradient descent, and \textit{Scalable} if it can handle state spaces of at least $20^{10}$.

Conversely, a different set of characteristics is often satisfied by DDC methods. The \textit{Bellman equation} criterion is met if the estimation procedure fits the estimated $r$-function or $Q^\ast$-function to the Bellman equation; this definition excludes occupancy-matching methods (e.g., IQ-Learn \citep{garg2021iq}, Clare \citep{yue2023clare}; see Appendix \ref{sec:occupancy}) and the semi-gradient method \citep{adusumilli2019temporal}, which minimizes the \textit{projected} squared Bellman error for linear value functions \citep{sutton2018reinforcement}. \textit{Global optimality} refers to a theoretical guarantee of convergence to the globally optimal $r$ or $Q^\ast$ beyond linear value function approximation. The $\triangle$ symbol indicates a conditional guarantee; for instance, Approximate Value Iteration (AVI) \citep{adusumilli2019temporal} is marked with a $\triangle$ due to the known instability of fitted fixed-point methods beyond the linear reward/value class \citep{wang2021instabilities,jiangoffline}. ML-IRL is also marked with a $\triangle$ because its guarantee applies only to linear reward functions. For methods that achieve global optimality, the table also lists their sample complexity. A rate of $1 / \sqrt{N}$ implies that the estimated parameter $\hat{\boldsymbol{\theta}}$ converges to the true parameter $\boldsymbol{\theta}^\ast$ at that rate, where $N$ is the sample size.

\begin{table}[h]
\centering
\caption{Comparison of DDC and IRL methods.}
\label{tab:lit}
\scalebox{0.69}{ 
\renewcommand{\arraystretch}{1.5} 
\begin{tabular}{l c c c c c c c} 
\toprule
\textbf{Method} & \makecell[c]{\textbf{One-shot}\\\textbf{Optimization}} & \makecell[c]{\textbf{Transition}\\\textbf{Estimation-Free}} & \makecell[c]{\textbf{Gradient-}\\\textbf{Based}} & \textbf{Scalability} & \makecell[c]{\textbf{Bellman}\\\textbf{Equation}} & \makecell[c]{\textbf{Global}\\\textbf{Optimality}} & \makecell[c]{\textbf{Statistical}\\\textbf{Complexity}} \\
\midrule
\multicolumn{8}{l}{\textit{DDC Methods}} \\
\cmidrule(r){1-8}
\makecell[l]{NFXP \\ \citep{rust1987optimal}} & & & & & \cmark & \cmark & $1/\sqrt{N}$ \\
\makecell[l]{CCP \\ \citep{hotz1993conditional}} & & & & & \cmark & \cmark & $1/\sqrt{N}$ \\
\makecell[l]{MPEC \\ \citep{su2012constrained}} & & & & & \cmark & \cmark & $1/\sqrt{N}$ \\
\makecell[l]{AVI \\ \citep{adusumilli2019temporal}} & & \cmark & & & \cmark & \lmark (unstable) & \\
\makecell[l]{Semi-gradient \\ \citep{adusumilli2019temporal}} & & \cmark & \cmark & \cmark & & & \\
\makecell[l]{RP \\ \citep{barzegary2022recursive}} & & & & \cmark & \cmark & & \\
\makecell[l]{SAmQ \\ \citep{geng2023data}} & & \cmark & & \cmark & \cmark & & \\
\addlinespace
\multicolumn{8}{l}{\textit{IRL Methods}} \\
\cmidrule(r){1-8}
\makecell[l]{BC \\ \citep{torabi2018behavioral}} & \cmark& \cmark & \cmark & \cmark & & & \\
\makecell[l]{IQ-Learn \\ \citep{garg2021iq}} & \cmark & \cmark & \cmark & \cmark & & & \\
\makecell[l]{Clare \\ \citep{yue2023clare}} & & & \cmark & \cmark & & & \\
\makecell[l]{ML-IRL \\ \citep{zeng2023understanding}} & & & \cmark & \cmark & \cmark & \lmark (Linear only) & \\
\makecell[l]{Model-enhanced AIRL \\ \citep{zhan2024model}} & & \cmark & & \cmark & & & \\
\midrule
\textbf{Ours} & \cmark & \cmark & \cmark & \cmark & \cmark & \cmark & $1/\sqrt{N}$ \\
\bottomrule
\end{tabular}
} 
\end{table}

\subsection{Dynamic discrete choice model estimation literature}

The seminal paper by Rust \citep{rust1987optimal} pioneered this literature, demonstrating that a DDC model can be solved by solving a maximum likelihood estimation problem that runs above iterative dynamic programming. As previously discussed, this method is computationally intractable as the size of the state space increases. 

\cite{hotz1993conditional} introduced a method which is often called the two-step method conditional choice probability (CCP) method, where the CCPs and transition probabilities estimation step is followed by the reward estimation step. The reward estimation step avoids dynamic programming by combining simulation with the insight that differences in value function values can be directly inferred from data without solving Bellman equations. However, simulation methods are, in principle, trajectory-based numerical integration methods that also suffer scalability issues. Fortunately, we can sometimes avoid simulation altogether by utilizing the problem structure, such as regenerative/terminal actions (known as finite dependence \citep{arcidiacono2011conditional}). Still, this method requires explicit estimation of the transition function, which is not the case in our paper. This paper established an insight that there exists a one-to-one correspondence between the CCPs and the differences in $Q^*$-function values, which was formalized as the identification result by \cite{magnac2002identifying}. 

\cite{su2012constrained} propose that we can avoid dynamic programming or simulation by formulating a nested linear programming problem with Bellman equations as constraints of a linear program. This formulation is based on the observation that Bellman equations constitute a convex polyhedral constraint set. While this linear programming formulation significantly increases the computation speed, it is still not scalable in terms of state dimensions.

As the above methods suffer scalability issues, methods based on parametric/nonparametric approximation have been developed. Parametric policy iteration \citep{benitez2000comparison} and sieve value function iteration \citep{arcidiacono2013approximating} parametrize the value function by imposing a flexible functional form. \citet{kristensen2021solving} also proposed methods that combine smoothing of the Bellman operator with sieve-based approximations, targeting the more well-behaved integrated or expected value functions to improve computational performance. However, standard sieve methods that use tensor product basis functions, such as polynomials, can suffer from a computational curse of dimensionality, as the number of basis functions required for a given accuracy grows exponentially with the number of state variables. \cite{norets2012estimation} proposed that neural network-based function approximation reduces the computational burden of Markov Chain Monte Carlo (MCMC) estimation, thereby enhancing the efficiency and scalability. Also leveraging Bayesian MCMC techniques, \cite{imai2009bayesian} developed an algorithm that integrates the DP solution and estimation steps, reducing the computational cost associated with repeatedly solving the underlying dynamic programming problem to be comparable to static models, though it still requires transition probabilities for calculating expectations. \citet{arcidiacono2016estimation} formulates the problem in continuous time, where the sequential nature of state changes simplifies calculations. However, these continuous-time models still rely on specifying transition dynamics via intensity matrices. \cite{geng2020deep} proposed that the inversion principle of \cite{hotz1993conditional} enables us to avoid reward parameterization and directly (non-parametrically) estimate value functions, along with solving a much smaller number of soft-Bellman equations, which do not require reward parametrization to solve them. \cite{barzegary2022recursive} and \cite{geng2023data} independently proposed state aggregation/partition methods that significantly reduce the computational burden of running dynamic programming with the cost of optimality. While \cite{geng2023data} uses $k$-means clustering \citep{kodinariya2013review, sinaga2020unsupervised}, \citet{barzegary2022recursive} uses recursive partitioning (RP). As discussed earlier, combining approximation with dynamic programming induces unstable convergence except when the value function is linear in state \citep{jiangoffline}. 

\cite{adusumilli2019temporal} proposed how to adapt two popular temporal difference (TD)-based methods (an approximate dynamic programming-based method and a semi-gradient descent method based on \cite{tsitsiklis1996analysis}) for DDC. As discussed earlier, approximate dynamic programming-based methods are known to suffer from a lack of provable convergence beyond linear reward models \citep{jiangoffline, tsitsiklis1996feature, van2018deep,  wang2021instabilities}\footnote{In fact, \textit{``fitted fixed point methods can diverge even when all of the following hold: infinite data, perfect optimization with infinite computation, 1-dimensional function class (e.g., linear) with realizability (Assumption} \ref{ass:realizability}\textit{)}... \textit{(the instability is) not merely a theoretical construction: deep RL algorithms are known for their instability and training divergence...}'' \citep{jiangoffline}}; the semi-gradient method is a popular, efficient approximation method that has theoretical assurance of convergence to \textit{projected} squared Bellman error minimizers for linear reward/value functions \citep{sutton2018reinforcement}. \citet{feng2020global} showed global concavity of value function under certain transition functions and monotonicity of value functions in terms of one-dimensional state, both of which are easily satisfied for applications in social science problems. However, those conditions are limitedly satisfied for the problems with larger dimensional state space. 

\subsection{Offline inverse reinforcement learning literature} \; The most widely used inverse reinforcement learning model, Maximum-Entropy inverse reinforcement learning (MaxEnt-IRL), assumes that the random choice happens due to agents choosing the optimal policy after penalization of the policy by its Shannon entropy \citep{ermon2015learning}. In addition to the equivalence of MaxEnt-IRL to DDC (See \cite{ermon2015learning} and Web Appendix $\S$\ref{sec:DDCIRLequiv}), the identifiability condition for DDC \citep{magnac2002identifying} was rediscovered by \cite{cao2021identifiability} for MaxEnt-IRL. \cite{zeng2023understanding} proposes a two-step maximum likelihood-based method that can be considered as a conservative version of CCP method of \cite{hotz1993conditional}\footnote{When there is no uncertainty in the transition function, approximated trajectory gradient of Offline IRL method degenerates to forward simulation-based gradient in CCP estimator method of \cite{hotz1993conditional}.}.  
Despite that their method is proven to be convergent, it requires the explicit estimation of the transition function, and its global convergence was only proven for linear reward functions.

\cite{finn2016connection} showed that a myopic\footnote{See \cite{cao2021identifiability} for more discussion on this.} version of MaxEnt-IRL can be solved by the Generative Adversarial Network (GAN) training framework \citep{goodfellow2020generative}. \cite{fu2017learning} extended this framework to a non-myopic version, which is proven to recover the reward function and the value function (up to policy invariance) but only under deterministic transitions. Note that GAN approaches identify rewards only up to policy invariance \citep{ng1999policy}, which implies that counterfactual analysis is impossible. The GAN approach has been extended to $Q$-estimation methods that use fixed point iteration \citep{geng2020deep, geng2023data}. \cite{ni2021f} has shown that the idea of training an adversarial network can also be used to calculate the gradient direction for minimizing the myopic version of negative log likelihood\footnote{Minimizing negative log-likelihood is equivalent to minimizing KL divergence. See the Proof of Lemma \ref{lem:minMLE}.}. \cite{zhan2024model} proposed a GAN approach that is provably equivalent to negative log likelihood minimization objective without satisfying the Bellman equation (i.e., behavioral cloning \citep{torabi2018behavioral}). Indeed, GAN approaches are known to work well for Imitation Learning (IL) tasks \citep{zare2024survey}. However, as discussed earlier, solving for IL does not allow us to conduct a counterfactual simulation. 

A family of methods starting from \cite{ho2016generative} tries to address the inverse reinforcement learning problem from the perspective of occupancy matching, i.e., finding a policy that best matches the behavior of data. \cite{garg2021iq} proposed how to extend the occupancy matching approach of \cite{ho2016generative} to directly estimate $Q$-function instead of $r$. Given the assumption that the Bellman equation holds, this approach allows a simple gradient-based solution, as the occupancy matching objective function they maximize becomes concave. \cite{yue2023clare} modifies \cite{ho2016generative} to conservatively deal with the uncertainty of transition function. Recently, occupancy matching-based inverse reinforcement learning has been demonstrated at a planetary scale in Google Maps, delivering significant global routing improvements \cite{barnes2023massively}.  Despite their simplicity and scalability, one caveat of occupancy matching approaches is that the estimated $Q$ from solving the occupancy matching objective may not satisfy the Bellman equation (See Appendix \ref{sec:occupancy}).
This also implies that computing $r$ from $Q$ using the Bellman equation is not a valid approach.


\section{Problem set-up and backgrounds}\label{sec:SetupBackgrounds}

We consider a single-agent Markov Decision Process (MDP) defined as a tuple $\left(\mathcal{S}, \mathcal{A}, P, \nu_0, r, \beta\right)$ where $\mathcal{S}$ denotes the state space and $\mathcal{A}$ denotes a finite action space, $P \in \Delta_{\mathcal{S}}^{\mathcal{S} \times \mathcal{A}}$ is a Markovian transition kernel, $\nu_0 \in \Delta_{\mathcal{S}}$ is the initial state distribution over $\mathcal{S}$,  $r \in \mathbb{R}^{\mathcal{S} \times \mathcal{A}}$ is a deterministic reward function and $\beta \in(0,1)$ a discount factor.  
Given a stationary Markov policy $\pi \in \Delta_{\mathcal{A}}^{\mathcal{S}}$, an agent starts from initial state $s_0$ and takes an action $a_h\in \mathcal{A}$ at state $s_h\in \mathcal{S}$ according to $a_h\sim\pi\left(\cdot \mid s_h\right)$ at each period $h$. Given an initial state $s_0\sim \nu_0$, we define the distribution of state-action sequences for policy $\pi$ over the sample space $(\mathcal{S} \times \mathcal{A})^{\infty}=\left\{\left(s_0, a_0, s_1, a_1, \ldots\right): s_h \in \mathcal{S}, a_h \in \mathcal{A}, h \in \mathbb{N}\right\}$ as $\mathbb{P}_{\nu_0,\pi}$. 
We also use $\mathbb{E}_{\nu_0,\pi}$ to denote the expectation with respect to $\mathbb{P}_{\nu_0,\pi}$.

\subsection{Setup: Maximum Entropy-Inverse Reinforcement Learning (MaxEnt-IRL) 
} 
\label{sec:IRLIntro}

 Following existing literature \citep{geng2020deep, fu2017learning, ho2016generative}, we consider the \textit{entropy-regularized} optimal policy, which is defined as
\begin{equation}
\pi^*:=\operatorname{argmax}_{\pi \in \Delta_{\mathcal{A}}^{\mathcal{S}}}\mathbb{E}_\pi\bigl[\sum_{h=0}^{\infty} \beta^h \bigl(r(s_h, a_h)+\lambda\mathcal{H}(\pi(\cdot \mid s_h))\bigr)\bigr]
\end{equation}
where $\mathcal{H}$ denotes the Shannon entropy and $\lambda$ is the regularization coefficient. Throughout, we make the following assumption on the agent's decisions.

\begin{asmp}\label{ass:IRLoptimaldecision} When interacting with the MDP $\left(\mathcal{S}, \mathcal{A}, P, \nu_0, r, \beta\right)$, each agent follows the entropy-regularized optimal stationary policy $\pi^*$.
\end{asmp}

\noindent Throughout the paper, we use $\lambda = 1$, the setting which is equivalent to dynamic discrete choice (DDC) model with mean zero T1EV distribution (see Web Appendix $\S$\ref{sec:DDCIRLequiv} for details); all the results of this paper easily generalize to other values of $\lambda$. Given $\pi^*$, we define the \textit{value function} $V^*$ as:
$$
V^*(s) := \mathbb{E}_{\pi^*} \biggl[\sum_{h=0}^\infty \beta^h 
\bigl(r(s_h, a_h) + \mathcal{H}(\pi^*(\cdot \mid s_h))\bigr) \biggm| s_0 = s\biggr].
$$
Similarly, we define the $Q^*$ function as follows:
$$Q^*(s, a):=r\left(s, a\right)+\beta \cdot \mathbb{E}_{s^\prime \sim P(s,a)}\left[{V}^*\left(s^\prime\right)\mid s, a\right]$$
Given state $s$ and policy $\pi^*$, let $\mathbf{q} = [q_1 \ldots q_{|\mathcal{A}|}]$ denote the probability distribution over the action space $\mathcal{A}$, such that:
\begin{align}  
    q_a= \frac{\exp \left({Q^*(s, a)}\right)}{\sum_{a^\prime\in \mathcal{A}} \exp \left({Q^*(s, a^\prime)}\right)} \text{ for } a\in \mathcal{A} \notag
\end{align}
Then, according to Assumption \ref{ass:IRLoptimaldecision}, the value function $V^*$ must satisfy the recursive relationship defined by the \textit{Bellman equation} as follows:
\begin{align}
&V^*(s)=\max_{\mathbf{q} \in \triangle_\mathcal{A}}\{
\mathbb{E}_{a\sim\mathbf{q}}\bigl[r(s, a) +\beta \mathbb{E}_{s^\prime\sim P(s,a)}[V^*(s^\prime)\mid s, a]\bigr] +\mathcal{H}(\mathbf{q})\} \notag
\end{align}
Further, we can show that (see Web Appendix $\S$\ref{sec:IRLentropy}):
\begin{align}
    V^*(s)&=\ln \left[\sum_{a\in \mathcal{A}}\exp\left(Q^*(s, a)\right)\right] \notag
    \\
    \pi^*(a\mid s) &= \frac{\exp \left({Q^*(s, a)}\right)}{\sum_{a^\prime\in \mathcal{A}} \exp \left({Q^*(s, a^\prime)}\right)} \text{ for } a\in \mathcal{A} \notag
    \\
     Q^\ast(s,a)&=r(s, a)+\beta \cdot \mathbb{E}_{s^\prime \sim P(s, a)}\bigl[\log\sum_{a^\prime\in\mathcal{A}}\exp(Q^*(s^\prime,a^\prime)) \mid s, a\bigr]\label{eq:QBellman}
\end{align}
Throughout, we define a function $V_Q$ as 
\begin{align}
    V_Q(s)&:=\ln \left[\sum_{a\in \mathcal{A}}\exp\left(Q(s, a)\right)\right] \notag
\end{align}
where $ V_{Q^\ast} = V^\ast$.

\subsection{Setup: Dynamic Discrete Choice (DDC) model}
Following the literature \citep{rust1994structural, magnac2002identifying}, we assume that the reward the agent observes at state $s\in\mathcal{S}$ and $a\in\mathcal{A}$ can be expressed as $r(s,a) + \epsilon_a$, 
where $\epsilon_a\overset{i.i.d.}{\sim}  G(\delta, 1)$ is the random part of the reward, where $G$ is Type 1 Extreme Value (T1EV) distribution (i.e., Gumbel distribution)\footnote{This reward form is often referred to as additive and conditionally independent form.}. The mean of $G(\delta, 1)$ is $\delta + \gamma$, where $\gamma$ is the Euler constant. Throughout the paper, we use $\delta=-\gamma$, which makes $G$ a mean 0 distribution.\footnote{All the results of this paper easily generalize to other values of $\delta$.} Under this setup, we consider the optimal stationary policy and its corresponding value function defined as
$$\pi^\ast:=\operatorname{argmax}_{\pi \in \Delta_{\mathcal{A}}^{\mathcal{S}}}\mathbb{E}_{\nu_0,\pi,G}\left[\sum_{h=0}^{\infty} \beta^h( r\left(s_h, a_h\right)+\epsilon_{a_h})\right]$$
$$V^\ast(s):=\operatorname{max}_{\pi \in \Delta_{\mathcal{A}}^{\mathcal{S}}}\mathbb{E}_{\nu_0,\pi,G}\left[\sum_{h=0}^{\infty} \beta^h( r\left(s_h, a_h\right)+\epsilon_{a_h})\mid s_0=s\right]$$
Throughout, we make the following assumption on agent's decisions.

\begin{asmp}\label{ass:DDCoptimaldecision} When interacting with the MDP $\left(\mathcal{S}, \mathcal{A}, P, \nu_0, r, \beta\right)$, agent follows the optimal stationary policy $\pi^*$.
\end{asmp}
According to Assumption \ref{ass:DDCoptimaldecision}, the value function $V^\ast$ must satisfy the recursive relationship, often called the Bellman equation, as follows:
\begin{align}
V^\ast(s)&=\mathbb{E}_{\boldsymbol{\epsilon}}\left[\max _{a \in \mathcal{A}}\left\{r(s, a)+\epsilon_a+\beta \cdot \mathbb{E}\left[V^\ast\left(s^{\prime}\right) \mid s, a\right]\right\}\right] \notag
\\
&=\ln \left[\sum_{a\in \mathcal{A}}\exp\left(r\left(s, a\right)+\beta \cdot \mathbb{E}\left[V^\ast\left(s^\prime\right)\mid s, a\right]\right)\right]  \notag
\end{align}
where the second equality is from Lemma \ref{lem:GumbelMax}.
We further define the $Q^\ast$ function as
$$
Q^*(s, a):=r(s, a)+\beta \cdot \mathbb{E}_{s^{\prime} \sim P(s, a)}\left[V^*\left(s^{\prime}\right) \mid s, a\right]
$$
We can show that (see Web Appendix $\S$\ref{sec:SingleDDC}):
\begin{align}
    V^*(s)&=\ln \left[\sum_{a\in \mathcal{A}}\exp\left(Q^*(s, a)\right)\right] \notag
    \\
    \pi^*(a\mid s) &= \frac{\exp \left({Q^*(s, a)}\right)}{\sum_{a^\prime\in \mathcal{A}} \exp \left({Q^*(s, a^\prime)}\right)} \text{ for } a\in \mathcal{A} \notag
    \\
      Q^*(s,a) &=r(s, a)+\beta \cdot \mathbb{E}_{s^\prime \sim P(s, a)}    \bigl[\log\sum_{a^\prime\in\mathcal{A}}\exp(Q^*(s^\prime,a^\prime)) \mid s, a\bigr]\label{eq:QBellmanDDC}
\end{align}

\subsection{DDC -- MaxEnt-IRL Equivalence and unified problem statement}\label{sec:DDCIRLequiv}

The Bellman equations of MaxEnt-IRL with \(\lambda = 1\) (Equation \eqref{eq:QBellman}) and DDC with \(\delta = -\gamma\) (Equation \eqref{eq:QBellmanDDC}) are equivalent. Consequently, the optimal \(Q^*\) values obtained from solving these Bellman equations are the same for both MaxEnt-IRL and DDC. Furthermore, the optimal policy induced by \(Q^*\) is identical in both frameworks. Therefore, we can infer that solving one problem is equivalent to solving the other. Throughout, all the discussions we make for \(\lambda = 1\) in MaxEnt-IRL and \(\delta = -\gamma\) in DDC extend directly to any \(\lambda \neq 1\) and \(\delta \neq -\gamma\), respectively. This equivalence is a folk theorem that was first observed in \citep{ermon2015learning}.

In both settings, the goal is to recover the underlying reward function $r$ that explains an agent's demonstrated behavior. Given the equivalence between them, we can now formulate a \textit{unified problem statement} that encompasses both Offline Maximum Entropy Inverse Reinforcement Learning (Offline MaxEnt-IRL) and the Dynamic Discrete Choice (DDC) model estimation. 

To formalize this, consider a dataset consisting of state-action-next state sequences collected from an agent's behavior:  
$\mathcal{D}:=\left(\left(s_0, a_0, s_0^\prime\right), \left(s_1, a_1,s_1^\prime\right), \ldots,  \left(s_N, a_N,s_N^\prime\right)\right)$. 
Following Assumption \ref{ass:IRLoptimaldecision}, we assume that the data was generated by the agent playing the optimal policy $\pi^*$ when interacting with the MDP $\left(\mathcal{S}, \mathcal{A}, P, \nu_0, r, \beta\right)$.
\begin{defn}[The unified problem statement]\label{def:IRLproblem}

    The objective of offline MaxEnt-IRL and DDC can be defined as learning a function $\hat{r}\in\mathcal{R} \subseteq\mathbb{R}^{\bar{\mathcal{S}}\times \mathcal{A}}$ that minimizes the mean squared prediction error with respect to data distribution (i.e., expert policy's state-action distribution) from offline data $\mathcal{D}$ such that: 
\begin{align}
    \underset{\hat{r}\in \mathcal{R} \subseteq \mathbb{R}^{\bar{S} \times \mathcal{A}}}{\operatorname{argmin}} \; \mathbb{E}_{(s,a)\sim \nu_0, \pi^*}[(\hat{r}(s,a)-r(s,a))^2] \label{eq:rObjective}    
\end{align}
where 
$$\overline{\mathcal{S}}:=\left\{s \in \mathcal{S} \mid \mathbb{P}_{\nu_0, \pi}\left(\{s_h=s\}\right)>0 \text { for some } h \geq 0\right\}$$ defines the expert policy's coverage, which consists of states that are reachable with nonzero probability under the expert's optimal policy \(\pi^*\). \footnote{For every \(s \in \bar{\mathcal{S}}\), every action \(a \in \mathcal{A}\) occurs with probability strictly greater than zero, ensuring that the data sufficiently covers the relevant decision-making space.}  
\end{defn}
\noindent Restricting to \(\bar{\mathcal{S}}\) is essential, as the dataset \(\mathcal{D}\) only contains information about states visited under \(\pi^*\). Inferring rewards beyond this set would be ill-posed due to a lack of data, making \(\bar{\mathcal{S}}\) the natural domain for learning. Similarly, Computing MSE using the expert policy's state-action distribution is natural since the goal is to recover the reward function that explains the expert’s behavior. 
\\
\;
\\
\noindent\textbf{Remark (Counterfactual policy optimization without estimating $P$).}
Given logged interaction data \(\mathcal{D} = \{(s_i, a_i, s'_i)\}_{i=1}^N\) and a recovered reward function \(\hat{r}\), suppose the analyst wishes to evaluate the effect of a counterfactual intervention on the reward—e.g., modifying incentives or preferences—which induces a new estimated reward function \(\hat{r}_{\mathrm{cf}}(s, a)\), such as $
\hat{r}_{\mathrm{cf}}(s, a)=\hat{r}(s, a)+\Delta(s, a)
$, where $\Delta$ encodes the intervention. Using the augmented dataset
$$
\hat{\mathcal{D}}_{\mathrm{cf}}=\left\{\left(s_i, a_i, \hat{r}_{\mathrm{cf}}\left(s_i, a_i\right), s_i^{\prime}\right)\right\}_{i=1}^N,
$$
any standard modern offline-RL algorithm (e.g., Conservative Q-Learning \citep{kumar2020conservative}) can be applied to $\hat{\mathcal{D}}_{\mathrm{cf}}$ to obtain the counterfactual optimal policy \textbf{without requiring an explicit estimate of the transition function $P$}. These methods operate via empirical Bellman backups over observed transitions and are fully model-free.

In contrast, when the counterfactual intervention alters the $P$ itself, one must estimate $P$, model the intervention accordingly, and perform model-based planning (e.g., \citep{sutton2018reinforcement}).

\subsection{Identification}
 
As we defined in Definition \ref{def:IRLproblem}, our goal is to learn the agent's reward function $r(s, a)$ given offline data $\mathcal{D}$.
However, without additional assumptions on the reward structure, this problem is ill-defined because many reward functions can explain the optimal policy \citep{fu2017learning, ng1999policy}. To address this issue, following the DDC literature  \citep{rust1994structural, magnac2002identifying, hotz1993conditional} and recent IRL literature \citep{geng2020deep}, we assume that there is an \textit{anchor action} $a_s$ in each state $s$, such that the reward for each of state-anchor action combination is known.

\begin{asmp}\label{ass:anchor} For all $s\in\mathcal{S}$, there exists an action $a_s\in \mathcal{A}$ such that $r(s,a_s)$ is known.
\end{asmp} 

\noindent Note that the optimal policy remains the same irrespective of the choice of the anchor action $a_s$ and the reward value at the anchor action $r(s, a_s)$ (at any given $s$). As such, Assumption \ref{ass:anchor} only helps with identification and does not materially affect the estimation procedure. That is, for the sake of choosing an arbitrary reward function that is compatible with the optimal policy (i.e., the IRL objective), an arbitrary choice of $a_s$ and the $r(s,a_s)$ value is justified.

In Lemma \ref{thm:MagnacThesmar}, we formally establish that Assumptions \ref{ass:IRLoptimaldecision} and \ref{ass:anchor} uniquely identify $Q^\ast$ and $r$. See Web Appendix $\S$\ref{sec:PfMagnac} for the proof.

\begin{lem}[\cite{magnac2002identifying}]\label{thm:MagnacThesmar}

Given discount factor $\beta$, transition kernel $P\in \Delta_\mathcal{S}^{\mathcal{S}\times \mathcal{A}}$ and optimal policy $\pi^*\in\Delta_\mathcal{A}^{\mathcal{S}}$, under Assumptions \ref{ass:IRLoptimaldecision} and \ref{ass:anchor}, the solution to the following system of equations:
\begin{equation}
\left\{
\begin{array}{l}
    \dfrac{\exp({Q}\left(s,a\right))}{\sum_{a^\prime\in \mathcal{A}} \exp({Q}\left(s,a^\prime\right))} = \pi^*(\;a
    \mid s) \;\;\; \forall s\in \bar{\mathcal{S}}, a\in\mathcal{A}
    \\[1em]
    r(s, a_s)+\beta \cdot \mathbb{E}_{s^{\prime} \sim P(s, a_s)}\left[V_Q(s^\prime) \mid s, a_s\right]  = Q(s, a_s)          \;\; \; \forall s\in \bar{\mathcal{S}} \notag
\end{array}
\right.
\label{eq:HotzMillereqs}
\end{equation}
identifies $Q^*$ up to $s \in \overline{\mathcal{S}}, a \in \mathcal{A}$. Furthermore, $r$ is obtained up to $\forall s \in \overline{\mathcal{S}}, a \in \mathcal{A}$ by solving:
\begin{align}
    &r(s,a) = Q^\ast(s, a) - \beta \cdot \mathbb{E}_{s^{\prime} \sim P(s, a)}\bigl[V_{Q^\ast}(s^\prime) \mid s, a\bigr]. \label{eq:rbyBellman}
\end{align}
for all $s \in \overline{\mathcal{S}}, a \in \mathcal{A}$.
\end{lem}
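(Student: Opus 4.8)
The plan is to exploit the complementary roles of the two equations: the softmax (first) equation pins down the \emph{differences} of $Q(s,\cdot)$ across actions, leaving only a single state-dependent level undetermined at each state, while the anchor (second) equation fixes that level through a contraction argument. Uniqueness of the whole system then reduces to uniqueness of a fixed point.

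First I would rewrite the softmax equation in log form. Taking logarithms gives $\log \pi^*(a\mid s) = Q(s,a) - V_Q(s)$, since $V_Q(s)=\log\sum_{a'}\exp(Q(s,a'))$ is exactly the log-normalizer. Hence any $Q$ satisfying the first equation must take the form
$$Q(s,a) = V_Q(s) + \log \pi^*(a\mid s), \qquad s\in\bar{\mathcal S},\ a\in\mathcal A,$$
so the only remaining freedom is the scalar level $V_Q(s)$ at each state. Because $\pi^*$ is a softmax policy, every action has strictly positive probability, so every successor state reachable under any action from a state in $\bar{\mathcal S}$ again lies in $\bar{\mathcal S}$; this keeps all objects below well defined on $\bar{\mathcal S}$.

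Next I would substitute $Q(s,a_s)=V_Q(s)+\log\pi^*(a_s\mid s)$ into the anchor equation. Rearranging yields a self-contained recursion for the level function $V_Q:\bar{\mathcal S}\to\mathbb R$,
$$V_Q(s) = \big(r(s,a_s) - \log\pi^*(a_s\mid s)\big) + \beta\, \mathbb E_{s'\sim P(s,a_s)}\!\big[V_Q(s')\mid s,a_s\big].$$
Define the operator $(Tf)(s):= r(s,a_s)-\log\pi^*(a_s\mid s)+\beta\,\mathbb E_{s'\sim P(s,a_s)}[f(s')\mid s,a_s]$ on the space of bounded functions over $\bar{\mathcal S}$. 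This $T$ is affine, and its linear part is $\beta$ times a Markov expectation operator, so $\|Tf-Tg\|_\infty \le \beta\,\|f-g\|_\infty$ with $\beta\in(0,1)$. By the Banach fixed-point theorem $T$ has a unique fixed point, which uniquely determines $V_Q$, and therefore uniquely determines $Q$ through the display above.

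Finally I would verify that $Q^*$ itself solves the system: it satisfies the softmax relation by the policy characterization in the setup, and it satisfies the anchor equation because $r(s,a_s)$ is the known reward of Assumption \ref{ass:anchor} and $V_{Q^*}=V^*$ by definition. Uniqueness then forces the solution to equal $Q^*$ on $\bar{\mathcal S}\times\mathcal A$. Recovering $r$ is then immediate: plugging the identified $Q^*$ into the Bellman relation \eqref{eq:QBellman} gives exactly Equation \eqref{eq:rbyBellman} for every $s\in\bar{\mathcal S},\,a\in\mathcal A$. The main obstacle is the level-identification step, namely recognizing that the anchor equation collapses into a genuine contraction on $\bar{\mathcal S}$; this requires checking both that $T$ maps bounded functions on $\bar{\mathcal S}$ to themselves (successor states stay in $\bar{\mathcal S}$) and that the sup-norm contraction constant is exactly $\beta$.
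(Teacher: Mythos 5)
Your proof is correct and takes essentially the same route as the paper's: the softmax equation pins down the action differences of $Q(s,\cdot)$, leaving one scalar level per state, and the anchor equation then becomes a $\beta$-contraction on bounded functions over $\bar{\mathcal{S}}$ whose unique Banach fixed point must coincide with $Q^*$, after which $r$ is read off from the Bellman equation. The only cosmetic difference is that you parametrize the free level by $V_Q(s)$ while the paper uses $f_Q(s)=Q(s,a_s)$; since these differ by the known shift $\log \pi^*(a_s\mid s)$, the two contraction operators are identical up to a change of variables (and your explicit check that successor states remain in $\bar{\mathcal{S}}$ is, if anything, slightly more careful than the paper's).
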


In the first part of the theorem, we show that, after constraining the reward functions for anchor actions, we can recover the unique $Q^\ast$-function for the optimal policy from the observed choices and the Bellman equation for the anchor-action (written in terms of log-sum-exp of $Q$-values). The second step follows naturally, where we can show that reward functions are then uniquely recovered from $Q^*$-functions using the Bellman equation.

\subsection{Bellman error and Temporal difference (TD) error}\label{sec:BE&TD}
There are two key concepts used for describing a gradient-based algorithm for IRL/DDC: the Bellman error and the Temporal Difference (TD) error. In this section, we define each of them and discuss their relationship. We start by  defining $\mathcal{Q}=\left\{Q: \mathcal{S} \times \mathcal{A} \rightarrow \mathbb{R} \mid\|Q\|_{\infty}<\infty\right\}$. By \citet{rust1994structural}, $\beta<1$ (discount factor less than one) implies $Q^* \in \mathcal{Q}$. Next, we define the \textit{Bellman operator} as $\mathcal{T}: \mathcal{Q} \mapsto \mathcal{Q}$ as follows:
$$\mathcal{T}Q(s, a) \notag := r(s, a) + \beta \cdot \mathbb{E}_{s' \sim P(s, a)} \bigl[ V_Q(s') \bigr]
$$
According to the Bellman equation shown in Equation \eqref{eq:QBellman}, $Q^*$ satisfies $\mathcal{T}Q^*(s, a) - Q^*(s, a)=0$; in fact, $Q^*$ is the unique solution to $\mathcal{T}Q(s, a) - Q(s, a)=0$; see \citep{rust1994structural}. 
Based on this observation, we define the following notions of error. 

\begin{defn}\label{def:BE}
    We define the \textit{Bellman error} for $Q\in\mathcal{Q}$ at $(s,a)$ as $\mathcal{T}Q(s, a) - Q(s, a)$. Furthermore, we define the \textit{Squared Bellman error} and the \textit{Expected squared Bellman error}  as $$\mathcal{L}_{\text{BE}}(Q)(s, a) := \left( \mathcal{T}Q(s, a) - Q(s, a) \right)^2$$
$$
\overline{
\mathcal{L}_{\text{BE}}}(Q) = \mathbb{E}_{(s, a) \sim \pi^*,\, \nu_0} \left[ \mathcal{L}_{\text{BE}}(Q)(s, a) \right]
$$
\end{defn}

\noindent In practice, we don't have direct access to $\mathcal{T}$ unless we know (or have a consistent estimate of) the transition kernel $P\in\Delta_\mathcal{S}^{\mathcal{S}\times\mathcal{A}}$. Instead, we can compute an empirical \textit{Sampled Bellman operator} $\hat{\mathcal{T}}$, defined as 
$$
\hat{\mathcal{T}}Q(s, a, s') = r(s, a) + \beta \cdot V_Q(s^\prime).
$$
\begin{defn}
    We define \textit{Temporal-Difference (TD) error} for $Q$ at the transition $(s, a, s^\prime)$, \textit{Squared TD error}, and \textit{Expected squared TD error} as follows:
\begin{align}
    &\delta_Q(s, a, s'):=\hat{\mathcal{T}}Q(s, a, s') - Q(s, a) \notag
    \\
 &\mathcal{L}_{\text{TD}}(Q)(s,a,s^\prime):= \left(\hat{\mathcal{T}}Q(s, a, s') - Q(s, a)\right)^2 \notag
 \\
 &\overline{
\mathcal{L}_{\text{TD}}}(Q) := \mathbb{E}_{(s, a) \sim \pi^*,\, \nu_0} \left[ \mathbb{E}_{s' \sim P(s, a)} \left[ \mathcal{L}_{\text{TD}}(Q)(s, a, s') \right] \right] \notag
\end{align}
\end{defn}

\noindent Lemma \ref{lem:expTD=BE} states the relationship between the TD error terms and Bellman error terms.
\begin{lem}[Expectation of TD error is equivalent to BE error]
\label{lem:expTD=BE}
    $$ \mathbb{E}_{s' \sim P(s, a)} \left[ \hat{\mathcal{T}}Q(s, a, s') \right] = \mathcal{T}Q(s, a) 
    $$
    $$
    \mathbb{E}_{s' \sim P(s, a)} \left[ \delta_Q(s, a, s')\right] =\mathcal{T}Q(s, a) - Q(s, a). $$
\end{lem}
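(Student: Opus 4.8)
The plan is to prove both identities directly from the definitions of the sampled Bellman operator $\hat{\mathcal{T}}$ and the (population) Bellman operator $\mathcal{T}$, using only linearity of the conditional expectation $\mathbb{E}_{s' \sim P(s,a)}[\cdot]$. The key structural observation is that, once the state-action pair $(s,a)$ is fixed, the only quantity in $\hat{\mathcal{T}}Q(s,a,s')$ that depends on the randomness $s'$ is the term $\beta \, V_Q(s')$; both $r(s,a)$ and $Q(s,a)$ are constants with respect to the distribution $P(s,a)$. This reduces the entire argument to pushing the expectation through an affine function.

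For the first identity, I would start from the definition $\hat{\mathcal{T}}Q(s,a,s') = r(s,a) + \beta \, V_Q(s')$, take the expectation $\mathbb{E}_{s' \sim P(s,a)}[\cdot]$ of both sides, and apply linearity to pull the constant $r(s,a)$ outside the expectation:
\begin{align}
\mathbb{E}_{s' \sim P(s,a)}\bigl[\hat{\mathcal{T}}Q(s,a,s')\bigr]
= r(s,a) + \beta \cdot \mathbb{E}_{s' \sim P(s,a)}\bigl[V_Q(s')\bigr]. \notag
\end{align}
The right-hand side is, by the definition of the Bellman operator, exactly $\mathcal{T}Q(s,a)$, which establishes the first claim.

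For the second identity, I would substitute the definition $\delta_Q(s,a,s') = \hat{\mathcal{T}}Q(s,a,s') - Q(s,a)$ and again use linearity, noting that $Q(s,a)$ does not depend on $s'$ and so passes through the expectation unchanged:
\begin{align}
\mathbb{E}_{s' \sim P(s,a)}\bigl[\delta_Q(s,a,s')\bigr]
= \mathbb{E}_{s' \sim P(s,a)}\bigl[\hat{\mathcal{T}}Q(s,a,s')\bigr] - Q(s,a)
= \mathcal{T}Q(s,a) - Q(s,a), \notag
\end{align}
where the last step invokes the first identity just established. This completes the proof.

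There is no genuine obstacle here: the statement is essentially a bookkeeping lemma whose entire content is that $\hat{\mathcal{T}}$ is an unbiased single-sample estimator of $\mathcal{T}$. The only point requiring any care is keeping track of which terms are measurable with respect to $(s,a)$ versus random in $s'$, so that linearity is applied correctly. The lemma's value lies less in its difficulty and more in its role downstream, since it justifies replacing the inaccessible expected squared Bellman error with sample-based TD quantities (modulo the variance term that distinguishes $\overline{\mathcal{L}_{\text{BE}}}$ from $\overline{\mathcal{L}_{\text{TD}}}$).
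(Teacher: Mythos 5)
Your proof is correct, and it is exactly the argument the paper itself relies on: the paper states Lemma \ref{lem:expTD=BE} without a separate proof precisely because it follows immediately from the definitions of $\hat{\mathcal{T}}$, $\mathcal{T}$, and $\delta_Q$ together with linearity of the conditional expectation, which is what you spell out. Nothing is missing; your care in noting that $r(s,a)$ and $Q(s,a)$ are constant given $(s,a)$ is the only substantive point, and you handle it correctly.
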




\section{ERM-IRL (ERM-DDC) framework}\label{sec:ERM-IRL}
\subsection{Identification via expected risk minimization}
Given Lemma \ref{thm:MagnacThesmar}, we would like to find the unique $\hat{Q}$ that
satisfies  
\begin{equation}
\left\{
\begin{array}{l}
    \dfrac{\exp({\hat{Q}}\left(s,a\right))}{\sum_{a^\prime\in \mathcal{A}} \exp({\hat{Q}}\left(s,a^\prime\right))} = \pi^*(a
    \mid s) \; \; \; \forall s\in \bar{\mathcal{S}}, a \in \mathcal{A}
    \\[1em]
    r(s, a_s)+\beta \cdot \mathbb{E}_{s^{\prime} \sim P(s, a_s)}\left[\log(\sum_{a^\prime\in\mathcal{A}}\exp \hat{Q}(s^\prime, a^\prime)) \mid s, a_s\right]-\hat{Q}(s, a_s)=0 \;\;\; \forall s\in \bar{\mathcal{S}}
    
\end{array}\tag{Equation \eqref{eq:HotzMillereqs}}
\right. 
\end{equation}
where $\bar{\mathcal{S}}$ (the reachable states from $\nu_0$, $\pi^\ast$) was defined as:
$$
\bar{\mathcal{S}}=\left\{s \in \mathcal{S} \mid \operatorname{Pr}\left(s_t=s \mid s_0 \sim \nu_0, \pi^*\right)>0 \text { for some } t \geq 0\right\} 
$$
Now note that:
  \begin{align}
     &\left\{Q \in \mathcal{Q} \mid\hat{p}_{Q}(\;\cdot
    \mid s) = \pi^*(\;\cdot
    \mid s)\quad  \forall s\in\bar{\mathcal{S}}\quad\text{a.e.}\right\} \notag
    \\
    &=\underset{Q\in \mathcal{Q}}{\arg\max } \; \;\mathbb{E}_{(s, a)\sim \pi^*, \nu_0}  \left[\log\left(\hat{p}_{Q}(\;\cdot
    \mid s)\right)\right] \tag{$\because$ Lemma \ref{lem:minMLE}}
    \\
    &=\underset{Q\in \mathcal{Q}}{\arg\min } \; \;\mathbb{E}_{(s, a)\sim \pi^*, \nu_0}  \left[-\log\left(\hat{p}_{Q}(\;\cdot
    \mid s)\right)\right] \notag
    \end{align}
and 
    \begin{align}
     &\left\{Q \in \mathcal{Q} \mid\mathcal{L}_{BE}(Q)(s,a_s) = 0\quad  \forall s\in\bar{\mathcal{S}}\right\} \notag
    \\
    &=\underset{Q\in \mathcal{Q}}{\arg\min } \; \;\mathbb{E}_{(s, a)\sim \pi^*, \nu_0}  \left[\mathbf{1}_{a = a_s} \mathcal{L}_{BE}(Q)(s,a)\right] \notag
    \end{align}
Therefore equations \eqref{eq:HotzMillereqs} becomes 

\begin{equation}
\left\{
\begin{array}{l}
    \hat{Q} \in \underset{Q\in \mathcal{Q}}{\arg\min } \; \;\mathbb{E}_{(s, a)\sim \pi^*, \nu_0}  \left[-\log\left(\hat{p}_{Q}(\;\cdot
    \mid s)\right)\right] 
    \\[1em]
     \hat{Q} \in \underset{Q\in \mathcal{Q}}{\arg\min } \; \;\mathbb{E}_{(s, a)\sim \pi^*, \nu_0}  \left[\mathbf{1}_{a = a_s} \mathcal{L}_{BE}(Q)(s,a)\right]
    
\end{array}\label{eq:modifiedHotz}
\right. 
\end{equation}
We now propose a one-shot Empirical Risk Minimization framework (ERM-IRL/ERM-DDC) to solve the Equation \eqref{eq:modifiedHotz}. First, we recast the IRL problem as the following \textit{expected risk} minimization problem under an infinite data regime. 
\begin{defn}[Expected risk minimization problem] The expected risk minimization problem is defined as the problem of finding $Q$ that minimizes  the expected risk $\mathcal{R}_{exp}(Q)$, which is defined as
\begin{align}
  &\mathcal{R}_{exp}(Q):= \mathbb{E}_{(s, a)\sim \pi^*, \nu_0}  \left[\mathcal{L}_{NLL}(Q)(s,a) + \mathbf{1}_{a = a_s} \mathcal{L}_{BE}(Q)(s,a)\right] \! \notag \noindent
  \\
  & = \mathbb{E}_{(s, a)\sim \pi^*, \nu_0}\bigl[-\log\left(\hat{p}_{Q}(a
\mid s)\right) +  \mathbf{1}_{a = a_s} \left( \mathcal{T}Q(s, a) - Q(s, a) \right)^2 \bigr] \label{eq:mainopt}
\end{align}
\noindent where $a_s$ is defined in Assumption \ref{ass:anchor}.
\end{defn}

\noindent\textbf{Remark.} The joint minimization of the Negative Log Likelihood (NLL) term and Bellman Error (BE) term is the key novelty in our approach. Prior work on the IRL and DDC literature \citep{hotz1993conditional, zeng2023understanding} typically minimizes the log-likelihood of the observed choice probabilities (the NLL term), given observed or estimated state transition probabilities. The standard solution is to first estimate/assume state transition probabilities, then obtain estimates of future value functions, plug them into the choice probability, and then minimize the NLL term. In contrast, our recast problem avoids the estimation of state-transition probabilities and instead jointly minimizes the NLL term along with the Bellman error term. This is particularly helpful in large state spaces since the estimation of state-transition probabilities can be infeasible/costly in such settings. In Theorem \ref{thm:mainopt}, we show that the solution to our recast problem in Equation \eqref{eq:mainopt} identifies the reward function. 

\begin{thm}[Identification via expected risk minimization]
\label{thm:mainopt} 
\;
\\
The solution to the expected risk minimization problem (Equation \eqref{eq:mainopt})
uniquely identifies $Q^\ast$ up to $s\in\bar{\mathcal{S}}
$ and $a \in \mathcal{A}$, i.e., finds $\widehat{Q}$ that satisfies $\widehat{Q}(s,a)=Q^\ast(s,a)$ for  $s\in\bar{\mathcal{S}}
$ and $a \in \mathcal{A}$. Furthermore, we can uniquely identify $r$ up to $s\in\bar{\mathcal{S}}$ and $a \in \mathcal{A}$ by $r(s, a)= \widehat{Q}(s, a) -  \beta \cdot \mathbb{E}_{s^{\prime} \sim P(s, a)} 
    \bigl[ V_{\widehat{Q}} \bigr]$.
\end{thm}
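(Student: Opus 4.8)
The plan is to exploit the additive structure of $\mathcal{R}_{exp}$ together with the Magnac--Thesmar identification result (Lemma \ref{thm:MagnacThesmar}). Write $\mathcal{R}_{exp}(Q) = \mathrm{NLL}(Q) + \mathrm{BE}(Q)$, where $\mathrm{NLL}(Q) = \mathbb{E}_{(s,a)\sim\pi^*,\nu_0}[-\log \hat{p}_Q(a\mid s)]$ and $\mathrm{BE}(Q) = \mathbb{E}_{(s,a)\sim\pi^*,\nu_0}[\mathbbm{1}_{a=a_s}\mathcal{L}_{BE}(Q)(s,a)]$. First I would bound each piece separately. For the NLL term, the inner expectation at each $s$ is the cross-entropy between $\pi^*(\cdot\mid s)$ and $\hat{p}_Q(\cdot\mid s)$; by Lemma \ref{lem:minMLE} (Gibbs' inequality) it is bounded below by the constant $C_0 := \mathbb{E}_{s}[\mathcal{H}(\pi^*(\cdot\mid s))]$, with equality exactly when $\hat{p}_Q(\cdot\mid s) = \pi^*(\cdot\mid s)$ for a.e.\ $s\in\bar{\mathcal{S}}$. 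For the BE term the integrand is a nonnegative squared quantity, so $\mathrm{BE}(Q)\geq 0$, with equality exactly when $\mathcal{L}_{BE}(Q)(s,a_s)=0$ for a.e.\ $s\in\bar{\mathcal{S}}$; here I would invoke the positive-coverage footnote to Definition \ref{def:IRLproblem}, namely that the anchor action $a_s$ is played with strictly positive probability at every reachable state, so vanishing of the expectation forces pointwise vanishing a.e. Summing, $\mathcal{R}_{exp}(Q)\geq C_0$ for every $Q$.

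The crux is then a simultaneous-attainment argument. I would show that $Q^*$ drives both pieces to their individual floors at once: $\hat{p}_{Q^*}(\cdot\mid s) = \pi^*(\cdot\mid s)$ by the softmax characterization of $\pi^*$, giving $\mathrm{NLL}(Q^*) = C_0$, and $\mathcal{T}Q^*(s,a) = Q^*(s,a)$ by the Bellman equation \eqref{eq:QBellman}, giving $\mathrm{BE}(Q^*)=0$. Hence $\mathcal{R}_{exp}(Q^*) = C_0$ meets the lower bound, so $Q^*$ is a global minimizer. For uniqueness, any minimizer $\widehat{Q}$ must satisfy $\mathcal{R}_{exp}(\widehat{Q})=C_0$; because $\mathrm{NLL}\geq C_0$ and $\mathrm{BE}\geq 0$ hold termwise and their floors sum to exactly $C_0$, both inequalities must be tight at $\widehat{Q}$. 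This is the single nontrivial logical step: separability alone does not guarantee that a minimizer of a sum minimizes each summand, but it does here precisely because the two floors are jointly achievable, witnessed by $Q^*$. Tightness yields exactly the two conditions of the system \eqref{eq:HotzMillereqs}, namely $\hat{p}_{\widehat{Q}}(\cdot\mid s)=\pi^*(\cdot\mid s)$ and $\mathcal{L}_{BE}(\widehat{Q})(s,a_s)=0$ on $\bar{\mathcal{S}}$.

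Applying Lemma \ref{thm:MagnacThesmar} to these two conditions then forces $\widehat{Q}(s,a)=Q^*(s,a)$ for all $s\in\bar{\mathcal{S}}$, $a\in\mathcal{A}$, which is the first claim. For the second claim I would invoke the reward-recovery identity \eqref{eq:rbyBellman}, $r(s,a) = \widehat{Q}(s,a) - \beta\,\mathbb{E}_{s'\sim P(s,a)}[V_{\widehat{Q}}(s')]$. The only care needed is that the expectation ranges over next states $s'\sim P(s,a)$ with $s\in\bar{\mathcal{S}}$; by the definition of reachability these $s'$ again lie in $\bar{\mathcal{S}}$, so $V_{\widehat{Q}}(s')=V_{Q^*}(s')$ there and the formula returns the true $r(s,a)$ on all of $\bar{\mathcal{S}}\times\mathcal{A}$.

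I expect the \emph{main obstacle} to be the simultaneous-attainment / termwise-tightness step of the second paragraph, since it is the one place where one must argue more than routine monotonicity of the objective. Everything else is bookkeeping: converting the integrated optimality conditions into the pointwise conditions of \eqref{eq:HotzMillereqs} via positive coverage, and then chaining Lemma \ref{lem:minMLE}, Lemma \ref{thm:MagnacThesmar}, and the reachability closure of $\bar{\mathcal{S}}$ under $P$.
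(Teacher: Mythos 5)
Your proposal is correct and follows essentially the same route as the paper: your ``simultaneous-attainment / termwise-tightness'' step is exactly the content of the paper's Lemma \ref{lem:sharingsol} (a minimizer of a sum minimizes each summand whenever the summands' minimizer sets intersect, here witnessed by $Q^*$), and the rest of your argument chains Lemma \ref{lem:minMLE} and Lemma \ref{thm:MagnacThesmar} just as the paper does. Your added care about positive coverage of anchor actions and the one-step reachability closure of $\bar{\mathcal{S}}$ for the reward-recovery formula is a slightly more explicit treatment of points the paper leaves implicit, but it does not change the structure of the proof.
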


\begin{proof}[Proof of Theorem \ref{thm:mainopt}]
Define $\widehat{Q}$ as the solution to the expected risk minimization problem
\begin{align}
    \widehat{Q} &\in \underset{Q\in \mathcal{Q}}{\arg\min } \; \;\mathbb{E}_{(s, a)\sim \pi^*, \nu_0}  \left[-\log\left(\hat{p}_{Q}(a
\mid s)\right)\right] + \mathbb{E}_{(s, a)\sim \pi^*, \nu_0}\left[ \mathbf{1}_{a = a_s} \mathcal{L}_{BE}(Q)(s,a)\right] \tag{Equation \ref{eq:mainopt}}.
\end{align}
Note that Equation \eqref{eq:mainopt} minimizes the sum of two terms that are jointly minimized in Equation \eqref{eq:modifiedHotz}. Since the solution set to Equation \eqref{eq:modifiedHotz} is nonempty by Lemma \ref{thm:MagnacThesmar}, by Lemma \ref{lem:sharingsol}, $\widehat{Q}$ minimizes Equation \eqref{eq:modifiedHotz}. Again, by Lemma \ref{thm:MagnacThesmar}, this implies that $\widehat{Q}$ is equivalent to $Q^\ast$.
\end{proof}
\noindent Essentially, Theorem \ref{thm:mainopt} ensures that solving Equation \eqref{eq:mainopt} gives the exact $r$ and $Q^\ast$ up to 
$\bar{\mathcal{S}}$ and thus provides the solution to the DDC/IRL problem defined in Definition \ref{def:IRLproblem}. 

\subsubsection*{Remark. Comparison with Imitation Learning} 
Having established the identification guarantees for the ERM-IRL/DDC framework, it is natural to compare this formulation to the identification properties of Imitation Learning (IL) \citep{torabi2018behavioral, rajaraman2020toward, foster2024behavior}. Unlike IRL, which seeks to infer the underlying reward function that explains expert behavior, IL directly aims to recover the expert policy without modeling the transition dynamics. The objective of imitation learning is often defined as finding a policy $\hat{p}$ with 
$\min _{\hat{p}} \mathbb{E}_{(s, a) \sim \pi^\ast, \nu_0}\left[\ell\left(\hat{p}(a \mid s), \pi^\ast(a \mid s)\right)\right]$, $\ell$ is the cross-entropy loss or equivalently, 
\begin{align}
\min _{\hat{p}} \mathbb{E}_{(s, a) \sim \pi^\ast , \nu_0}\left[-\log \hat{p}(a|s)\right]\label{eq:mleEqBC}    
\end{align}
Equation \eqref{eq:mleEqBC} is exactly what a typical Behavioral Cloning (BC) \citep{torabi2018behavioral} minimizes under entropy regularization, as the objective of BC is
\begin{align}
  & \!\!\underset{Q\in \mathcal{Q}}{\min }  \;\mathbb{E}_{(s, a)\sim \pi^*, \nu_0}  \left[-\log \hat{p}_Q(a|s)\right] \label{eq:BC}
\end{align}

\noindent where $\hat{p}_Q(a\mid s) = \frac{Q(s,a)}{\sum_{\tilde{a}\in\mathcal{A} }Q(s,\tilde{a})}$. Note that the solution set of Equation \eqref{eq:BC} fully contains the solution set of Equation \eqref{eq:modifiedHotz}, which identifies the $Q^\ast$ for offline IRL/DDC. This means that any solution to the offline IRL/DDC problem also minimizes the imitation learning objective, but not necessarily vice versa. Consequently, under entropy regularization, the IL objective is fundamentally easier to solve than the offline IRL/DDC problem, as it only requires minimizing the negative log-likelihood term without enforcing Bellman consistency. One of the key contributions of this paper is to formally establish and clarify this distinction: IL operates within a strictly simpler optimization landscape than the offline IRL/DDC, making it a computationally and statistically more tractable problem. This distinction further underscores the advantage of Behavioral Cloning (BC) over ERM-IRL/DDC for imitation learning (IL) tasks-—since BC does not require modeling transition dynamics or solving an optimization problem involving the Bellman residual, it benefits from significantly lower computational and statistical complexity, making it a more efficient approach for IL.

While behavioral cloning (BC) is often sufficient for IL \citep{foster2024behavior}, i.e., reproducing the expert’s actions under the \emph{same} dynamics and incentive structure, it fundamentally lacks the ingredients needed for \emph{counterfactual} reasoning. Because BC learns a direct mapping $s \mapsto a$ without ever inferring the latent reward $r(s,a)$, it has no principled way to predict what an expert \emph{would} do if (i) the transition kernel $P$ were perturbed (e.g., a new recommendation algorithm) or (ii) the reward landscape itself were altered (e.g., a firm introduces monetary incentives). In such scenarios the state–action pairs generated by the expert no longer follow the original occupancy measure, so a cloned policy is forced to extrapolate outside its training distribution—precisely where imitation learning is known to fail.

Recovering the reward resolves this limitation. Once we have a consistent estimate $\hat r$ (or, equivalently, $\hat Q^{\ast}$), we can \emph{decouple} policy evaluation from the historical data: any hypothetical change to $P$ or the reward can be encoded and handed to a standard offline RL, which recomputes the optimal policy for the \emph{new} MDP without further demonstrations. In other words, rewards serve as a portable, mechanism‑level summary of preferences that supports robust counterfactual simulation, policy optimization, and welfare analysis—capabilities that pure imitation methods cannot provide.

\subsection{Estimation via minimax-formulated empirical risk minimization}
\label{sec:DoubleSampling}
While the idea of expected risk minimization -- minimizing Equation \eqref{eq:mainopt} -- is straightforward, empirically approximating $\mathcal{L}_{B E}(Q)(s, a) = (\mathcal{T} Q(s, a)-Q(s, a))^2$ and its gradient is quite challenging. 
As discussed in Section \ref{sec:BE&TD}, $\mathcal{T}Q$ is not available unless we know the transition function. As a result, we have to rely on an estimate of $\mathcal{T}$. A natural choice, common in TD-methods, is $\hat{\mathcal{T}} Q\left(s, a, s^{\prime}\right)=r(s, a)+\beta \cdot V_Q(s^\prime)$ which is computable given $Q$ and data $\mathcal{D}$. Thus, a natural proxy objective to minimize is:
\begin{align}
    &\mathbb{E}_{s' \sim P(s, a)} [\mathcal{L}_{\mathrm{TD}}(Q)\left(s, a, s^{\prime}\right)] :=\mathbb{E}_{s' \sim P(s, a)} [(\hat{\mathcal{T}} Q\left(s, a, s^{\prime}\right)-Q(s, a))^2] \notag
\end{align}
Temporal Difference (TD) methods typically use stochastic approximation to obtain an estimate of this proxy objective \citep{tesauro1995temporal, adusumilli2019temporal}. However, the issue with TD methods is that minimizing the proxy objective will not minimize the Bellman error in general (see Web Appendix $\S$\ref{sec:BiconjProofs} for details), because of the extra variance term, as shown below. 
\begin{align}
&\mathbb{E}_{s' \sim P(s, a)} 
\bigl[\mathcal{L}_{TD}(Q)(s, a, s^\prime)\bigr] = \mathcal{L}_{BE}(Q)(s, a) + \mathbb{E}_{s' \sim P(s, a)} 
\bigl[(\mathcal{T}Q(s, a) - \hat{\mathcal{T}}Q(s, a, s^\prime))^2\bigr]    \notag
\end{align}
As defined, $\hat{\mathcal{T}}$ is a one-step estimator, and the second term in the above equation does not vanish even in infinite data regimes. So, simply using the TD approach to approximate squared Bellman error provides a biased estimate. Intuitively, this problem happens because expectation and square are not exchangeable, i.e., 
$\mathbb{E}_{ s^\prime \sim P(s, a)}\left[\delta_{Q}\left(s,a, s^\prime\right)\mid s, a\right]^2 \neq \mathbb{E}_{ s^\prime \sim P(s, a)}\left[\delta_{Q}\left(s,a, s^\prime\right)^2\mid s, a\right]$. This bias in the TD approach is a well-known problem called the \textit{double sampling problem} \citep{ munos2003error, lagoudakis2003least,sutton2018reinforcement, jiangoffline}.  To remove this problematic square term, we employ an approach often referred to as the ``Bi-Conjugate Trick'' \citep{antos2008learning, dai2018sbeed, patterson2022generalized} which replaces a square function with a linear function called the bi-conjugate:
\begin{align}
     \mathcal{L}_{BE}(s,a)(Q)&:=\mathbb{E}_{ s^\prime \sim P(s, a)}\left[\delta_{Q}\left(s,a, s^\prime\right)\mid s, a\right]^2\notag
     \\
     &=\max_{h\in \mathbb{R}}2\cdot\mathbb{E}_{ s^\prime \sim P(s, a)}\left[\delta_{Q}\left(s,a, s^\prime\right)\mid s, a\right]\cdot h-h^2 \notag
\end{align}
\noindent By further re-parametrizing $h$ using $\zeta = h - r(s,a) + Q(s,a)$, after some algebra, we arrive at Lemma \ref{lem:OurBiconj}. See Web Appendix $\S$\ref{sec:BiconjProofs} for the detailed derivation.

\begin{lem}
\label{lem:OurBiconj}
\;\\
(a) We can express the squared Bellman error as
\begin{align}
    \mathcal{L}_{BE}(Q)(s, a)&:=(\mathcal{T} Q(s, a)-Q(s, a))^2 \notag
    \\
    &=\mathbb{E}_{s^\prime \sim P(s, a)} 
    \bigl[\mathcal{L}_{TD}(s, a, s^\prime)(Q)\bigr] - \beta^2 D(Q)(s, a) \label{eq:OurBiconj}
\end{align}
\begin{align}
\textrm{where}  \;\;\;   D(Q)(s, a):=\min _{\zeta \in \mathbb{R}} \mathbb{E}_{s^{\prime} \sim P(s, a)}\left[\left(V_{Q}\left(s^{\prime}\right)-\zeta\right)^2 \mid s, a\right]\label{eq:D(Q)}
\end{align}
(b) Define the minimizer (over all states and actions) of objective \eqref{eq:D(Q)} as
$$ \zeta^*: (s,a)\mapsto \arg\min_{\zeta\in \mathbb{R}}\mathbb{E}_{ s^\prime \sim P(s, a)}\left[\left(V^\ast(s^\prime)- \zeta\right)^2\mid s, a\right]$$
then
$r(s,a) = Q^*(s,a)-\beta \zeta^*(s,a)$.
\end{lem}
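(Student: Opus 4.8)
The plan is to prove both parts by elementary bias--variance bookkeeping, treating $s'\sim P(s,a)$ as the only source of randomness once $(s,a)$ is fixed. For part (a) I would start from two facts already available. By Lemma \ref{lem:expTD=BE}, $\mathbb{E}_{s'\sim P(s,a)}[\delta_Q(s,a,s')] = \mathcal{T}Q(s,a)-Q(s,a)$, so that $\mathcal{L}_{BE}(Q)(s,a) = \bigl(\mathbb{E}_{s'\sim P(s,a)}[\delta_Q(s,a,s')]\bigr)^2$; and by definition $\mathbb{E}_{s'\sim P(s,a)}[\mathcal{L}_{TD}(Q)(s,a,s')] = \mathbb{E}_{s'\sim P(s,a)}[\delta_Q(s,a,s')^2]$. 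Applying the identity $\mathbb{E}[X^2]=(\mathbb{E}[X])^2+\operatorname{Var}(X)$ to $X=\delta_Q(s,a,s')$ then gives $\mathbb{E}_{s'}[\mathcal{L}_{TD}(Q)(s,a,s')] = \mathcal{L}_{BE}(Q)(s,a) + \operatorname{Var}_{s'}\bigl(\delta_Q(s,a,s')\bigr)$, which is already the decomposition up to identifying the variance term.

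The next step is to evaluate that variance. Since $\delta_Q(s,a,s') = r(s,a) + \beta V_Q(s') - Q(s,a)$ and both $r(s,a)$ and $Q(s,a)$ are constant once $(s,a)$ is fixed, the only random piece is $\beta V_Q(s')$, so $\operatorname{Var}_{s'}(\delta_Q) = \beta^2\operatorname{Var}_{s'}(V_Q(s'))$. I would then invoke the variational characterization of variance as minimum mean-squared deviation, $\operatorname{Var}_{s'}(V_Q(s')) = \min_{\zeta\in\mathbb{R}}\mathbb{E}_{s'\sim P(s,a)}[(V_Q(s')-\zeta)^2 \mid s,a] = D(Q)(s,a)$, where the minimizer is the conditional mean $\zeta = \mathbb{E}_{s'}[V_Q(s')]$. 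Substituting and rearranging yields exactly Equation \eqref{eq:OurBiconj}. The ``bi-conjugate'' reparametrization $\zeta = h - r(s,a) + Q(s,a)$ advertised before the lemma is simply the change of variable that turns the biconjugate form $\max_{h}\{2\mathbb{E}_{s'}[\delta_Q]h - h^2\}$ into this $\min_\zeta$ expression, so I would present the variance route as its transparent equivalent rather than grind through the algebra.

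For part (b) I would specialize to $Q = Q^*$, where $V_{Q^*} = V^*$. The minimizer of $\mathbb{E}_{s'}[(V^*(s')-\zeta)^2]$ over $\zeta\in\mathbb{R}$ is again the conditional mean, so $\zeta^*(s,a) = \mathbb{E}_{s'\sim P(s,a)}[V^*(s')\mid s,a]$. Plugging this into the definition $Q^*(s,a) = r(s,a) + \beta\,\mathbb{E}_{s'\sim P(s,a)}[V^*(s')\mid s,a]$ and solving for $r$ gives $r(s,a) = Q^*(s,a) - \beta\zeta^*(s,a)$, as claimed.

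The computation is routine; the only points requiring care are (i) keeping signs straight when passing from the $\max_h$ biconjugate form to the $\min_\zeta$ form of $D(Q)$, and (ii) justifying that the scalar $\arg\min$ is the conditional expectation and is finite, which holds because $V_Q$ is bounded for $Q\in\mathcal{Q}$, making the quadratic in $\zeta$ coercive with a unique interior minimizer. Neither is a genuine obstacle, so I expect the proof to be short.
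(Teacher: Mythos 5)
Your proof is correct, but it takes a genuinely different (and more elementary) route than the paper's. The paper's proof in Web Appendix \S\ref{sec:BiconjProofs} actually executes the bi-conjugate machinery: it writes $\mathcal{L}_{BE}(Q)(s,a) = \max_{h\in\mathbb{R}} 2\,\mathbb{E}_{s'}[\delta_Q]\,h - h^2$, reparametrizes $h = \rho - Q(s,a)$, verifies the algebraic identity $2(\hat{\mathcal{T}}Q - Q)(\rho - Q) - (\rho - Q)^2 = (\hat{\mathcal{T}}Q - Q)^2 - (\hat{\mathcal{T}}Q - \rho)^2$ by expansion, and then substitutes $\rho = r(s,a) + \beta\zeta$ to isolate $D(Q)(s,a)$, identifying the optimizer via $\rho^* = \mathcal{T}Q(s,a)$. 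You instead combine Lemma \ref{lem:expTD=BE} with the decomposition $\mathbb{E}[X^2] = (\mathbb{E}[X])^2 + \operatorname{Var}(X)$ applied to $X = \delta_Q(s,a,s')$, note that only $\beta V_Q(s')$ is random so $\operatorname{Var}_{s'}(\delta_Q) = \beta^2\operatorname{Var}_{s'}(V_Q(s'))$, and then invoke the variational characterization of variance to recognize $\operatorname{Var}_{s'}(V_Q(s')) = D(Q)(s,a)$. The two arguments are mathematically equivalent — the Fenchel biconjugate of the square \emph{is} the variational form of the second moment — and your version is shorter and arguably more transparent; it also matches the informal bias-variance identity the paper already states in Section \ref{sec:DoubleSampling} when motivating the double-sampling problem. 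What the paper's route buys is the explicit dual/saddle-point scaffolding: by introducing $h$, $\rho$, and then $\zeta$ as maximization variables, it directly produces the inner-maximization structure (strong concavity in $\zeta$) that Theorem \ref{thm:algoEQ} and the GLADIUS ascent step are built on, whereas your variance view recovers the same formula and minimizer but leaves that minimax reformulation implicit. Your part (b) — specializing to $Q = Q^*$, identifying $\zeta^*(s,a) = \mathbb{E}_{s'\sim P(s,a)}[V^*(s')\mid s,a]$ as the conditional mean, and solving the definition $Q^* = r + \beta\,\mathbb{E}_{s'}[V^*]$ for $r$ — is the same argument as the paper's, which invokes $\mathcal{T}Q^* = Q^*$; and your boundedness remark justifying the finite interior minimizer is sound since $\|Q\|_\infty < \infty$ and $|\mathcal{A}| < \infty$ make $V_Q$ bounded.
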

\noindent The reformulation of $\lbe$ proposed in Lemma \ref{lem:OurBiconj} enjoys the advantage of minimizing the squared TD-error ($\mathcal{L}_{TD}$) but without bias. Combining Theorem \ref{thm:mainopt} and Lemma \ref{lem:OurBiconj}, we arrive at the following Theorem \ref{thm:algoEQ}, which gives the expected risk minimization formulation of IRL we propose.
\begin{thm}
\label{thm:algoEQ} $Q^*$ is uniquely identified by
\begin{align}
     &\underset{Q\in \mathcal{Q}}{\min }  \;\mathcal{R}_{exp}(Q) \notag
     \\
     &=\min _{Q \in \mathcal{Q}} \mathbb{E}_{(s, a) \sim \pi^*, \nu_0}\bigl[{\mathcal{L}_{N L L}(Q)(s, a)} + \mathbf{1}_{a=a_s} \bigl\{\mathbb{E}_{s^{\prime} \sim P(s, a)}\left[{\mathcal{L}_{T D}(Q)\left(s, a, s^{\prime}\right)}\right]-\beta^2 {D(Q)(s, a)}\bigr\} \notag
     \\
    &=\min _{Q \in \mathcal{Q}}\max_{\zeta\in \mathbb{R}^{S\times A}} \mathbb{E}_{(s, a) \sim \pi^*, \nu_0, s^{\prime} \sim P(s, a)}\bigl[\underbrace{\textcolor{blue}{-\log \left(\hat{p}_Q(a \mid s)\right)}}_{1)} + \mathbf{1}_{a=a_s}\bigl\{\underbrace{\textcolor{red}{\bigl(\hat{\mathcal{T}} Q\left(s, a, s^{\prime}\right)-Q(s, a)\bigr)^2}}_{{2)}} \notag
    \\
    & \quad -\beta^2 \underbrace{ \bigl(\textcolor{orange}{\left(V_{Q}\left(s^{\prime}\right)-\zeta(s,a)\right)^2} }_{{3)}}\bigr\} \bigr]\label{eq:AlgoOpt}
\end{align}

Furthermore, $r(s, a)=Q^*(s, a) - \beta \zeta^*(s, a)$ where $\zeta^*$ is defined in Lemma \ref{lem:OurBiconj}.
\end{thm}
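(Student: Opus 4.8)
The plan is to obtain Theorem \ref{thm:algoEQ} as a direct synthesis of the two results already in hand: the identification statement of Theorem \ref{thm:mainopt}, which guarantees that any minimizer of the expected risk $\mathcal{R}_{exp}(Q)$ in Equation \eqref{eq:mainopt} coincides with $Q^\ast$ on $\bar{\mathcal{S}}\times\mathcal{A}$, and the bi-conjugate representation of the squared Bellman error in Lemma \ref{lem:OurBiconj}. Concretely, I would verify the two displayed equalities in Equation \eqref{eq:AlgoOpt} in turn, then read off both conclusions from the cited results. For the first equality I would start from $\min_{Q\in\mathcal{Q}} \mathcal{R}_{exp}(Q)$ and substitute the identity of Lemma \ref{lem:OurBiconj}(a), namely $\mathcal{L}_{BE}(Q)(s,a) = \mathbb{E}_{s'\sim P(s,a)}[\mathcal{L}_{TD}(Q)(s,a,s')] - \beta^2 D(Q)(s,a)$, into the definition of $\mathcal{R}_{exp}$. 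Since this is a termwise rewriting inside the expectation $\mathbb{E}_{(s,a)\sim\pi^\ast,\nu_0}$, it is exact algebra and reproduces the second line of Equation \eqref{eq:AlgoOpt} verbatim.

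For the second equality (the conversion to minimax form) I would expand $D(Q)(s,a)$ via its definition in Equation \eqref{eq:D(Q)} as a pointwise minimization over $\zeta\in\mathbb{R}$, and flip the sign so that $-\beta^2 D(Q)(s,a) = \max_{\zeta\in\mathbb{R}} \mathbb{E}_{s'\sim P(s,a)}[-\beta^2 (V_Q(s')-\zeta)^2]$. The crux is then to pull this inner optimization outside the outer expectation $\mathbb{E}_{(s,a)}$, and this is the step I expect to require the most care. It is justified precisely because the decision variable $\zeta$ is permitted to range over the entire function class $\mathbb{R}^{S\times A}$ rather than a single scalar, so the maximization decouples across $(s,a)$ and the interchange $\mathbb{E}_{(s,a)}[\max_{\zeta\in\mathbb{R}}(\cdot)] = \max_{\zeta\in\mathbb{R}^{S\times A}} \mathbb{E}_{(s,a)}[\cdot]$ holds pointwise — a standard measurable-selection / interchange-of-integration-and-infimum argument. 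For state-action pairs with $a\neq a_s$ the indicator $\mathbbm{1}_{a=a_s}$ annihilates the $\zeta$-dependent summand, so the value of $\zeta(s,a)$ there is immaterial. After this interchange the objective is exactly the minimax in Equation \eqref{eq:AlgoOpt}; and because the inner maximum over $\zeta$ reconstructs $-\beta^2 D(Q)$, the outer problem in $Q$ is again $\min_{Q} \mathcal{R}_{exp}(Q)$, whose unique minimizer on $\bar{\mathcal{S}}\times\mathcal{A}$ is $Q^\ast$ by Theorem \ref{thm:mainopt}. This establishes that the $Q$-coordinate of any saddle point is $Q^\ast$.

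Finally, for the reward formula I would observe that at the saddle point the inner maximizer solves the quadratic $\min_{\zeta}\mathbb{E}_{s'\sim P(s,a)}[(V_{Q^\ast}(s')-\zeta)^2]$, whose solution is the conditional mean $\zeta^\ast(s,a) = \mathbb{E}_{s'\sim P(s,a)}[V_{Q^\ast}(s')]$, matching the definition of $\zeta^\ast$ in Lemma \ref{lem:OurBiconj}. Invoking Lemma \ref{lem:OurBiconj}(b) then yields $r(s,a) = Q^\ast(s,a) - \beta\zeta^\ast(s,a)$, which is consistent with the Bellman-based recovery $r(s,a)=Q^\ast(s,a)-\beta\,\mathbb{E}_{s'\sim P(s,a)}[V_{Q^\ast}(s')]$ already given in Theorem \ref{thm:mainopt}, completing the proof.
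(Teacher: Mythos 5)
Your proposal is correct and follows essentially the same route as the paper, which obtains Theorem \ref{thm:algoEQ} precisely by combining Theorem \ref{thm:mainopt} with Lemma \ref{lem:OurBiconj}: substituting the bi-conjugate identity for $\mathcal{L}_{BE}$, lifting the pointwise scalar maximization over $\zeta$ to a maximization over the function class $\mathbb{R}^{S\times A}$ (the interchange you flag as the delicate step, which the paper handles by the same decoupling-across-$(s,a)$ observation), and reading off the reward formula from Lemma \ref{lem:OurBiconj}(b). Your write-up in fact supplies more detail than the paper's one-line synthesis, but the argument is the same.
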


\noindent Equation \eqref{eq:AlgoOpt} in Theorem \ref{thm:algoEQ} is a mini-max problem in terms of $Q\in\mathcal{Q}$ and the introduced dual function $\zeta\in \mathbb{R}^{S\times A}$. To summarize, term 1) is the negative log-likelihood equation, term 2) is the TD error, and term 3) introduces a dual function $\zeta$.
The introduction of the dual function $\zeta$ in term 3) may seem a bit strange. 
In particular, note that $\arg\max_{\zeta \in \mathbb{R}} -\mathbb{E}_{s^{\prime} \sim P(s, a)}\left[\left(V_Q\left(s^{\prime}\right)-\zeta\right)^2 \mid s, a\right]$ is just $\zeta = \mathbb{E}_{s'\sim P(s,a)}[V\left(s^{\prime}\right)|s,a]$. 
However, we do not have access to the transition kernel and the state and action spaces may be large. 
Instead, we think of $\zeta$ as a function of states and actions, $\zeta(s,a)$ as introduced in Lemma~\ref{lem:OurBiconj}. This parametrization allows us to optimize over a class of functions containing $\zeta(s,a)$ directly. 

Given that the minimax resolution for the expected risk minimization problem in Theorem \ref{thm:algoEQ} finds $Q^\ast$ under an \textit{infinite number of data}, we now discuss the case when we are only given a \textit{finite dataset} $\mathcal{D}$ instead. For this, let's first rewrite the Equation \eqref{eq:AlgoOpt} as
\begin{align}
   &=\mathbb{E}_{(s, a) \sim \pi^*, \nu_0, s^{\prime} \sim P(s, a)}\bigl[\textcolor{blue}{-\log \left(\hat{p}_Q(a \mid s)\right)}\bigr]  + \mathbb{E}_{(s, a) \sim \pi^*, \nu_0, s^{\prime} \sim P(s, a)}\bigl[\mathbf{1}_{a=a_s}\bigl\{\textcolor{red}{\bigl(\hat{\mathcal{T}} Q\left(s, a, s^{\prime}\right)-Q(s, a)\bigr)^2}\bigr\}\bigr]
  \notag 
   \\
& \quad\quad -  \mathbb{E}_{(s, a) \sim \pi^*, \nu_0, s^{\prime} \sim P(s, a)}\bigl[\mathbf{1}_{a=a_s} \bigl\{\beta^2\textcolor{orange}{\left(V_{Q}\left(s^{\prime}\right)-\widetilde{\zeta}(s,a)\right)^2} \bigr\}\bigr] \label{eq:modifiedExpRM}
\end{align}
where $\widetilde{\zeta} \in  \operatorname{argmin}_{\zeta\in \mathbb{R}^{S\times A}}\mathbb{E}_{(s, a) \sim \pi^*, \nu_0, s^{\prime} \sim P(s, a)}\bigl[\mathbf{1}_{a=a_s} \bigl\{\left(V_{Q}\left(s^{\prime}\right)-\zeta(s,a)\right)^2 \bigr\}\bigr]$. But such $\widetilde{\zeta}$ also satisfies $\widetilde{\zeta}\in \operatorname{argmin}_{\zeta\in \mathbb{R}^{S\times A}}\mathbb{E}_{(s, a) \sim \pi^*, \nu_0, s^{\prime} \sim P(s, a)}\bigl[ \left(V_{Q}\left(s^{\prime}\right)-\zeta(s,a)\right)^2 \bigr] $, because
\begin{align}
\widetilde{\zeta}&\in\operatorname{argmin}_{\zeta\in \mathbb{R}^{S\times A}}\mathbb{E}_{(s, a) \sim \pi^*, \nu_0, s^{\prime} \sim P(s, a)}\bigl[\mathbf{1}_{a=a_s} \bigl\{\left(V_{Q}\left(s^{\prime}\right)-\zeta(s,a)\right)^2 \bigr\}\bigr] \notag
\\
&\in \operatorname{argmin}_{\zeta\in \mathbb{R}^{S\times A}}\mathbb{E}_{(s, a) \sim \pi^*, \nu_0, s^{\prime} \sim P(s, a)}\bigl[ \left(V_{Q}\left(s^{\prime}\right)-\zeta(s,a)\right)^2 \bigr] \label{eq:expzeta}
\end{align}
Substituting Equation \eqref{eq:modifiedExpRM} and Equation \eqref{eq:expzeta}'s  expectation over the data distribution with the empirical mean over the data, we arrive at the \textit{empirical risk minimization} formulation defined in Definition \ref{def:ERM}.

\begin{defn}[Empirical risk minimization]
\label{def:ERM}
Given $N := |\mathcal{D}|$ where $\mathcal{D}$ is a finite dataset. An empirical risk minimization problem is defined as the problem of finding $Q$ that minimizes the empirical risk $\mathcal{R}_{emp}(Q;\mathcal{D})$, which is defined as
\begin{align}
    &\mathcal{R}_{e m p}(Q ; \mathcal{D}):= \frac{1}{N} \sum_{\left(s, a, s^{\prime}\right) \in \mathcal{D}}\bigl[\textcolor{blue}{-\log \left(\hat{p}_Q(a \mid s)\right)}\bigr] \notag  
    \\
    & \quad\quad\quad+ \frac{1}{N} \sum_{\left(s, a, s^{\prime}\right) \in \mathcal{D}}\bigl[\mathbf{1}_{a=a_s}\bigl\{\textcolor{red}{\bigl(\hat{\mathcal{T}} Q\left(s, a, s^{\prime}\right)-Q(s, a)\bigr)^2} - \beta^2\textcolor{orange}{\left(V_{Q}\left(s^{\prime}\right)-\bar{\zeta}(s,a)\right)^2} \bigr\}\bigr]\label{eq:EmpiricalERMIRL}
\end{align}
where $\bar{\zeta}:=\operatorname{argmin}_{\zeta\in \mathbb{R}^{S\times A}}\frac{1}{N} \sum_{\left(s, a, s^{\prime}\right) \in \mathcal{D}}\bigl[ \left(V_{Q}\left(s^{\prime}\right)-\zeta(s,a)\right)^2 \bigr]$
\end{defn}
\noindent 
As formulated in Equation \eqref{eq:EmpiricalERMIRL}, our empirical risk minimization objective is a minimax optimization problem that involves inner maximization over $\zeta$ and outer minimization over $Q$. This structure is a direct consequence of introducing the dual function $\zeta$ to develop an unbiased estimator, thereby circumventing the double sampling problem discussed previously. In the next section, we introduce \textbf{GLADIUS}, a practical, alternating gradient ascent-descent algorithm specifically designed to solve this minimax problem and learn the underlying reward function from the data.

\section{GLADIUS: Algorithm for ERM-IRL (ERM-DDC)}
\label{sec:Algorithm} 

\begin{algorithm}
\caption{\textbf{G}radient-based \textbf{L}earning with \textbf{A}scent-\textbf{D}escent for \textbf{I}nverse \textbf{U}tility learning from \textbf{S}amples (GLADIUS)}
\label{alg:estimation1}
\begin{algorithmic}[1]
\Require Offline dataset $\mathcal{D}=\{(s, a, s')\}$, time horizon $T$
\Ensure $\widehat{r}$, $\widehat{Q}$
\State Initialize $Q_{\boldsymbol{\theta_2}}$, $\zeta_{\boldsymbol{\theta_1}}$, $\text{iteration} \gets 1$
\While{$t \leq T$}
    \State Draw batches $B_1, B_2$ from $\mathcal{D}$
    
    \State \textbf{[Ascent Step: Update $ \zeta_{\boldsymbol{\theta_1}}$, fixing $Q_{\boldsymbol{\theta_2}}$ and $V_{\boldsymbol{\theta_2}}$]}
    \State $D_{\boldsymbol{\theta_1}} \gets \sum\limits_{(s, a, s^\prime)\in B_2} 
    \textcolor{orange}{\bigl(V_{\boldsymbol{\theta_2}}(s^\prime) 
    - \zeta_{\boldsymbol{\theta_1}}(s, a)\bigr)^2}$
    \State \textbf{where} $V_{\boldsymbol{\theta}}(s^\prime) := \log \sum_{\tilde{a} \in \mathcal{A}} \exp(Q_{\boldsymbol{\theta}}(s^\prime, \tilde{a}))$
    
    \State $\boldsymbol{\theta_1} \gets \boldsymbol{\theta_1} - \tau_1 \nabla_{\boldsymbol{\theta_1}} D_{\boldsymbol{\theta_1}}$
    
    \State \textbf{[Descent Step: Update $Q_{\boldsymbol{\theta_2}}$ and $V_{\boldsymbol{\theta_2}}$, fixing $\zeta_{\boldsymbol{\theta_1}}$]}
    
    \State $\overline{\mathcal{L}_{NLL}} \gets  \sum\limits_{(s, a, s^\prime)\in B_2}
    \textcolor{blue}{-\log\bigl(\hat{p}_{\boldsymbol{\theta_2}}(a \mid s)\bigr)}$
    
    \State $\overline{\mathcal{L}_{\mathrm{BE}}} \gets \sum\limits_{(s, a, s^\prime)\in B_1} 
    \mathbf{1}_{a = a^*_s} \bigl[\textcolor{red}{\mathcal{L}_{TD}(Q)\left(s, a, s^{\prime}\right)} - \beta^2 \textcolor{orange}{( V_{\boldsymbol{\theta_2}}(s^\prime)- \zeta_{\boldsymbol{\theta_1}}(s,a))^2}\bigr]$
    
    \State \textbf{where} $\mathcal{L}_{TD}(Q)\left(s, a, s^{\prime}\right) := \left(\hat{\mathcal{T}} Q\left(s, a, s^{\prime}\right)-Q(s, a)\right)^2$
    
    \State $\mathcal{L}_{\boldsymbol{\theta_2}} \gets \overline{\mathcal{L}_{\mathrm{NLL}}} 
    +  \overline{\mathcal{L}_{\mathrm{BE}}}$
    
    \State $\boldsymbol{\theta_2} \gets \boldsymbol{\theta_2} - \tau_1 \nabla_{\boldsymbol{\theta_2}} \mathcal{L}_{\boldsymbol{\theta_2}}$
    
    \State $\text{iteration} \gets \text{iteration} + 1$
\EndWhile
\State $\widehat{\zeta} \gets \zeta_{\boldsymbol{\theta_1}}$
\State $\widehat{Q} \gets Q_{\boldsymbol{\theta_2}}$
\State $\widehat{r}(s, a) \gets \widehat{Q}(s, a) - \beta \cdot \widehat{\zeta} (s, a)$
\end{algorithmic}
\end{algorithm}

\noindent  
Algorithm \ref{alg:estimation1} solves the empirical risk minimization problem in Definition \ref{def:ERM} through an alternating gradient ascent descent algorithm we call Gradient-based Learning with Ascent-Descent for Inverse Utility learning from Samples (GLADIUS). Given the function class $\mathcal{Q}$ of value functions, let $Q_{\boldsymbol{\theta}_2}\in\mathcal{Q}$ and $\zeta_{\boldsymbol{\theta}_1}\in\mathbb{R}^{S\times A}$ denote the functional representation of $Q$ and $\zeta$. Our goal is to learn the parameters $\boldsymbol{\theta}^*= \{\boldsymbol{\theta}_1^*, \boldsymbol{\theta}_2^* \}$, that together characterize $\hat{Q}$ and $\hat{\zeta}$. Each iteration in the GLADIUS algorithm consists of the following two steps: 
\begin{enumerate}[noitemsep]
    \item Gradient Ascent: For sampled batch data $B_1\subseteq \mathcal{D}$, take a gradient step for $\boldsymbol{\theta}_1$ while fixing $Q_{\boldsymbol{\theta}_2}$.
    \item Gradient Descent: For sampled batch data $B_2\subseteq \mathcal{D}$, take a gradient step for $\boldsymbol{\theta}_2$ while fixing $\zeta_{\boldsymbol{\theta}_1}$.
\end{enumerate}
Note that $\mathcal{D}$ can be used instead of using batches $B_1$ and $B_2$; the usage of batches is to keep the computational/memory complexity $O(|B_1=B_2|)$ instead of $O(|\mathcal{D}|)$.

After a fixed number of gradient steps of $Q_{\boldsymbol{\theta}_1}$ and $\zeta_{\boldsymbol{\theta}_2}$ (which we can denote as $\hat{Q}$ and $\hat{\zeta}$), we can compute the reward prediction $\hat{r}$ as 
$\hat{r}(s,a) = \hat{Q}(s,a) - \beta \hat{\zeta}(s,a)$ due to Theorem \ref{thm:algoEQ}.

\subsection*{Special Case: Deterministic Transitions}
\begin{algorithm}
\caption{GLADIUS under Deterministic Transitions}
\label{alg:estimation_deterministic}
\begin{algorithmic}[1]
\Require Offline dataset $\mathcal{D}=\{(s, a, s')\}$, time horizon $T$
\Ensure $\widehat{r}$, $\widehat{Q}$
\State Initialize $Q_{\boldsymbol{\theta}}$, $\text{iteration} \gets 1$
\While{$t \leq T$}
    \State Draw batch $B$ from $\mathcal{D}$
    
    \State $\overline{\mathcal{L}_{NLL}} \gets  \sum\limits_{(s, a, s^\prime)\in B}
    \textcolor{blue}{-\log\bigl(\hat{p}_{\boldsymbol{\theta}}(a \mid s)\bigr)}$
    
    \State $\overline{\mathcal{L}_{\mathrm{BE}}} \gets \sum\limits_{(s, a, s^\prime)\in B} 
    \mathbf{1}_{a= a^*_s} \textcolor{red}{\mathcal{L}_{TD}(Q)\left(s, a, s^{\prime}\right)}$
    
    \State \textbf{where} $\mathcal{L}_{TD}(Q)\left(s, a, s^{\prime}\right) := \left(\hat{\mathcal{T}} Q\left(s, a, s^{\prime}\right)-Q(s, a)\right)^2$
    
    \State $\mathcal{L}_{\boldsymbol{\theta}} \gets \overline{\mathcal{L}_{\mathrm{NLL}}} 
    +  \overline{\mathcal{L}_{\mathrm{BE}}}$
    
    \State $\boldsymbol{\theta} \gets \boldsymbol{\theta} - \tau \nabla_{\boldsymbol{\theta}} \mathcal{L}_{\boldsymbol{\theta}}$
    
    \State $\text{iteration} \gets \text{iteration} + 1$
\EndWhile
\State $\widehat{Q} \gets Q_{\boldsymbol{\theta}}$
\State $\widehat{r}(s, a) \gets \widehat{Q}(s, a) - \beta \log \sum_{\tilde{a} \in \mathcal{A}} \exp(\widehat{Q}(s^\prime, \tilde{a}))$
\end{algorithmic}
\end{algorithm}
When the transition function is \textit{deterministic} (e.g., in \citet{rafailov2024r, guo2025deepseek, zhong2024dpo}) meaning that for any state-action pair \((s, a)\), the next state \(s'\) is uniquely determined, the ascent step involving \(\zeta\) is no longer required. This is because the term \(\left(V_{Q}\left(s^{\prime}\right)-\zeta(s,a)\right)^2\) (highlighted in orange in Equation \eqref{eq:AlgoOpt} and \eqref{eq:EmpiricalERMIRL}) becomes redundant in the empirical ERM-IRL objective, because $\max_{\zeta \in \mathbb{R}} -\mathbb{E}_{s^{\prime} \sim P(s, a)}\left[\left(V_Q\left(s^{\prime}\right)-\zeta\right)^2 \mid s, a\right]$ is always $0$. Consequently, the optimization simplifies to:

\begin{align}
    \min _{Q \in \mathcal{Q}}&\frac{1}{N} \sum_{(s,a,s^\prime)\in \mathcal{D}}\bigl[\textcolor{blue}{-\log \left(\hat{p}_Q(a \mid s)\right)} + 
   \mathbf{1}_{a=a_s}\textcolor{red}{\bigl(\hat{\mathcal{T}} Q\left(s, a, s^{\prime}\right)-Q(s, a)\bigr)^2} \bigr]\label{eq:DeterministicERMIRL}
\end{align}

\noindent Under deterministic transitions, the GLADIUS algorithm reduces to gradient descents for \( Q_{\boldsymbol{\theta}} \), eliminating the need for the alternating ascent-descent update steps. Consequently, the estimated reward function is computed as:
\begin{equation}
    \hat{r}(s,a) = \hat{Q}(s,a) - \beta V_Q(s^\prime) \notag
\end{equation}
\noindent Key Differences in the Deterministic Case:
\begin{itemize}[leftmargin=*]
    \item No Ascent Step: The ascent step for \(\zeta\) is removed since the term \(\left(V_{Q}(s') - \zeta(s,a)\right)^2\) disappears.
    \item Gradient Descent: The algorithm updates \(Q_{\boldsymbol{\theta}}\) via a single gradient descent step per iteration.
    \item Reward Computation: The reward function is computed as \(\hat{r}(s, a) = \hat{Q}(s, a) - \beta V_Q(s^\prime)\).
\end{itemize}

\noindent This modification makes GLADIUS computationally more efficient when applied to deterministic environments while maintaining the correct theoretical formulation of the $Q^\ast$ and reward functions.


\section{Theoretical guarantees for GLADIUS}
\label{sec:Analysis}

This section establishes the optimization guarantees used by GLADIUS. The empirical objective in Equation~\eqref{eq:EmpiricalERMIRL} is a minimax problem: for fixed $Q$, the auxiliary block $\zeta$ is a least-squares regression target and the corresponding maximization problem is strongly concave, while the $Q$-block is generally nonconvex because of the Bellman residual term \citep{bas2021logistic}. Our analysis therefore does not rely on convexity of the $Q$-objective. Instead, we show that the empirical $Q$-objective and the auxiliary regression objective satisfy Polyak-\L{}ojasiewicz (PL) inequalities under an empirical Jacobian conditioning condition.

The result is formulated for continuous state spaces and finite action spaces. The state space $\mathcal S$ is not discretized. The finite-dimensional matrices that appear below arise only because the empirical objective evaluates $Q_\theta$ and $\zeta_\phi$ at finitely many state-action points observed in the dataset. The proof also avoids the stronger full-column-rank condition
\[
\inf_{\|h\|_2=1}\sup_{(s,a)\in\mathcal S\times\mathcal A}|h^\top\nabla_\theta Q_\theta(s,a)|>0.
\]
Instead, the analysis uses empirical-output Jacobian conditioning. The verification of this condition for linear and sufficiently wide neural-network parametrizations is deferred to Appendix~\ref{app:empirical_param_proofs}.

\subsection{Empirical PL conditions}

Let $\mathcal S$ be a finite-dimensional state space and let $\mathcal A$ be a finite action space. Consider the parametrized class
\[
\mathcal{Q}
=
\left\{Q_{\boldsymbol{\theta}}:\mathcal S\times\mathcal A\to\mathbb R
\mid
\boldsymbol{\theta}\in\Theta\subseteq\mathbb R^{d_\theta},\ Q_{\boldsymbol{\theta}}\in\mathcal F\right\}.
\]
The class $\mathcal F$ may be linear, polynomial, or a neural-network class.

\begin{asmp}[Realizability]
\label{ass:realizability}
The class $\mathcal{Q}$ contains an optimal function $Q^*$, meaning that there exists $\boldsymbol{\theta}^*\in\Theta$ such that $Q_{\boldsymbol{\theta}^*}=Q^*$.
\end{asmp}

Assumption~\ref{ass:realizability}, standard in offline RL and value-function approximation \citep{chen2019information,xie2021batch,zhan2022offline,zanette2023realizability,jiangoffline}, is used below only through the empirical evaluations generated by the observed continuous-state sample.

Under this parametrization, the population objective is
\begin{align}
  \mathbb{E}_{(s, a)\sim \pi^*, \nu_0}  \left[\mathcal{L}_{NLL}(Q_{\boldsymbol{\theta}})(s,a) +  \mathbf{1}_{a = a_s} \mathcal{L}_{BE}(Q_{\boldsymbol{\theta}})(s,a)\right], \label{eq:paraERMIRL}
\end{align}
and the empirical objective is
\begin{align}
    &\max_{\boldsymbol{\theta}_1}\frac{1}{N} \sum_{(s,a,s^\prime)\in \mathcal{D}}
    \bigl[{-\log \left(\hat{p}_{Q_{\boldsymbol{\theta_2}}}(a \mid s)\right)} 
    + \mathbf{1}_{a=a_s}\bigl\{\bigl(\hat{\mathcal{T}} Q_{\boldsymbol{\theta_2}}\left(s, a, s^{\prime}\right)-Q_{\boldsymbol{\theta_2}}(s, a)\bigr)^2 
    -\beta^2  \bigl(V_{Q_{\boldsymbol{\theta_2}}}\left(s^{\prime}\right)-\zeta_{\boldsymbol{\theta_1}}(s,a)\bigr)^2 \bigr\} \bigr].\label{eq:paraEmp}
\end{align}
Here $V_Q(s)=\log\sum_{b\in\mathcal A}\exp(Q(s,b))$ and $\hat p_Q(b\mid s)=\exp(Q(s,b))/\sum_{c\in\mathcal A}\exp(Q(s,c))$.

\begin{defn}[Polyak-\L{}ojasiewicz condition]
Given $\Theta\subseteq\mathbb R^d$, a differentiable function $g:\Theta\to\mathbb R$ satisfies the Polyak-\L{}ojasiewicz (PL) condition with respect to the Euclidean norm if the solution set is nonempty, $g$ has finite minimal value $g^*$, and there exists $c>0$ such that
\[
    \frac12\|\nabla g(\boldsymbol\theta)\|_2^2
    \ge
    c\bigl(g(\boldsymbol\theta)-g^*\bigr),
    \qquad
    \forall \boldsymbol\theta\in\Theta.
\]
\end{defn}

Let $\mathcal D$ denote the observed finite dataset. The empirical objective \eqref{eq:paraEmp} is computed from $\mathcal D$ and consists of the NLL term, the Bellman-error term, and the auxiliary $\zeta$-regression term. For the Bellman-error term, the action is the anchor action $a_s$. No sample splitting or independence between objective-specific index sets is required.

Let $\mathcal X_Q=\mathcal X_Q(\mathcal D)$ denote the finite set of sampled states at which the empirical objective evaluates $Q_\theta$. Since $\mathcal A$ is finite, define the empirical $Q$-evaluation set
\[
    Z_Q=Z_Q(\mathcal D):=\mathcal X_Q\times\mathcal A
    =\{z_1,\ldots,z_M\},
    \qquad M:=|Z_Q|.
\]
Let
\[
    Z_\zeta=Z_\zeta(\mathcal D)=\{\bar z_1,\ldots,\bar z_G\},
    \qquad G:=|Z_\zeta|,
\]
be the distinct state-action pairs at which the auxiliary network $\zeta_\phi$ is evaluated. The exact grouping of the empirical averages and weights is deferred to Appendix~\ref{app:empirical_grouping}. Define
\[
    \mathbf Q_\theta(Z_Q):=
    \bigl(Q_\theta(z_1),\ldots,Q_\theta(z_M)\bigr)^\top,
    \qquad
    J_Q(\theta;Z_Q):=D_\theta\mathbf Q_\theta(Z_Q),
\]
and
\[
    \boldsymbol\zeta_\phi(Z_\zeta):=
    \bigl(\zeta_\phi(\bar z_1),\ldots,\zeta_\phi(\bar z_G)\bigr)^\top,
    \qquad
    J_\zeta(\phi;Z_\zeta):=D_\phi\boldsymbol\zeta_\phi(Z_\zeta).
\]

\begin{asmp}[Empirical Jacobian conditioning]
\label{ass:nonSingularJac}
With the empirical evaluation sets and Jacobians defined above, let $B_Q=B(\theta_0,R_Q)$ and $B_\zeta=B(\phi_0,R_\zeta)$ be closed finite-radius balls containing the initial point, optimization iterates, and the empirical minimizers.
\begin{enumerate}[label=(\roman*)]
\item \textbf{Smoothness and bounded derivatives.} For every $z\in Z_Q$, $\theta\mapsto Q_\theta(z)$ is twice continuously differentiable on $B_Q$, and there exist $B_{Q,1},B_{Q,2}<\infty$ such that
\[
    \sup_{\theta\in B_Q}\sup_{z\in Z_Q}\|\nabla_\theta Q_\theta(z)\|_2\le B_{Q,1},
    \qquad
    \sup_{\theta\in B_Q}\sup_{z\in Z_Q}\|\nabla_\theta^2 Q_\theta(z)\|_{\rm op}\le B_{Q,2}.
\]
For every $\bar z\in Z_\zeta$, $\phi\mapsto\zeta_\phi(\bar z)$ is twice continuously differentiable on $B_\zeta$, and analogous finite constants $B_{\zeta,1},B_{\zeta,2}$ exist.

\item \textbf{Empirical-output $Q$-Jacobian conditioning.} There exists $\mu_Q>0$ such that, for every $\theta\in B_Q$,
\[
    J_Q(\theta;Z_Q)J_Q(\theta;Z_Q)^\top\succeq \mu_Q I_M.
\]
Equivalently, for every $v\in\mathbb R^M$,
\[
    \|J_Q(\theta;Z_Q)^\top v\|_2^2\ge \mu_Q\|v\|_2^2.
\]

\item \textbf{Empirical-output $\zeta$-Jacobian conditioning.} There exists $\mu_\zeta>0$ such that, for every $\phi\in B_\zeta$,
\[
    J_\zeta(\phi;Z_\zeta)J_\zeta(\phi;Z_\zeta)^\top\succeq \mu_\zeta I_G.
\]
\end{enumerate}
\end{asmp}

\begin{lem}[Linear parametrization satisfies Assumption~\ref{ass:nonSingularJac}]
\label{lem:linPolyNonsingular}
Let $Q_\theta(s,a)=\theta^\top\varphi(s,a)$ and let $\zeta_\phi(s,a)=\phi^\top\chi(s,a)$. If the empirical feature Gram matrices on $Z_Q(\mathcal D)$ and $Z_\zeta(\mathcal D)$ are positive definite, then Assumption~\ref{ass:nonSingularJac} holds for the linear parametrizations.
\end{lem}

\begin{lem}[Wide smooth neural networks satisfy Assumption~\ref{ass:nonSingularJac}]
\label{lem:NNenjoysPL}
For sufficiently wide feedforward neural networks with $C^2$ hidden activations, standard over-parameterized scaling, and random initialization, Assumption~\ref{ass:nonSingularJac} holds with high probability under the standard sufficient conditions stated and justified in Appendix~\ref{app:empirical_param_proofs}.
\end{lem}


\begin{lem}[Empirical Bellman-error PL]
\label{lem:BE_PL}
Suppose Assumption~\ref{ass:nonSingularJac}(ii) holds. Then there exists a constant $c_{\rm BE}>0$ such that
\[
    {1\over2}\|\nabla_\theta\widehat L_{\rm BE}(\theta)\|_2^2
    \ge
    c_{\rm BE}\widehat L_{\rm BE}(\theta),
    \qquad
    \forall\theta\in B_Q.
\]
\end{lem}

\begin{lem}[Empirical NLL PL]
\label{lem:NLL_PL}
Suppose Assumption~\ref{ass:nonSingularJac}(i)--(ii) hold. Then there exists a constant $c_{\rm NLL}>0$ such that
\[
    {1\over2}\|\nabla_\theta\widehat L_{\rm NLL}(\theta)\|_2^2
    \ge
    c_{\rm NLL}
    \left(\widehat L_{\rm NLL}(\theta)-\widehat L_{\rm NLL}^*\right),
    \qquad
    \forall\theta\in B_Q.
\]
\end{lem}

\begin{lem}[Empirical auxiliary-regression PL]
\label{lem:zeta_empirical_pl}
Fix $\theta_2\in B_Q$. Suppose Assumption~\ref{ass:nonSingularJac}(iii) holds. Then there exists a constant $c_\zeta>0$ such that
\[
    {1\over2}\|\nabla_\phi\widehat F_\zeta(\phi;\theta_2)\|_2^2
    \ge
    c_\zeta
    \left(\widehat F_\zeta(\phi;\theta_2)-\widehat F_\zeta^*(\theta_2)\right),
    \qquad
    \forall\phi\in B_\zeta.
\]
Equivalently, the inner maximization objective $-\widehat F_\zeta$ satisfies the corresponding PL inequality for gradient ascent.
\end{lem}

\begin{thm}[Empirical PL guarantees]
\label{thm:bothPL}
Under Assumption~\ref{ass:nonSingularJac}, the empirical $Q$-objective
\[
    \widehat R_Q(\theta):=\widehat L_{\rm NLL}(\theta)+\widehat L_{\rm BE}(\theta)
\]
satisfies a PL inequality on $B_Q$: there exists $c_{\widehat R}>0$ such that
\[
    {1\over2}\|\nabla_\theta\widehat R_Q(\theta)\|_2^2
    \ge
    c_{\widehat R}
    \left(\widehat R_Q(\theta)-\widehat R_Q^*\right),
    \qquad
    \forall\theta\in B_Q.
\]
Moreover, for each fixed $\theta_2\in B_Q$, the auxiliary loss $\widehat F_\zeta(\cdot;\theta_2)$ satisfies a PL inequality on $B_\zeta$. Explicit admissible constants are given in the appendix proofs.
\end{thm}

\subsection{Global convergence of GLADIUS}
\label{sec:GlobalConv}

We now combine the empirical PL geometry from Theorem~\ref{thm:bothPL}
with the stability/generalization bound for stochastic gradient ascent-descent.
The goal is to separate the two sources of error in GLADIUS: the
\emph{optimization error} from running only finitely many gradient iterations, and
the \emph{statistical/generalization error} from replacing the population risk by
the empirical risk.

Let $\mathcal D_N=\{(s_i,a_i,s_i')\}_{i=1}^N$ denote the observed dataset.  Write
the empirical minimax objective optimized by GLADIUS as
\begin{align}
    &\widehat f_N(\theta,\phi)
    := \notag
    \\
    &{1\over N}\sum_{(s,a,s')\in\mathcal D_N}
    \biggl[
        -\log\hat p_{Q_\theta}(a\mid s)
        +\mathbf 1_{a=a_s}
        \biggl\{
            \bigl(\widehat{\mathcal T}Q_\theta(s,a,s')-Q_\theta(s,a)\bigr)^2
            -\beta^2
            \bigl(V_{Q_\theta}(s')-\zeta_\phi(s,a)\bigr)^2
        \biggr\}
    \biggr].
    \label{eq:empirical_minimax_gladius}
\end{align}
For fixed $\theta$, the maximization over $\phi$ estimates the conditional
mean correction in the biconjugate representation of the squared Bellman error.
Define the profiled empirical objective
\[
    \widehat g_N(\theta)
    :=
    \max_{\phi\in B_\zeta}\widehat f_N(\theta,\phi),
    \qquad
    \widehat g_N^*
    :=
    \min_{\theta\in B_Q}\widehat g_N(\theta),
\]
and let
\[
    \widehat S_N
    :=
    \argmin_{\theta\in B_Q}\widehat g_N(\theta)
\]
denote the set of empirical minimizers.  Up to constants that do not affect the
minimization over $\theta$, $\widehat g_N$ is the empirical $Q$-objective
$\widehat R_Q$ studied in Theorem~\ref{thm:bothPL}.

Let $(\widehat\theta_T,\widehat\phi_T)$ be the parameters returned by
Algorithm~\ref{alg:estimation1}.
The following theorem gives the empirical optimization rate and then lifts it to
a population excess-risk bound.

\begin{thm}[Global convergence of GLADIUS]
\label{prop:linConvergence}
Suppose Assumptions~\ref{ass:realizability} and~\ref{ass:nonSingularJac} hold.
Assume further that the stochastic gradients used by Algorithm~\ref{alg:estimation1} use stepsizes
$\eta_t=c_1/(c_2+t)$ with constants satisfying the two-sided-PL SGDA conditions
of \citet{yang2020global}. Then there exist constants $\nu,\gamma_0>0$ such that
\[
    \mathbb E\!\left[
        \widehat g_N(\widehat\theta_T)-\widehat g_N^*
    \right]
    \le
    {\nu\over \gamma_0+T}.
    \label{eq:empirical_optimization_rate}
\]
Moreover, there exists
$c_{\rm qg}>0$ such that
\[
    \mathbb E\!\left[
        \operatorname{dist}(\widehat\theta_T,\widehat S_N)^2
    \right]
    \le
    {\nu\over c_{\rm qg}(\gamma_0+T)}.
    \label{eq:empirical_parameter_distance}
\]
Finally, the population ERM-DDC/IRL excess risk satisfies
\[
    \mathbb E\!\left[
        \mathcal R_{\exp}(Q_{\widehat\theta_T})
        -
        \mathcal R_{\exp}(Q^*)
    \right]
    \le
    (1+L/\rho)G\,\varepsilon_{N,T}
    +
    {\nu\over \gamma_0+T},
    \label{eq:population_excess_risk_rate}
\]
where
\[
    \varepsilon_{N,T}
    =
    O\!\left((c_2+T)^{-\alpha}\right)
    +
    {C\over N},
    \qquad
    \alpha:=\min\left\{{1\over2},{3cc_1\over8}\right\}.
\]
Here $L$ is the smoothness constant of the empirical objective, $\rho$ is the
strong-concavity constant of the auxiliary regression problem, $G$ bounds the
stochastic gradient second moment, and $C,c,c_1,c_2$ are problem-dependent
constants.
\end{thm}
Theorem~\ref{prop:linConvergence} is a risk bound.  To translate it into
estimation error for the object of interest, we use the quantitative
identification property of the full ERM-DDC/IRL risk.  Let $d^*$ denote the
population state-action distribution generated by the expert policy $\pi^*$ and
the initial distribution $\nu_0$.  For any measurable $f:\mathcal S\times
\mathcal A\to\mathbb R$, define
\[
    \|f\|_{2,*}^2
    :=
    \mathbb E_{(s,a)\sim d^*}[f(s,a)^2].
\]

\begin{thm}[$Q^*$-error rate]
\label{thm:distributional_q_bound}
Suppose the conditions of Theorem~\ref{prop:linConvergence} hold.  Suppose also
that the population ERM-DDC/IRL risk satisfies the quantitative identification
bound in Lemma~\ref{lem:erm_ddc_quant_identification}; that is, there exists
$C_{\rm id}<\infty$ such that, for all $\theta\in B_Q$,
\[
    \|Q_\theta-Q^*\|_{2,*}^2
    \le
    C_{\rm id}
    \left[
        \mathcal R_{\exp}(Q_\theta)
        -
        \mathcal R_{\exp}(Q^*)
    \right].
\]
Then GLADIUS satisfies
\[
    \mathbb E\!\left[
        \|Q_{\widehat\theta_T}-Q^*\|_{2,*}^2
    \right]
    \le
    C_{\rm id}
    \left[
        (1+L/\rho)G\,\varepsilon_{N,T}
        +
        {\nu\over \gamma_0+T}
    \right].
    \label{eq:q_error_bound}
\]
Consequently,
\[
    \mathbb E\!\left[
        \|Q_{\widehat\theta_T}-Q^*\|_{2,*}^2
    \right]
    =
    O\!\left((c_2+T)^{-\alpha}\right)
    +
    O\!\left({1\over N}\right)
    +
    O\!\left({1\over T}\right).
\]
Equivalently, the $L_2(d^*)$ estimation error obeys
\[
    \left(
        \mathbb E\!\left[
            \|Q_{\widehat\theta_T}-Q^*\|_{2,*}^2
        \right]
    \right)^{1/2}
    =
    O\!\left((c_2+T)^{-\alpha/2}\right)
    +
    O\!\left({1\over \sqrt N}\right)
    +
    O\!\left({1\over \sqrt T}\right).
\]
In particular, when $\alpha=1/2$, the optimization contribution to the
$L_2(d^*)$ error is $O(T^{-1/4})$ and the statistical contribution is
$O(N^{-1/2})$.
\end{thm}

\noindent\textbf{Remark.}
To the best of our knowledge, no prior work has proposed an algorithm that
guarantees global-optimum convergence for a minimization problem involving the
Bellman-error term $\mathcal L_{\rm BE}(Q)(s,a)$.%
\footnote{Some studies, such as \citet{dai2018sbeed}, establish convergence to
a stationary point of the corresponding minimax problem.}
In this regard, Lemma~\ref{lem:BE_PL} has an important implication for offline
reinforcement learning.  Gradient-based offline RL methods
\citep{antos2008learning,dai2018sbeed} minimize a Bellman-error objective in
essentially the same way that GLADIUS does.  These methods have been shown to
converge, but global convergence guarantees have not yet been established in
the generality considered here.  Lemma~\ref{lem:BE_PL}, combined with a
suitable distributional assumption, may be useful for establishing that
gradient-based offline RL is globally convergent for important function classes,
including tabular, linear, polynomial, and neural-network classes.

\section{Offline IRL/DDC experiments}
\label{sec:Experiments}

We now present results from simulation experiments, comparing the performance of our approach with that of several benchmark algorithms. In this section, we use the high-dimensional version of the canonical bus engine replacement problem (\cite{rust1994structural}) as the setting for our experiments. Later, in $\S$\ref{sec:imitation}, we use the OpenAI gym benchmark environment experiments with a discrete action space (Lunar Lander, Acrobot, and Cartpole) \cite{brockman2016openai} as in \citet{garg2021iq} for the related but easier problem of imitation learning. 

\subsection{Experimental Setup}
This bus engine setting has been extensively used as the standard benchmark for the reward learning problem in the DDC literature in economics \citep{hotz1993conditional, aguirregabiria2002swapping,  kasahara2009nonparametric,  arcidiacono2011conditional,  arcidiacono2011practical, su2012constrained, norets2009inference, chiong2016duality, reich2018divide, chernozhukov2022locally, geng2023data, barzegary2022recursive, yang2024estimation}. 

The bus engine replacement problem \citep{rust1987optimal} is a simple regenerative optimal stopping problem. In this setting, the manager of a bus company operates many identical buses. As a bus accumulates mileage, its per-period maintenance cost increases. The manager can replace the engine in any period (which then becomes as good as new, and this replacement decision resets the mileage to one). However, the replacement decision comes with a high fixed cost. Each period, the manager makes a dynamic trade-off between replacing the engine and continuing with maintenance. We observe the manager's decisions for a fixed set of buses, i.e., a series of states, decisions, and state transitions. Our goal is to learn the manager's reward function from these observed trajectories under the assumption that he made these decisions optimally. 

\noindent \textbf{Dataset.} 
There are $N$ independent and identical buses (trajectories) indexed by $j$, each of which has $100$ periods over which we observe them, i.e., $h\in\{1\ldots 100\}$. Each bus's trajectory starts with an initial mileage of 1. The only reward-relevant state variable at period $h$ is the mileage of bus $x_{jh} \in \{1, 2, \ldots 20\}$. 

\noindent \textbf{Decisions and rewards.}\;\; There are two possible decisions at each period, replacement or continuation, denoted by $d_{jh} = \{0,1\}$. $d_{jh}=1$ denotes replacement, and there is a fixed cost $\theta_1$ of replacement. Replacement resets the mileage to 1, i.e., the engine is as good as new. $d_{jh}=0$ denotes maintenance, and the cost of maintaining the engine depends on the mileage as follows: $\theta_0 x_{jh}$. Intuitively, the manager can pay a high fixed cost $\theta_1$ for replacing an engine in this period but doing so reduces future maintenance costs since the mileage is reset to 1. In all our experiments, we set $\theta_0 = 1$ (maintenance cost) and $\theta_1 = 5$ (replacement cost). Additionally, we set the discount factor to $\beta = 0.95$.

\noindent \textbf{State transitions at each period.}\;\; If the manager chooses maintenance, the mileage advances by 1, 2, 3, or 4 with a $1/4$ probability each. If the manager chooses to replace the engine, then the mileage is reset to 1. That is, $\mathbb{P}(\{x_{j(h+1)}=x_{jh}+k\}\mid d_{jh}=0 ) = 1/4$, $k\in \{1,2,3,4\}$ and $\mathbb{P}\{x_{j(h+1)}=1\mid d_{jh}=1\}=1$. When the bus reaches the maximum mileage of 20, we assume that mileage remains at 20 even if the manager continues to choose maintenance.



\noindent \textbf{High-dimensional setup.}\;\; In some simulations, we consider a high-dimensional version of the problem, where we now modify the basic set-up described above to include a set of $K$ high-dimensional state variables, similar to \citet{geng2023data}. Assume that we have access to an additional set of $K$ state variables $\{s^1_{jh}, s^2_{jh}, s^3_{jh} \ldots s^K_{jh}\}$, where each $s^k_{jh}$ is an i.i.d random draw from $\{-10, -9, \ldots, 9, 10\}$. We vary $K$ from 2 to 100 in our empirical experiments to test the sensitivity of our approach to the dimensionality of the problem.   
Further, we assume that these high-dimensional state variables $s_{jh}^k$s do not affect the reward function or the mileage transition probabilities. However, the researcher does not know this. So, they are included in the state space, and ideally, our algorithm should be able to infer that these state variables do not affect rewards and/or value function and recover the true reward function.

\noindent \textbf{Traing/testing split.}  Throughout, we keep 80\% of the trajectories in any experiment for training/learning the reward function, and the remaining 20\% is used for evaluation/testing. 

\noindent \textbf{Functional form.} For the oracle methods, we use the reward functions' true parametric form, i.e., a linear function. For other methods (including ours), we used a multi-layer perceptron (MLP) with two hidden layers and 10 perceptrons for each hidden layer for the estimation of $Q$-function. 

\subsection{Benchmark Algorithms}
We compare our algorithm against a series of standard, or state-of-art benchmark algorithms in the DDC and IRL settings.
\\
\textbf{Rust (Oracle)}
\; Rust is an oracle-like fixed point iteration baseline that uses the nested fixed point algorithm \citep{rust1987optimal}. It assumes the knowledge of: (1) linear parametrization of rewards by $\theta_1$ and $\theta_2$ as described above, and (2) the exact transition probabilities. 
\\
\textbf{ML-IRL (Oracle)}\; ML-IRL from \citet{zeng2023understanding} is the state-of-the-art offline IRL algorithm that minimizes negative log-likelihood of choice (i.e., the first term in Equation \eqref{eq:mainopt}). This method requires a separate estimation of transition probabilities, which is challenging in high-dimensional settings. So, we make the same oracle assumptions as we did for Rust (Oracle), i.e., assume that transition probabilities are known. Additionally, to further improve this method, we leverage the finite dependence property of the problem \citep{arcidiacono2011conditional}, which helps avoid roll-outs.  
\\
\textbf{SAmQ}\; SAmQ \cite{geng2023data} fits approximated soft-max Value Iteration (VI) to the observed data. We use the SAmQ implementation provided by the authors\footnote{\href{https://github.com/gengsinong/SAmQ}{https://github.com/gengsinong/SAmQ}}. However, their code did not scale due to a memory overflow issue, and did not work for scenarios with 2500 (i.e., 250,000 samples) trajectories or more. 
\\
\textbf{IQ-learn}\; IQ-learn is a popular gradient-based method, maximizing the occupancy matching objective (which does not guarantee that the Bellman equation is satisfied; see Web Appendix $S$\ref{sec:occupancy}) for details.
\\
\textbf{BC}\; Behavioral Cloning (BC) simply minimizes the expected negative log-likelihood. This simple algorithm outperforms \cite{zeng2023understanding, li2022rethinkingvaluedice} many recent algorithms such as ValueDICE \cite{kostrikov2019imitation}. See $\S$\ref{sec:ERM-IRL} for detailed discussions.

\subsection{Experiment results}
\subsubsection{Performance results for the standard bus engine setting}
Table \ref{fig:mse_r_estimation} provides a table of simulation experiment results without dummy variables, i.e., with only mileage ($x_{jh}$) as the relevant state variable. The performance of algorithms was compared in terms of \textit{mean absolute percentage error (MAPE)} of $r$ estimation, which is defined as $\frac{1}{N} \sum_{i=1}^N \left|\frac{\hat{r}_i-r_i}{r_i}\right| \times 100$, where $N$ is the total number of samples from expert policy $\pi^*$ and $\hat{r}_i$ is each algorithm's estimator for the true reward $r_i$.\footnote{In the simulation we consider, we don't have a state-action pair with true reward near 0.}\footnote{As we assume that the data was collected from agents following (entropy regularized) optimal policy $\pi^*$ (Assumption \ref{ass:IRLoptimaldecision}), the distribution of states and actions in the data is the best data distribution choice.} 

\begin{table*}[h]
    \centering
    \scalebox{0.8}{
    \begin{tabular}{l
            >{\centering\arraybackslash}p{2.5cm}
            >{\centering\arraybackslash}p{2.5cm}
            >{\centering\arraybackslash}p{2.5cm}
            >{\centering\arraybackslash}p{2.5cm}
            >{\centering\arraybackslash}p{2.5cm}
            >{\centering\arraybackslash}p{2.5cm}}
\toprule
\multirow{2}{*}{\parbox{1.7cm}{No. of Trajectories\\(H=100)}} 
& \multicolumn{2}{c}{Oracle Baselines} & \multicolumn{4}{c}{\makecell{Neural Network, No Knowledge of Transition Probabilities
}} \\
\cmidrule(lr){2-3} \cmidrule(lr){4-7}
& Rust & ML-IRL & \textbf{\ul{GLADIUS}} & SAmQ & IQ-learn & BC \\
\cmidrule(lr){2-7}
& MAPE (SE) & MAPE (SE) & MAPE (SE) & MAPE (SE) & MAPE (SE) & MAPE (SE) \\
\midrule
50   & 3.62 (1.70) & 3.62 (1.74) & \textbf{3.44} (1.28) & 4.92 (1.20)  & 114.13 (26.60) & 80.55 (12.82)  \\
250   & 1.37 (0.77) & 1.10 (0.78) & \textbf{0.84} (0.51) & 3.65 (1.00)  & 112.86 (27.31) & 72.04 (13.21) \\
500   & 0.90 (0.56) & 0.84 (0.59) & \textbf{0.55} (0.20) & 3.13 (0.86)  & 113.27 (25.54) & 71.92 (12.44) \\
1000  & 0.71 (0.49) & 0.64 (0.48) & \textbf{0.52} (0.22) & 1.55 (0.46)  & 112.98 (24.12) & 72.17 (12.11) \\
2500  & 0.68 (0.22) & 0.62 (0.35) & \textbf{0.13} (0.06) & N/A          & 111.77 (23.99) & 62.61 (10.75) \\
5000  & 0.40 (0.06) & 0.43 (0.26) & \textbf{0.12} (0.06) & N/A          & 119.18 (22.55) & 46.45 (8.22) \\
\bottomrule
\multicolumn{7}{l}{\footnotesize 
Based on 20 repetitions. Oracle baselines (Rust, MLIRL) were based on bootstrap repetition of 100.}
\end{tabular}
    }
\caption{Mean Absolute Percentage Error (MAPE) (\%) of $r$ Estimation.  (\# dummy = 0)}
\label{fig:mse_r_estimation}   
\end{table*}

We find that GLADIUS performs much better than non-Oracle baselines and performs at least on par with, or slightly better than, Oracle baselines. A natural question here is: why do the Oracle baselines that leverage the exact transition function and the precise linear parametrization not beat our approach? The main reason for this is the imbalance of state-action distribution from expert policy: (See Table \ref{tab:r_a0_1000} and Web Appendix $\S$\ref{sec:AppendixBus}) 
\begin{enumerate}[leftmargin=*]
    \item All trajectories start from mileage 1. In addition, the replacement action (action 0) resets the mileage to 1. Therefore, most states observed in the expert data are within mileage 1-5. When data collection policy visits a few states much more frequently than others, ``the use of a projection operator onto a space of function approximators with respect to a distribution induced by the behavior policy can result in poor performance if that distribution does not sufficiently cover the state space.'' \citep{tsitsiklis1996analysis} This makes Oracle baseline predictions for states with mileage 1-5 slightly worse than GLADIUS.
    \item Since we evaluate MAPE on the police played in the data, this implies that our evaluation mostly samples mileages 1--5, and GLADIUS's weakness in extrapolation for mileage 6-10 matters less than the slight imprecision of oracle parametric methods in mileages 1--5.
\end{enumerate}
\begin{table*}[h!]
    \centering
    \scalebox{0.75}{
    \begin{tabular}{c|cccccccccc}
        \toprule
      Mileages & 1 & 2 & 3 & 4 & 5 & 6 & 7 & 8 & 9 & 10 \\
        \midrule
        Frequency  & 7994 & 1409 & 1060 & 543 & 274 & 35 & 8 & 1 & 0 & 0 \\
        \midrule
         True reward  & -1.000 & -2.000 & -3.000 & -4.000 & -5.000 & -6.000 & -7.000 & -8.000 & -9.000 & -10.000 \\
        ML-IRL  & -1.013 & -2.026 & -3.039 & -4.052 & -5.065 & -6.078 & -7.091 & -8.104 & -9.117 & -10.130 \\
        Rust & -1.012 & -2.023 & -3.035 & -4.047 & -5.058 & -6.070 & -7.082 & -8.093 & -9.105 & -10.117 \\
       \textbf{\ul{GLADIUS}}  & -1.000 & -1.935 & -2.966 & -3.998 & -4.966 & -5.904 & -6.769 & -7.633 & -8.497 & -9.361 \\
        \bottomrule
    \end{tabular}
    }
    \caption{Estimated rewards and frequency values for 1,000 trajectories for action 0.}
    \label{tab:r_a0_1000}
\end{table*}

\noindent Finally, it is not surprising to see IQ-learn and BC underperform in the reward function estimation task since they do not require/ensure that the Bellman condition holds. See Appendix \ref{sec:imitation} for a detailed discussion. 

\subsubsection{Performance results for the high-dimensional set-up. }

Figure \ref{fig:dummy_r_estimation} (below) presents high-dimensional experiments, where states were appended with dummy variables. Each dummy variable is of dimension 20. Note that a state space of dimensionality $20^{10}$ (10 dummy variables with 20 possible values each) is equivalent to $10^{13}$, which is infeasible for existing exact methods (e.g., Rust) and methods that require transition probability estimation (e.g., ML-IRL). Therefore, we only present comparisons to the non-oracle methods.

\begin{figure}[h]
    \centering
    \begin{minipage}{0.48\textwidth} 
        \centering
        \scalebox{0.8}{
        \begin{tabular}{lcccc}
            \toprule
            \multirow{2}{*}{\parbox{1.3cm}{Dummy\\ Variables}} & 
            \multicolumn{1}{c}{\textbf{\ul{GLADIUS}}} & 
            \multicolumn{1}{c}{SAmQ} & 
            \multicolumn{1}{c}{IQ-learn} & 
            \multicolumn{1}{c}{BC} \\
            \cmidrule(lr){2-5}
            & MAPE (SE) & MAPE (SE) & MAPE (SE) & MAPE (SE) \\
            \midrule
             2    & \textbf{1.24} (0.45) & 1.79 (0.37) & 112.0 (14.8) & 150.9 (29.1) \\
             5    & \textbf{2.51} (1.19) & 2.77 (0.58)  & 192.2 (19.2) & 171.1 (37.3) \\
             20   & \textbf{6.07} (3.25) & N/A  & 180.1 (15.6) & 180.0 (33.7) \\
             50   & \textbf{9.76} (3.68) & N/A  & 282.2 (25.2) & 205.1 (35.3) \\
             100  & \textbf{11.35} (4.24) & N/A  & 321.1 (23.1) & 288.8 (42.9) \\
            \bottomrule
            \multicolumn{5}{l}{\footnotesize Based on 10 repetitions. For SAmQ, N/A means that the algorithm did not scale.} \\
        \end{tabular}
        }
    \end{minipage}
    \hfill
    \begin{minipage}{0.48\textwidth} 
        \centering
        \begin{tikzpicture}
        \begin{semilogxaxis}[
            width=6.5cm, 
            height=5cm,
            xlabel={\# Dummy Variables (log scale)},
            ylabel={MAPE of $r$ Estimation},
            title={GLADIUS Performance},
            grid=both,
            xtick={2, 5, 20, 50, 100},
            xticklabels={2, 5, 20, 50, 100},
            ytick={0, 5, 10, 15},
            yticklabels={0, 5, 10, 15},
            mark options={solid},
            legend pos=north east,
            error bars/y explicit,
            error bars/error bar style={line width=1pt, dashed}
        ]

        \addplot+[mark=o, blue, thick, error bars/.cd, y dir=both, y explicit] coordinates {
            (2, 1.24) +- (0, 0.45)
            (5, 2.51) +- (0, 1.19)
            (20, 6.07) +- (0, 3.25)
            (50, 9.76) +- (0, 3.68)
            (100, 11.35) +- (0, 4.24)
        };
        \addlegendentry{GLADIUS (Ours)}

        \end{semilogxaxis}
        \end{tikzpicture}
    \end{minipage}
    \caption{MAPE of $r$ estimation. The left panel shows the MAPE values in a tabular format, and the right panel visualizes the GLADIUS's performance on a log-scaled x-axis. 1000 trajectories were used for all experiments. Smaller is better; the best value in each row is highlighted.}
    \label{fig:dummy_r_estimation}
\end{figure}
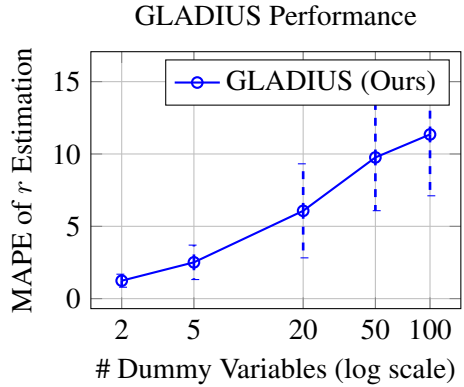
We find that our approach outperforms benchmark algorithms, including SAmQ, IQ‑learn, and BC (see Figure~\ref{fig:dummy_r_estimation}). Moreover, as illustrated in the right panel of Figure~\ref{fig:dummy_r_estimation}, the MAPE grows sub‑linearly with the size of the state space (note the logarithmic \(x\)-axis). Remarkably, even in the extreme setting with \(K=100\) dummy variables—an astronomically large discrete state space of \(20^{100}\!\approx\!10^{130}\) distinct configurations—GLADIUS’s MAPE rises only to about~10\%. This limited loss in precision due to such a massive dimensional expansion underscores the method’s robustness and practical scalability to very high-dimensional applications.


\section{Imitation Learning experiments}
\label{sec:imitation}

One of the key contributions of this paper is the characterization of the relationship between imitation learning (IL) and inverse reinforcement learning (IRL)/Dynamic Discrete Choice (DDC) model, particularly through the ERM-IRL/DDC framework. Given that much of the IRL literature has historically focused on providing experimental results for IL tasks, we conduct a series of experiments to empirically validate our theoretical findings. Specifically, we aim to test our
prediction in Section \ref{sec:ERM-IRL} that \textit{behavioral cloning (BC) should outperform ERM-IRL for IL tasks}, as BC directly optimizes the negative log-likelihood objective without the additional complexity of Bellman error minimization. By comparing BC and ERM-IRL across various IL benchmark tasks, we demonstrate that BC consistently achieves better performance in terms of both computational efficiency and policy accuracy, reinforcing our claim that IL is a strictly easier problem than IRL.

\subsection{Experimental Setup}
As in \citet{garg2021iq}, we employ three OpenAI Gym environments for algorithms with discrete actions \citep{brockman2016openai}: Lunar Lander v2, Cartpole v1, and Acrobot v1. These environments are widely used in IL and RL research, providing well-defined optimal policies and performance metrics. 

\noindent \textbf{Dataset.}  
For each environment, we generate expert demonstrations using a pre-trained policy. We use publicly available expert policies\footnote{\url{https://huggingface.co/sb3/}} trained via Proximal Policy Optimization (PPO) \cite{schulman2017proximal}, as implemented in the Stable-Baselines3 library \citep{raffin2021stable}. Each expert policy is run to generate demonstration trajectories, and we vary the number of expert trajectories across experiments for training. For all experiments, we used the expert policy demonstration data from 10 episodes for testing.

\noindent \textbf{Performance Metric.}  
The primary evaluation metric is \% optimality, defined as:
\begin{align}
    \text{\% optimality of an episode} := \frac{\text{One episode's episodic reward of the algorithm}}{\text{Mean of 1,000 episodic rewards of the expert}} \times 100. \notag
\end{align}
For each baseline, we report the mean and standard deviation of 100 evaluation episodes after training. A higher \% optimality indicates that the algorithm's policy closely matches the expert. The 1000-episodic mean and standard deviation ([mean$\pm$std]) of the episodic reward of expert policy for each environment was $[232.77\pm73.77]$ for Lunar-Lander v2 (larger the better), $[-82.80\pm27.55]$ for Acrobot v1 (smaller the better), and $[500\pm 0]$ for Cartpole v1 (larger the better).

\noindent \textbf{Training Details.}  
All algorithms were trained for 5,000 epochs. Since our goal in this experiment is to show the superiority of BC for IL tasks, we only include ERM-IRL and IQ-learn \cite{garg2021iq} as baselines. Specifically, we exclude baselines such as Rust \citep{rust1987optimal} and ML-IRL \citep{zeng2023understanding}, which require explicit estimation of transition probabilities.

\subsection{Experiment results}



Table \ref{fig:OpenAI_gym} presents the \% optimality results for Lunar Lander v2, Cartpole v1, and Acrobot v1. As predicted in our theoretical analysis, BC consistently outperforms ERM-IRL in terms of \% optimality, validating our theoretical claims.

\begin{table*}[ht]
    \centering
    \scalebox{0.85}{
    \begin{tabular}{l
            >{\centering\arraybackslash}p{1.5cm}
            >{\centering\arraybackslash}p{1.5cm}
            >{\centering\arraybackslash}p{1.5cm}
            >{\centering\arraybackslash}p{1.5cm}
            >{\centering\arraybackslash}p{1.5cm}
            >{\centering\arraybackslash}p{1.5cm}
            >{\centering\arraybackslash}p{1.5cm}
            >{\centering\arraybackslash}p{1.5cm}
            >{\centering\arraybackslash}p{1.5cm}}
\toprule
\multirow{4}{*}{\parbox{0.5cm}{Trajs}} 
& \multicolumn{3}{c}{\makecell{\ul{Lunar Lander v2 (\%)} \\ (Larger \% the better)}} 
& \multicolumn{3}{c}{\makecell{\ul{Cartpole v1 (\%)} \\ (Larger \% the better)}} 
& \multicolumn{3}{c}{\makecell{\ul{Acrobot v1 (\%)} \\ (Smaller \% the better)}} \\
\cmidrule(lr){2-4} \cmidrule(lr){5-7} \cmidrule(lr){8-10}
& {\ul{Gladius}} & IQ-learn & BC 
& {\ul{Gladius}} & IQ-learn & BC
& {\ul{Gladius}} & IQ-learn & BC \\
\midrule
1  & \textbf{107.30 }& 83.78 & 103.38   & 100.00  & 100.00  & 100.00     & 103.67  & 103.47  & \textbf{100.56}  \\
    & (10.44)  & (22.25)  & (13.78)   & (0.00)  & (0.00)  & (0.00)   & (32.78)  & (55.44)  & (26.71)  \\
\midrule
3   & \textbf{106.64}  &  102.44 & 104.46 & 100.00  & 100.00  & 100.00 & 102.19  & 101.28  & \textbf{101.25}  \\
    & (11.11)  & (20.66)   & (11.57)  & (0.00)  & (0.00)  & (0.00)  & (22.69)  & (37.51)  & (36.42)  \\
\midrule
7   &  101.10 & 104.91  & \textbf{ 105.99}  & 100.00  & 100.00  & 100.00 & 100.67 & 100.58  &\textbf{98.08} \\
    & (16.28)  & (13.98) &  (10.20) & (0.00)         & (0.00)  & (0.00)  & (22.30)      & (30.09)  &  (24.27)  \\
\midrule
10  & 104.46  & 105.13   & \textbf{ 107.01}   & 100.00  & 100.00  & 100.00  & 99.07  & 101.10 &  \textbf{97.75}\\
    & (13.65)  & (13.83)   & (10.75)  & (0.00)           & (0.00)  &  (0.00) &  (20.58)    & (30.40)  & (16.67)  \\
\midrule
15  & 106.11  &  106.51 & \textbf{107.42}  & 100.00  & 100.00  &100.00 & 96.50  & 95.34  & \textbf{95.33}  \\
    & (10.65)  & (14.10)  & (10.45)  & (0.00)         & (0.00)  &  (0.00)  & (18.53)        & (26.92)  & (15.42)  \\
\bottomrule
\multicolumn{10}{l}{\footnotesize 
Based on 100 episodes for each baseline. Each baseline was trained for 5000 epochs.}
\end{tabular}
    }
\caption{Mean and standard deviation of \% optimality of 100 episodes}
\label{fig:OpenAI_gym}   
\end{table*}

\section{Conclusion}
In this paper, we propose a provably globally convergent empirical risk minimization framework that combines non-parametric estimation methods (e.g., machine learning methods) with IRL/DDC models. This method's convergence to global optima stems from our new theoretical finding that the Bellman error (i.e., Bellman residual) satisfies the Polyak-Łojasiewicz (PL) condition, which is a weaker but almost equally useful condition as strong convexity for providing theoretical assurances. 

The three key advantages of our method are: (1) it is easily applicable to high-dimensional state spaces, (2) it can operate without the knowledge of (or requiring the estimation of) state-transition probabilities, and (3) it is applicable to infinite state spaces. These three properties make our algorithm practically applicable and useful in high-dimensional, infinite-size state and action spaces that are common in business and economics applications. We demonstrate our approach's empirical performance through extensive simulation experiments (covering both low and high-dimensional settings). We find that, on average, our method performs quite well in recovering rewards in both low and high-dimensional settings. Further, it has better/on-par performance compared to other benchmark algorithms in this area (including algorithms that assume the parametric form of the reward function and knowledge of state transition probabilities) and is able to recover rewards even in settings where other algorithms are not viable.



\bibliographystyle{plainnat}
\bibliography{bib}

\newpage
\begin{appendices}
\setcounter{table}{0}
\setcounter{figure}{0}
\setcounter{equation}{0}
\setcounter{page}{0}
\renewcommand{\thetable}{A\arabic{table}}
\renewcommand{\thefigure}{A\arabic{figure}}
\renewcommand{\theequation}{A\arabic{equation}}
\renewcommand{\thepage}{\roman{page}}
\pagenumbering{roman}

\section{Extended experiment discussions}
\label{sec:ExtendedExp}
\subsection{More discussions on Bus engine replacement experiments}\label{sec:AppendixBus}

Figure \ref{fig:error50}, \ref{fig:error1000} and
Table \ref{tab:r_traj_50} - \ref{tab:q_traj_1000} shown below present the estimated results for reward and $Q^\ast$ using 50 trajectories (5,000 transitions) and 1,000 trajectories (100,000 transitions). As you can see in Figure \ref{fig:error50} and \ref{fig:error1000}, Rust and ML-IRL , which know the exact transition probabilities and employ correct parameterization (i.e., linear), demonstrate strong extrapolation capabilities for [Mileage, action] pairs that are rarely observed or entirely missing from the dataset (mileage 6-10). In contrast, GLADIUS, a neural network-based method, struggles with these underrepresented pairs.
\medskip
\noindent However, as we saw in the main text's Table \ref{fig:mse_r_estimation}, GLADIUS achieves par or lower Mean Absolute Percentage Error (MAPE), which is defined as 
$
\frac{1}{N} \sum_{i=1}^N\left|\frac{\hat{r}_i-r_i}{r_i}\right| \times 100$
where $N$ is the total number of samples from expert policy $\pi^*$ and $\hat{r}_i$ is each algorithm's estimator for the true reward $r_i$. This is because it overall outperforms predicting $r$ values for the [Mileage, action] pairs that appear most frequently and therefore contribute most significantly to the error calculation, as indicated by the visibility of the yellow shading in the tables below. (Higher visibility implies larger frequency.)
\begin{center}
   \textbf{ Results for 50 trajectories (absolute error plot, $r$ prediction, $Q^\ast$ prediction)}
\end{center}
\begin{figure}[h!]
    \centering
    \includegraphics[width=1\linewidth]{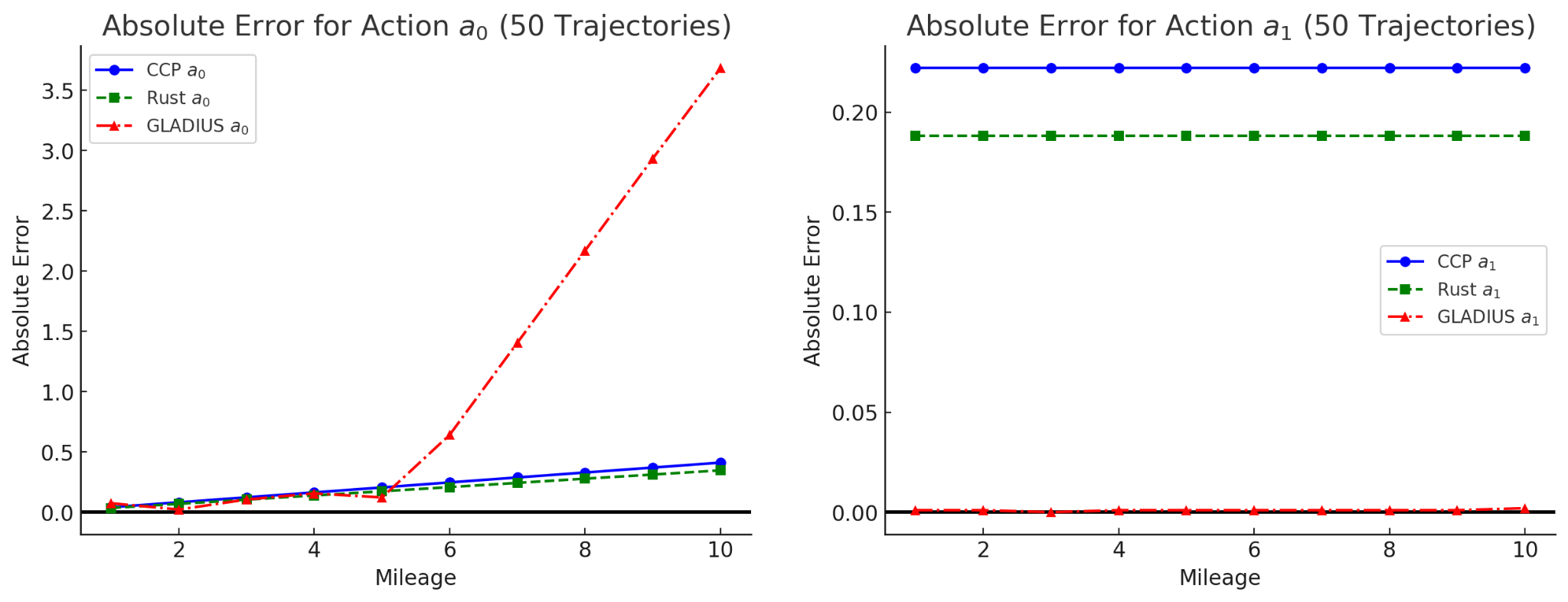}
    \caption{Reward estimation error comparison for 50 trajectories. Closer to 0 (black line) is better.}
    \label{fig:error50}
\end{figure}

\begin{table*}[h]
    \centering
    \scalebox{0.85}{
    \begin{tabular}{lcc|cccccccc}
        \toprule
        \multirow{2}{*}{Mileage} & \multicolumn{2}{c|}{Frequency} & \multicolumn{2}{c}{Ground Truth $r$} & \multicolumn{2}{c}{ML-IRL} & \multicolumn{2}{c}{Rust} & \multicolumn{2}{c}{GLADIUS} \\
        \cmidrule(lr){2-3} \cmidrule(lr){4-5} \cmidrule(lr){6-7} \cmidrule(lr){8-9} \cmidrule(lr){10-11}
        & $a_0$ & $a_1$ & $a_0$ & $a_1$ & $a_0$ & $a_1$ & $a_0$ & $a_1$ & $a_0$ & $a_1$ \\
        \midrule
                1  & \cellcolor{yellow!100} 412  & \cellcolor{yellow!5} 37  & \cellcolor{yellow!100} -1.000  & \cellcolor{yellow!5} -5.000  & \cellcolor{yellow!100} -0.959  & \cellcolor{yellow!5} -4.777  & \cellcolor{yellow!100} -0.965  & \cellcolor{yellow!5} -4.812  & \cellcolor{yellow!100} -1.074  & \cellcolor{yellow!5} -4.999 \\
        2  & \cellcolor{yellow!16} 65  & \cellcolor{yellow!2} 18  & \cellcolor{yellow!16} -2.000  & \cellcolor{yellow!2} -5.000  & \cellcolor{yellow!16} -1.918  & \cellcolor{yellow!2} -4.777  & \cellcolor{yellow!16} -1.931  & \cellcolor{yellow!2} -4.812  & \cellcolor{yellow!16} -1.978  & \cellcolor{yellow!2} -5.001 \\
        3  & \cellcolor{yellow!10} 43  & \cellcolor{yellow!19} 80  & \cellcolor{yellow!10} -3.000  & \cellcolor{yellow!19} -5.000  & \cellcolor{yellow!10} -2.877  & \cellcolor{yellow!19} -4.777  & \cellcolor{yellow!10} -2.896  & \cellcolor{yellow!19} -4.812  & \cellcolor{yellow!10} -3.105  & \cellcolor{yellow!19} -5.000 \\
        4  & \cellcolor{yellow!6} 24  & \cellcolor{yellow!25} 101  & \cellcolor{yellow!6} -4.000  & \cellcolor{yellow!25} -5.000  & \cellcolor{yellow!6} -3.836  & \cellcolor{yellow!25} -4.777  & \cellcolor{yellow!6} -3.861  & \cellcolor{yellow!25} -4.812  & \cellcolor{yellow!6} -3.844  & \cellcolor{yellow!25} -5.001 \\
        5  & \cellcolor{yellow!3} 8  & \cellcolor{yellow!33} 134  & \cellcolor{yellow!3} -5.000  & \cellcolor{yellow!33} -5.000  & \cellcolor{yellow!3} -4.795  & \cellcolor{yellow!33} -4.777  & \cellcolor{yellow!3} -4.827  & \cellcolor{yellow!33} -4.812  & \cellcolor{yellow!3} -4.878  & \cellcolor{yellow!33} -5.001 \\
        6  & \cellcolor{yellow!2} 4  & \cellcolor{yellow!5} 37  & \cellcolor{yellow!2} -6.000  & \cellcolor{yellow!5} -5.000  & \cellcolor{yellow!2} -5.753  & \cellcolor{yellow!5} -4.777  & \cellcolor{yellow!2} -5.792  & \cellcolor{yellow!5} -4.812  & \cellcolor{yellow!2} -6.642  & \cellcolor{yellow!5} -5.001 \\
        7  & \cellcolor{yellow!1} 1  & \cellcolor{yellow!4} 26  & \cellcolor{yellow!1} -7.000  & \cellcolor{yellow!4} -5.000  & \cellcolor{yellow!1} -6.712  & \cellcolor{yellow!4} -4.777  & \cellcolor{yellow!1} -6.757  & \cellcolor{yellow!4} -4.812  & \cellcolor{yellow!1} -8.406  & \cellcolor{yellow!4} -5.001 \\
        8  & \cellcolor{yellow!0} 0  & \cellcolor{yellow!1} 7  & \cellcolor{yellow!0} -8.000  & \cellcolor{yellow!1} -5.000  & \cellcolor{yellow!0} -7.671  & \cellcolor{yellow!1} -4.777  & \cellcolor{yellow!0} -7.722  & \cellcolor{yellow!1} -4.812  & \cellcolor{yellow!0} -10.170  & \cellcolor{yellow!1} -5.001 \\
        9  & \cellcolor{yellow!0} 0  & \cellcolor{yellow!0} 2  & \cellcolor{yellow!0} -9.000  & \cellcolor{yellow!0} -5.000  & \cellcolor{yellow!0} -8.630  & \cellcolor{yellow!0} -4.777  & \cellcolor{yellow!0} -8.688  & \cellcolor{yellow!0} -4.812  & \cellcolor{yellow!0} -11.934  & \cellcolor{yellow!0} -5.001 \\
        10 & \cellcolor{yellow!0} 0  & \cellcolor{yellow!0} 1  & \cellcolor{yellow!0} -10.000  & \cellcolor{yellow!0} -5.000  & \cellcolor{yellow!0} -9.589  & \cellcolor{yellow!0} -4.777  & \cellcolor{yellow!0} -9.653  & \cellcolor{yellow!0} -4.812  & \cellcolor{yellow!0} -13.684  & \cellcolor{yellow!0} -5.002 \\

        \bottomrule
    \end{tabular}
    }
    \caption{Reward estimation for 50 trajectories. Color indicates appearance frequencies.}
    \label{tab:r_traj_50}
\end{table*}

\begin{table*}[h!]
    \centering
    \scalebox{0.85}{
    \begin{tabular}{lcc|cccccccc}
        \toprule
        \multirow{2}{*}{Mileage} & \multicolumn{2}{c|}{Frequency} & \multicolumn{2}{c}{Ground Truth Q} & \multicolumn{2}{c}{ML-IRL Q} & \multicolumn{2}{c}{Rust Q} & \multicolumn{2}{c}{GLADIUS Q} \\
        \cmidrule(lr){2-3} \cmidrule(lr){4-5} \cmidrule(lr){6-7} \cmidrule(lr){8-9} \cmidrule(lr){10-11}
        & $a_0$ & $a_1$ & $a_0$ & $a_1$ & $a_0$ & $a_1$ & $a_0$ & $a_1$ & $a_0$ & $a_1$ \\
        \midrule
        1  & \cellcolor{yellow!100} 412  & \cellcolor{yellow!28} 37  & \cellcolor{yellow!100} -52.534  & \cellcolor{yellow!28} -54.815  & \cellcolor{yellow!100} -49.916  & \cellcolor{yellow!28} -52.096  & \cellcolor{yellow!100} -50.327  & \cellcolor{yellow!28} -52.523  & \cellcolor{yellow!100} -53.059  & \cellcolor{yellow!28} -55.311 \\
        2  & \cellcolor{yellow!16} 65  & \cellcolor{yellow!13} 18  & \cellcolor{yellow!16} -53.834  & \cellcolor{yellow!13} -54.815  & \cellcolor{yellow!16} -51.165  & \cellcolor{yellow!13} -52.096  & \cellcolor{yellow!16} -51.584  & \cellcolor{yellow!13} -52.523  & \cellcolor{yellow!16} -54.270  & \cellcolor{yellow!13} -55.312 \\
        3  & \cellcolor{yellow!10} 43  & \cellcolor{yellow!60} 80  & \cellcolor{yellow!10} -54.977  & \cellcolor{yellow!60} -54.815  & \cellcolor{yellow!10} -52.266  & \cellcolor{yellow!60} -52.096  & \cellcolor{yellow!10} -52.691  & \cellcolor{yellow!60} -52.523  & \cellcolor{yellow!10} -55.548  & \cellcolor{yellow!60} -55.312 \\
        4  & \cellcolor{yellow!6} 24  & \cellcolor{yellow!75} 101  & \cellcolor{yellow!6} -56.037  & \cellcolor{yellow!75} -54.815  & \cellcolor{yellow!6} -53.286  & \cellcolor{yellow!75} -52.096  & \cellcolor{yellow!6} -53.718  & \cellcolor{yellow!75} -52.523  & \cellcolor{yellow!6} -56.356  & \cellcolor{yellow!75} -55.312 \\
        5  & \cellcolor{yellow!2} 8  & \cellcolor{yellow!100} 134  & \cellcolor{yellow!2} -57.060  & \cellcolor{yellow!100} -54.815  & \cellcolor{yellow!2} -54.270  & \cellcolor{yellow!100} -52.096  & \cellcolor{yellow!2} -54.708  & \cellcolor{yellow!100} -52.523  & \cellcolor{yellow!2} -57.419  & \cellcolor{yellow!100} -55.312 \\
        6  & \cellcolor{yellow!1} 4  & \cellcolor{yellow!28} 37  & \cellcolor{yellow!1} -58.069  & \cellcolor{yellow!28} -54.815  & \cellcolor{yellow!1} -55.239  & \cellcolor{yellow!28} -52.096  & \cellcolor{yellow!1} -55.683  & \cellcolor{yellow!28} -52.523  & \cellcolor{yellow!1} -59.184  & \cellcolor{yellow!28} -55.312 \\
        7  & \cellcolor{yellow!0} 1  & \cellcolor{yellow!19} 26  & \cellcolor{yellow!0} -59.072  & \cellcolor{yellow!19} -54.815  & \cellcolor{yellow!0} -56.202  & \cellcolor{yellow!19} -52.096  & \cellcolor{yellow!0} -56.652  & \cellcolor{yellow!19} -52.523  & \cellcolor{yellow!0} -60.950  & \cellcolor{yellow!19} -55.312 \\
        8  & \cellcolor{yellow!0} 0  & \cellcolor{yellow!5} 7  & \cellcolor{yellow!0} -60.074  & \cellcolor{yellow!5} -54.815  & \cellcolor{yellow!0} -57.162  & \cellcolor{yellow!5} -52.096  & \cellcolor{yellow!0} -57.619  & \cellcolor{yellow!5} -52.523  & \cellcolor{yellow!0} -62.715  & \cellcolor{yellow!5} -55.312 \\
        9  & \cellcolor{yellow!0} 0  & \cellcolor{yellow!1} 2  & \cellcolor{yellow!0} -61.074  & \cellcolor{yellow!1} -54.815  & \cellcolor{yellow!0} -58.122  & \cellcolor{yellow!1} -52.096  & \cellcolor{yellow!0} -58.585  & \cellcolor{yellow!1} -52.523  & \cellcolor{yellow!0} -64.481  & \cellcolor{yellow!1} -55.312 \\
        10 & \cellcolor{yellow!0} 0  & \cellcolor{yellow!1} 1  & \cellcolor{yellow!0} -62.074  & \cellcolor{yellow!1} -54.815  & \cellcolor{yellow!0} -59.081  & \cellcolor{yellow!1} -52.096  & \cellcolor{yellow!0} -59.550  & \cellcolor{yellow!1} -52.523  & \cellcolor{yellow!0} -66.228  & \cellcolor{yellow!1} -55.308 \\
        \bottomrule
    \end{tabular}
    }
    \caption{$Q^\ast$ estimation for 50 trajectories. Color indicates appearance frequencies.}
    \label{tab:q_traj_50}
\end{table*}

\begin{center}
   \textbf{ Results for 1000 trajectories (absolute error plot, $r$ prediction, $Q^\ast$ prediction)}
\end{center}

\begin{figure}[h!]
    \centering
    \includegraphics[width=1\linewidth]{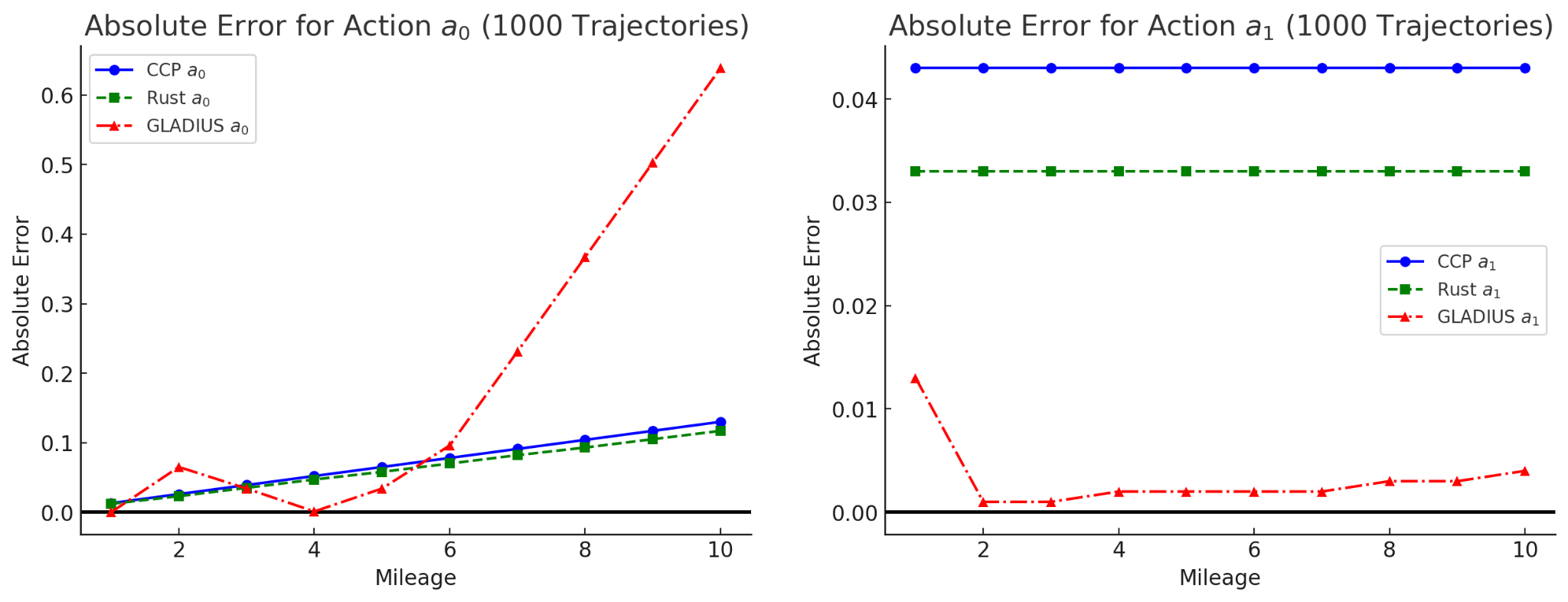}
    \caption{Reward estimation error comparison for 1,000 trajectories. Closer to 0 (black line) is better.}
    \label{fig:error1000}
\end{figure}

\begin{table*}[h!]
    \centering
    \scalebox{0.85}{
    \begin{tabular}{lcc|cccccccc}
        \toprule
        \multirow{2}{*}{Mileage} & \multicolumn{2}{c|}{Frequency} & \multicolumn{2}{c}{Ground Truth $r$} & \multicolumn{2}{c}{ML-IRL } & \multicolumn{2}{c}{Rust} & \multicolumn{2}{c}{GLADIUS} \\
        \cmidrule(lr){2-3} \cmidrule(lr){4-5} \cmidrule(lr){6-7} \cmidrule(lr){8-9} \cmidrule(lr){10-11}
        & $a_0$ & $a_1$ & $a_0$ & $a_1$ & $a_0$ & $a_1$ & $a_0$ & $a_1$ & $a_0$ & $a_1$ \\
        \midrule
           1  & \cellcolor{yellow!100} 7994  & \cellcolor{yellow!10} 804  & \cellcolor{yellow!100} -1.000  & \cellcolor{yellow!10} -5.000  & \cellcolor{yellow!100} -1.013  & \cellcolor{yellow!10} -5.043  & \cellcolor{yellow!100} -1.012  & \cellcolor{yellow!10} -5.033  & \cellcolor{yellow!100} -1.000  & \cellcolor{yellow!10} -5.013 \\
        2  & \cellcolor{yellow!18} 1409  & \cellcolor{yellow!7} 541  & \cellcolor{yellow!18} -2.000  & \cellcolor{yellow!7} -5.000  & \cellcolor{yellow!18} -2.026  & \cellcolor{yellow!7} -5.043  & \cellcolor{yellow!18} -2.023  & \cellcolor{yellow!7} -5.033  & \cellcolor{yellow!18} -1.935  & \cellcolor{yellow!7} -5.001 \\
        3  & \cellcolor{yellow!13} 1060  & \cellcolor{yellow!16} 1296  & \cellcolor{yellow!13} -3.000  & \cellcolor{yellow!16} -5.000  & \cellcolor{yellow!13} -3.039  & \cellcolor{yellow!16} -5.043  & \cellcolor{yellow!13} -3.035  & \cellcolor{yellow!16} -5.033  & \cellcolor{yellow!13} -2.966  & \cellcolor{yellow!16} -5.000 \\
        4  & \cellcolor{yellow!7} 543  & \cellcolor{yellow!25} 1991  & \cellcolor{yellow!7} -4.000  & \cellcolor{yellow!25} -5.000  & \cellcolor{yellow!7} -4.052  & \cellcolor{yellow!25} -5.043  & \cellcolor{yellow!7} -4.047  & \cellcolor{yellow!25} -5.033  & \cellcolor{yellow!7} -3.998  & \cellcolor{yellow!25} -5.002 \\
        5  & \cellcolor{yellow!3} 274  & \cellcolor{yellow!30} 2435  & \cellcolor{yellow!3} -5.000  & \cellcolor{yellow!30} -5.000  & \cellcolor{yellow!3} -5.065  & \cellcolor{yellow!30} -5.043  & \cellcolor{yellow!3} -5.058  & \cellcolor{yellow!30} -5.033  & \cellcolor{yellow!3} -4.966  & \cellcolor{yellow!30} -5.002 \\
        6  & \cellcolor{yellow!0} 35  & \cellcolor{yellow!10} 829  & \cellcolor{yellow!0} -6.000  & \cellcolor{yellow!10} -5.000  & \cellcolor{yellow!0} -6.078  & \cellcolor{yellow!10} -5.043  & \cellcolor{yellow!0} -6.070  & \cellcolor{yellow!10} -5.033  & \cellcolor{yellow!0} -5.904  & \cellcolor{yellow!10} -5.002 \\
        7  & \cellcolor{yellow!0} 8  & \cellcolor{yellow!6} 476  & \cellcolor{yellow!0} -7.000  & \cellcolor{yellow!6} -5.000  & \cellcolor{yellow!0} -7.091  & \cellcolor{yellow!6} -5.043  & \cellcolor{yellow!0} -7.082  & \cellcolor{yellow!6} -5.033  & \cellcolor{yellow!0} -6.769  & \cellcolor{yellow!6} -5.002 \\
        8  & \cellcolor{yellow!0} 1  & \cellcolor{yellow!3} 218  & \cellcolor{yellow!0} -8.000  & \cellcolor{yellow!3} -5.000  & \cellcolor{yellow!0} -8.104  & \cellcolor{yellow!3} -5.043  & \cellcolor{yellow!0} -8.093  & \cellcolor{yellow!3} -5.033  & \cellcolor{yellow!0} -7.633  & \cellcolor{yellow!3} -5.003 \\
        9  & \cellcolor{yellow!0} 0  & \cellcolor{yellow!1} 73  & \cellcolor{yellow!0} -9.000  & \cellcolor{yellow!1} -5.000  & \cellcolor{yellow!0} -9.117  & \cellcolor{yellow!1} -5.043  & \cellcolor{yellow!0} -9.105  & \cellcolor{yellow!1} -5.033  & \cellcolor{yellow!0} -8.497  & \cellcolor{yellow!1} -5.003 \\
        10 & \cellcolor{yellow!0} 0  & \cellcolor{yellow!0} 10  & \cellcolor{yellow!0} -10.000  & \cellcolor{yellow!0} -5.000  & \cellcolor{yellow!0} -10.130  & \cellcolor{yellow!0} -5.043  & \cellcolor{yellow!0} -10.117  & \cellcolor{yellow!0} -5.033  & \cellcolor{yellow!0} -9.361  & \cellcolor{yellow!0} -5.004 \\
        \bottomrule
    \end{tabular}
    }
    \caption{Reward estimation for 1,000 trajectories. Color indicates appearance frequencies.}
    \label{tab:r_traj_1000}
\end{table*}

\begin{table*}[h!]
    \centering
    \scalebox{0.85}{
    \begin{tabular}{lcc|cccccccc}
        \toprule
        \multirow{2}{*}{Mileage} & \multicolumn{2}{c|}{Frequency} & \multicolumn{2}{c}{Ground Truth Q} & \multicolumn{2}{c}{ML-IRL  Q} & \multicolumn{2}{c}{Rust Q} & \multicolumn{2}{c}{GLADIUS Q} \\
        \cmidrule(lr){2-3} \cmidrule(lr){4-5} \cmidrule(lr){6-7} \cmidrule(lr){8-9} \cmidrule(lr){10-11}
        & $a_0$ & $a_1$ & $a_0$ & $a_1$ & $a_0$ & $a_1$ & $a_0$ & $a_1$ & $a_0$ & $a_1$ \\
        \midrule
        1  & \cellcolor{yellow!100} 7994  & \cellcolor{yellow!33} 804  & \cellcolor{yellow!100} -52.534  & \cellcolor{yellow!33} -54.815  & \cellcolor{yellow!100} -53.110  & \cellcolor{yellow!33} -55.405  & \cellcolor{yellow!100} -53.019  & \cellcolor{yellow!33} -55.309  & \cellcolor{yellow!100} -52.431  & \cellcolor{yellow!33} -54.733 \\
        2  & \cellcolor{yellow!18} 1409  & \cellcolor{yellow!22} 541  & \cellcolor{yellow!18} -53.834  & \cellcolor{yellow!22} -54.815  & \cellcolor{yellow!18} -54.423  & \cellcolor{yellow!22} -55.405  & \cellcolor{yellow!18} -54.330  & \cellcolor{yellow!22} -55.309  & \cellcolor{yellow!18} -53.680  & \cellcolor{yellow!22} -54.720 \\
        3  & \cellcolor{yellow!13} 1060  & \cellcolor{yellow!53} 1296  & \cellcolor{yellow!13} -54.977  & \cellcolor{yellow!53} -54.815  & \cellcolor{yellow!13} -55.578  & \cellcolor{yellow!53} -55.405  & \cellcolor{yellow!13} -55.483  & \cellcolor{yellow!53} -55.309  & \cellcolor{yellow!13} -54.852  & \cellcolor{yellow!53} -54.721 \\
        4  & \cellcolor{yellow!7} 543  & \cellcolor{yellow!82} 1991  & \cellcolor{yellow!7} -56.037  & \cellcolor{yellow!82} -54.815  & \cellcolor{yellow!7} -56.649  & \cellcolor{yellow!82} -55.405  & \cellcolor{yellow!7} -56.554  & \cellcolor{yellow!82} -55.309  & \cellcolor{yellow!7} -55.942  & \cellcolor{yellow!82} -54.721 \\
        5  & \cellcolor{yellow!3} 274  & \cellcolor{yellow!100} 2435  & \cellcolor{yellow!3} -57.060  & \cellcolor{yellow!100} -54.815  & \cellcolor{yellow!3} -57.684  & \cellcolor{yellow!100} -55.405  & \cellcolor{yellow!3} -57.588  & \cellcolor{yellow!100} -55.309  & \cellcolor{yellow!3} -56.932  & \cellcolor{yellow!100} -54.721 \\
        6  & \cellcolor{yellow!0} 35  & \cellcolor{yellow!34} 829  & \cellcolor{yellow!0} -58.069  & \cellcolor{yellow!34} -54.815  & \cellcolor{yellow!0} -58.705  & \cellcolor{yellow!34} -55.405  & \cellcolor{yellow!0} -58.608  & \cellcolor{yellow!34} -55.309  & \cellcolor{yellow!0} -57.886  & \cellcolor{yellow!34} -54.721 \\
        7  & \cellcolor{yellow!0} 8  & \cellcolor{yellow!20} 476  & \cellcolor{yellow!0} -59.072  & \cellcolor{yellow!20} -54.815  & \cellcolor{yellow!0} -59.721  & \cellcolor{yellow!20} -55.405  & \cellcolor{yellow!0} -59.623  & \cellcolor{yellow!20} -55.309  & \cellcolor{yellow!0} -58.745  & \cellcolor{yellow!20} -54.721 \\
        8  & \cellcolor{yellow!0} 1  & \cellcolor{yellow!9} 218  & \cellcolor{yellow!0} -60.074  & \cellcolor{yellow!9} -54.815  & \cellcolor{yellow!0} -60.735  & \cellcolor{yellow!9} -55.405  & \cellcolor{yellow!0} -60.636  & \cellcolor{yellow!9} -55.309  & \cellcolor{yellow!0} -59.604  & \cellcolor{yellow!9} -54.722 \\
        9  & \cellcolor{yellow!0} 0  & \cellcolor{yellow!3} 73  & \cellcolor{yellow!0} -61.074  & \cellcolor{yellow!3} -54.815  & \cellcolor{yellow!0} -61.748  & \cellcolor{yellow!3} -55.405  & \cellcolor{yellow!0} -61.648  & \cellcolor{yellow!3} -55.309  & \cellcolor{yellow!0} -60.463  & \cellcolor{yellow!3} -54.722 \\
        10 & \cellcolor{yellow!0} 0  & \cellcolor{yellow!0} 10  & \cellcolor{yellow!0} -62.074  & \cellcolor{yellow!0} -54.815  & \cellcolor{yellow!0} -62.760  & \cellcolor{yellow!0} -55.405  & \cellcolor{yellow!0} -62.660  & \cellcolor{yellow!0} -55.309  & \cellcolor{yellow!0} -61.322  & \cellcolor{yellow!0} -54.722 \\
        \bottomrule
    \end{tabular}
    }
    \caption{$Q^\ast$ estimation for 1,000 trajectories. Color indicates appearance frequencies.}
    \label{tab:q_traj_1000}
\end{table*}

\newpage


\section{Equivalence between Dynamic Discrete choice and Entropy regularized Inverse Reinforcement learning}\label{sec:DDCIRL}

\subsection{Properties of Type 1 Extreme Value (T1EV) distribution}
Type 1 Extreme Value (T1EV), or Gumbel distribution, has a location parameter and a scale parameter. The T1EV distribution with location parameter $\nu$ and scale parameter 1 is denoted as Gumbel $(\nu, 1)$ and has its CDF, PDF, and mean as follows:
$$
\begin{gathered}
\text { CDF: } F(x ; \nu)=e^{-e^{-(x-\nu)}}
\\
\text { PDF: } f(x ; \nu)=e^{-\left((x-\nu)+e^{-(x-\nu)}\right)}
\\
\text { Mean } = \nu + \gamma
\end{gathered}
$$

Suppose that we are given a set of $N$ independent Gumbel random variables $G_i$, each with their own parameter $\nu_i$, i.e. $G_i \sim \operatorname{Gumbel}\left(\nu_i, 1\right)$.

\begin{lem}\label{lem:GumbelMax}
    Let $Z=\max G_i$. Then $Z \sim \operatorname{Gumbel}\left(\nu_Z=\log \sum_{i} e^{\nu_i}, 1\right)$.
\end{lem}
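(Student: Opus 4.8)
The plan is to work directly with cumulative distribution functions, exploiting the fact that the CDF of a maximum of independent random variables factors as a product. First I would write, using the independence of the $G_i$,
\[
F_Z(z) = \mathbb{P}(Z \le z) = \mathbb{P}\bigl(\max_i G_i \le z\bigr) = \prod_{i=1}^N \mathbb{P}(G_i \le z) = \prod_{i=1}^N e^{-e^{-(z-\nu_i)}},
\]
where the last equality substitutes the Gumbel CDF $F(x;\nu) = e^{-e^{-(x-\nu)}}$ recorded just above the statement.

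The core of the argument is then a short algebraic consolidation of the product into a single exponential. Converting the product of exponentials into the exponential of a sum and factoring out the common $e^{-z}$ term gives
\[
F_Z(z) = \exp\Bigl(-\sum_{i=1}^N e^{-(z-\nu_i)}\Bigr) = \exp\Bigl(-e^{-z}\sum_{i=1}^N e^{\nu_i}\Bigr).
\]
At this point I would introduce the definition $\nu_Z := \log \sum_{i=1}^N e^{\nu_i}$, equivalently $\sum_{i=1}^N e^{\nu_i} = e^{\nu_Z}$, so that $e^{-z}\sum_i e^{\nu_i} = e^{-(z-\nu_Z)}$ and hence
\[
F_Z(z) = \exp\bigl(-e^{-(z-\nu_Z)}\bigr).
\]

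The final step is simply to recognize the right-hand side as the CDF of a $\operatorname{Gumbel}(\nu_Z,1)$ random variable, by comparison with $F(x;\nu)$. Since a distribution is uniquely determined by its CDF, this identifies $Z \sim \operatorname{Gumbel}\bigl(\log\sum_i e^{\nu_i},\,1\bigr)$, which is the claim. There is no genuine obstacle here: the only point requiring any care is the bookkeeping in the algebraic step, namely pulling the $z$-dependent factor $e^{-z}$ outside the sum so that the location parameter emerges cleanly as a log-sum-exp; everything else follows from the factorization of the maximum's CDF and direct pattern-matching against the Gumbel form.
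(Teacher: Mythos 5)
Your proof is correct and is essentially identical to the paper's: both factor the CDF of the maximum into a product of Gumbel CDFs via independence, collapse the product into $\exp\bigl(-e^{-z}\sum_i e^{\nu_i}\bigr)$, and identify the result as the $\operatorname{Gumbel}\bigl(\log\sum_i e^{\nu_i},1\bigr)$ CDF. No gaps; nothing further is needed.
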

\begin{proof}
    $F_Z(x)=\prod_{i} F_{G_i}(x)=\prod_{i} e^{-e^{-\left(x-\nu_i\right)}}=e^{-\sum_{i} e^{-\left(x-\nu_i\right)}}=e^{-e^{-x} \sum_{i} e^{\nu_i}}=e^{-e^{-\left(x-\nu_Z\right)}}$
\end{proof}

\begin{cor}\label{cor:GumbelOptProb}
    $P\left(G_k>\max _{i \neq k} G_i\right)=\frac{e^{\nu_k}}{\sum_{i} e^{\nu_i}}$.
\end{cor}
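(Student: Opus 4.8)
The plan is to compute the probability directly by conditioning on the realized value of $G_k$ and exploiting independence. Writing the event $\{G_k > \max_{i\neq k} G_i\}$ as the requirement that every other $G_i$ falls strictly below $G_k$, I would integrate the density of $G_k$ against the product of the CDFs of the remaining variables,
$$P\Bigl(G_k > \max_{i\neq k} G_i\Bigr) = \int_{-\infty}^{\infty} f(x;\nu_k)\, \prod_{i\neq k} F(x;\nu_i)\, dx,$$
using the PDF and CDF forms of $\operatorname{Gumbel}(\nu_i,1)$ recorded just above the corollary.

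The key algebraic step, which I expect to be the only real subtlety of the argument, is recognizing that all the doubly-exponential factors collapse into one. Since $F(x;\nu_i) = e^{-e^{-x}e^{\nu_i}}$ and the surviving double-exponential piece of $f(x;\nu_k)$ is $e^{-e^{-x}e^{\nu_k}}$, their product over all indices is $e^{-e^{-x}\sum_i e^{\nu_i}}$. Setting $S := \sum_i e^{\nu_i}$ and peeling off the linear factor $e^{-(x-\nu_k)}$ from the density, the integrand simplifies to $e^{\nu_k}\, e^{-x}\, e^{-S e^{-x}}$, reducing the multi-variable product to a single clean one-dimensional integral. Keeping careful track of which exponentials recombine is the main thing to watch here.

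Finally I would evaluate this integral by the substitution $u = e^{-x}$, under which $e^{-x}\,dx = -du$ and the limits become $u\in(0,\infty)$, so the integral becomes $e^{\nu_k}\int_0^\infty e^{-Su}\,du = e^{\nu_k}/S$, which is exactly $e^{\nu_k}/\sum_i e^{\nu_i}$ as claimed. A more conceptual route is available via Lemma \ref{lem:GumbelMax}: that lemma gives $\max_{i\neq k} G_i \sim \operatorname{Gumbel}(\log\sum_{i\neq k} e^{\nu_i}, 1)$, after which the claim reduces to the softmax comparison probability for two independent Gumbels. Since verifying that softmax identity requires essentially the same difference-of-Gumbels calculation, I would present the self-contained direct integration above as the primary proof.
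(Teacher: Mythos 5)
Your proposal is correct and follows essentially the same route as the paper's proof: conditioning on the value of $G_k$, writing the probability as $\int_{-\infty}^{\infty} f(x;\nu_k)\prod_{i\neq k}F(x;\nu_i)\,dx$, collapsing the double-exponential factors into $e^{-e^{-x}\sum_i e^{\nu_i}}$, and evaluating via the substitution $u=e^{-x}$ to obtain $e^{\nu_k}/\sum_i e^{\nu_i}$. The paper performs exactly this direct integration, so there is nothing to add.
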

\begin{proof}
\begin{align}
    P\left(G_k>\max _{i \neq k} G_i\right)&=\int_{-\infty}^{\infty} P\left(\max _{i \neq k} G_i<x\right) f_{G_{k}}(x) d x\notag
    \\&=\int_{-\infty}^{\infty} e^{-\sum_{i \neq k} e^{-\left(x-\nu_i\right)}} e^{-\left(x-\nu_k\right)} e^{-e^{-\left(x-\nu_k\right)}} d x \notag
    \\&=e^{\nu_k} \int_{-\infty}^{\infty} e^{-e^{-x} \sum_{i} e^{\nu_i}} e^{-x} d x \notag
    \\&=e^{\nu_k}\int_{\infty}^0 e^{-u S} u\left(-\frac{d u}{u}\right)  \; \; \left(\text{Let } \sum_{i} e^{\nu_i}=S, u=e^{-x}\right)\notag
    \\&=e^{\nu_k}\int_0^{\infty} e^{-u S} d u=e^{\nu_k}\left[-\frac{1}{S} e^{-u S}\right]_0^{\infty} = \frac{e^{\nu_k}}{S}\notag 
    \\&=\frac{e^{\nu_k}}{\sum_{i} e^{\nu_i}} \notag
\end{align}
\end{proof}

\begin{lem}\label{lem:ExpofLargerGumbel}
    Let $G_1\sim \text{Gumbel }(\nu_1, 1)$ and $G_2\sim \text{Gumbel }(\nu_2, 1)$. Then $\mathbb{E}\left[G_1 \mid G_1\geq G_2\right]=\gamma + \log \left( 1+ e^{\left(-(\nu_1-\nu_2)\right)} \right)$ holds. 
\end{lem}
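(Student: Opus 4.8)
The plan is to compute the conditional expectation as a ratio, $\mathbb{E}[G_1 \mid G_1 \geq G_2] = \mathbb{E}[G_1 \mathbbm{1}_{G_1 \geq G_2}]/\mathbb{P}(G_1 \geq G_2)$, and evaluate the two pieces separately. The denominator is already available: by Corollary \ref{cor:GumbelOptProb}, $\mathbb{P}(G_1 \geq G_2) = e^{\nu_1}/(e^{\nu_1}+e^{\nu_2})$. So essentially all the work is in the numerator. Since $G_1$ and $G_2$ are independent, I would integrate out $G_2$ first and write $\mathbb{E}[G_1 \mathbbm{1}_{G_1 \geq G_2}] = \int_{-\infty}^{\infty} x\, F_{G_2}(x)\, f_{G_1}(x)\, dx$, with $F_{G_2}(x) = e^{-e^{-(x-\nu_2)}}$ and $f_{G_1}(x) = e^{-(x-\nu_1)} e^{-e^{-(x-\nu_1)}}$. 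The two double-exponential factors merge into a single $\exp\!\big(-(e^{\nu_1}+e^{\nu_2})\,e^{-x}\big)$, exactly as in the proof of Corollary \ref{cor:GumbelOptProb}. The substitution $u = e^{-x}$ (so $x=-\log u$ and $e^{-x}\,dx = -\,du$) then reduces the numerator to $e^{\nu_1}\int_0^\infty (-\log u)\, e^{-(e^{\nu_1}+e^{\nu_2})u}\,du$.

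The one nonroutine step, and the main obstacle, is this log-weighted Laplace integral. I would evaluate it by differentiating the Gamma integral $\int_0^\infty u^{t-1} e^{-su}\,du = \Gamma(t)\,s^{-t}$ in $t$ and setting $t=1$; since $\Gamma(1)=1$ and $\Gamma'(1) = -\gamma$, this gives $\int_0^\infty \log u\, e^{-su}\,du = -(\gamma + \log s)/s$. This is precisely the step that introduces the Euler constant $\gamma$. Taking $s = e^{\nu_1}+e^{\nu_2}$, the numerator becomes $e^{\nu_1}\big(\gamma + \log(e^{\nu_1}+e^{\nu_2})\big)/(e^{\nu_1}+e^{\nu_2})$.

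Dividing the numerator by the denominator, the common factor $e^{\nu_1}/(e^{\nu_1}+e^{\nu_2})$ cancels, leaving $\mathbb{E}[G_1 \mid G_1 \geq G_2] = \gamma + \log(e^{\nu_1}+e^{\nu_2})$. Factoring $e^{\nu_1}$ out of the logarithm rewrites the log-sum-exp as $\nu_1 + \log\!\big(1 + e^{-(\nu_1-\nu_2)}\big)$, which produces the claimed expression; equivalently, the excess over the location parameter, $\mathbb{E}[G_1-\nu_1 \mid G_1\geq G_2] = \gamma + \log\!\big(1 + e^{-(\nu_1-\nu_2)}\big)$. As a consistency check, the value $\gamma + \log(e^{\nu_1}+e^{\nu_2})$ is exactly the mean of $\max(G_1,G_2)$, which by Lemma \ref{lem:GumbelMax} is $\mathrm{Gumbel}\big(\log(e^{\nu_1}+e^{\nu_2}),1\big)$, reflecting that on the event $\{G_1\geq G_2\}$ one has $G_1 = \max(G_1,G_2)$.

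This last observation also suggests an almost computation-free alternative that leverages Lemma \ref{lem:GumbelMax} directly: for independent unit-scale Gumbels the maximum and the argmax are independent, so $\mathbb{E}[G_1 \mid G_1 \geq G_2] = \mathbb{E}[\max(G_1,G_2)\mid \arg\max = 1] = \mathbb{E}[\max(G_1,G_2)]$, and Lemma \ref{lem:GumbelMax} supplies the mean immediately. Since establishing the max/argmax independence is itself a short density-factorization argument (and boils down to the same merged-exponential computation), I would present the direct integration above as the primary proof and keep this independence route as a remark.
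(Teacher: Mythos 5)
Your proof is correct and follows essentially the same route as the paper's: direct integration of $x\,F_{G_2}(x)f_{G_1}(x)$ after merging the two double-exponential factors, the substitution $u=e^{-x}$, and evaluation of the log-weighted exponential integral $\int_0^\infty \log u\, e^{-su}\,du$ that introduces $\gamma$ (the paper evaluates it via the $\mathrm{Ei}$ antiderivative where you differentiate the Gamma integral; the paper also computes the denominator from scratch where you reuse Corollary \ref{cor:GumbelOptProb} — minor variations of the same calculation). Note that both your computation and the paper's own proof conclude with $\gamma+\nu_1+\log\bigl(1+e^{-(\nu_1-\nu_2)}\bigr)$, so the lemma statement as printed is missing the location term $\nu_1$; this is exactly the ``excess over the location parameter'' point you flag, and the version with $\nu_1$ is the one actually used downstream in Corollary \ref{cor:GumbelMaxasProb}.
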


\begin{proof}
    Let $\nu_1-\nu_2 = c$. Then $\mathbb{E}\left[G_1 \mid G_1\geq G_2\right]$ is equivalent to $\nu_1 + \frac{\int_{-\infty}^{+\infty} x F(x+c) f(x) \mathrm{d} x}{\int_{-\infty}^{+\infty} F(x+c) f(x) \mathrm{d} x}$, where the pdf $f$ and cdf $F$ are associated with $\text{Gumbel }(0, 1)$, because

    \begin{align}
        P\left(G_1 \leq x, G_1 \geq G_2\right)&=\int_{-\infty}^x F_{G_2}(t) f_{G_1}(t) d t=\int_{-\infty}^x F\left(t-\nu_2\right) f\left(t-\nu_1\right) d t\notag
        \\
        \mathbb{E}\left[G_1 \mid G_1 \geq G_2\right]&=\frac{\int_{-\infty}^{\infty} x F\left(x+c-\nu_1\right) f\left(x-\nu_1\right) d x}{\int_{-\infty}^{\infty} F\left(x+c-\nu_1\right) f\left(x-\nu_1\right) d x}\notag
        \\
        &=\frac{\int_{-\infty}^{\infty}\left(y+\nu_1\right) F(y+c) f(y) d y}{\int_{-\infty}^{\infty} F(y+c) f(y) d y} \notag
        \\
        &=\nu_1+\frac{\int_{-\infty}^{\infty} y F(y+c) f(y) d y}{\int_{-\infty}^{\infty} F(y+c) f(y) d y} \notag
    \end{align}

    Now note that 

    $\begin{aligned} \int_{-\infty}^{+\infty} F(x+c) f(x) \mathrm{d} x & =\int_{-\infty}^{+\infty} \exp \{-\exp [-x-c]\} \exp \{-x\} \exp \{-\exp [-x]\} \mathrm{d} x \\ & \stackrel{a=e^{-c}}{=} \int_{-\infty}^{+\infty} \exp \{-(1+a) \exp [-x]\} \exp \{-x\} \mathrm{d} x \\ & =\frac{1}{1+a}\left[\exp \left\{-(1+a) e^{-x}\right\}\right]_{-\infty}^{+\infty} \\ & =\frac{1}{1+a}\end{aligned}$
    \\
    and
    \\
    $\begin{aligned}  \int_{-\infty}^{+\infty} x F(x+c) f(x) \mathrm{d} x&=\int_{-\infty}^{+\infty} x \exp \{-(1+a) \exp [-x]\} \exp \{-x\} \mathrm{d} x \\ & \stackrel{z=e^{-x}}{=} \int_0^{+\infty} \log (z) \exp \{-(1+a) z\} \mathrm{d} z \\ & =\frac{-1}{1+a}\left[\operatorname{Ei}(-(1+a) z)-\log (z) e^{-(1+a) z}\right]_0^{\infty} \\ & =\frac{\gamma+\log (1+a)}{1+a} \\ & \end{aligned}$
    \\
    Therefore, $\mathbb{E}\left[G_1 \mid G_1\geq G_2\right]=\gamma + \nu_k+ \log \left( 1+ e^{\left(-(\nu_1-\nu_2)\right)} \right)$ holds.
\end{proof}

\begin{cor}\label{cor:GumbelMaxasProb} $ \mathbb{E}\left[G_k \mid G_k = \max G_i\right] = \gamma + \nu_k - \log \left(\frac{e^{\nu_k}}{\sum_{i} e^{\nu_i}}\right)$. 
\end{cor}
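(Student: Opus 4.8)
The plan is to reduce the $N$-variable conditional expectation to the two-variable situation already resolved in Lemma \ref{lem:ExpofLargerGumbel}. First I would set $M := \max_{i \neq k} G_i$ and observe that, because each $G_i$ is continuous, the conditioning event $\{G_k = \max_i G_i\}$ agrees almost surely with $\{G_k \geq M\}$; ties occur with probability zero and can be ignored. The two structural facts I would then invoke are that $G_k$ and $M$ are independent---since $M$ is a function only of $\{G_i : i \neq k\}$, which is independent of $G_k$---and that by Lemma \ref{lem:GumbelMax} the variable $M$ is itself $\operatorname{Gumbel}(\nu_M, 1)$ with location $\nu_M = \log \sum_{i \neq k} e^{\nu_i}$.

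With these reductions in place, I would apply Lemma \ref{lem:ExpofLargerGumbel} verbatim, taking $G_1 = G_k$ with parameter $\nu_1 = \nu_k$ and $G_2 = M$ with parameter $\nu_2 = \nu_M$. This gives
$$\mathbb{E}\bigl[G_k \mid G_k \geq M\bigr] = \gamma + \nu_k + \log\bigl(1 + e^{-(\nu_k - \nu_M)}\bigr).$$

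The final step is a short simplification of the logarithmic term. Since $e^{-(\nu_k - \nu_M)} = e^{\nu_M}/e^{\nu_k} = \bigl(\sum_{i \neq k} e^{\nu_i}\bigr)/e^{\nu_k}$, the argument of the logarithm collapses to $\bigl(e^{\nu_k} + \sum_{i \neq k} e^{\nu_i}\bigr)/e^{\nu_k} = \bigl(\sum_i e^{\nu_i}\bigr)/e^{\nu_k}$, so that $\log\bigl(1 + e^{-(\nu_k - \nu_M)}\bigr) = -\log\bigl(e^{\nu_k}/\sum_i e^{\nu_i}\bigr)$, which delivers the claimed identity.

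I do not expect a genuine obstacle here: the argument is essentially a two-line reduction followed by bookkeeping. The only point deserving care is the justification that $G_k$ and $M$ are independent and that ties are negligible, so that Lemma \ref{lem:ExpofLargerGumbel} applies to a genuine independent Gumbel pair rather than a correlated one. It is worth noting that the quantity $\frac{e^{\nu_k}}{\sum_i e^{\nu_i}}$ appearing in the answer is precisely the selection probability from Corollary \ref{cor:GumbelOptProb}; its emergence is not accidental but reflects the standard discrete-choice identity that the expected utility of the chosen alternative equals its deterministic component plus $\gamma$ minus the log of its choice probability.
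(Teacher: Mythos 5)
Your proof is correct and follows essentially the same route as the paper's own proof: replace the conditioning event by $\{G_k \geq \max_{i\neq k} G_i\}$, use Lemma \ref{lem:GumbelMax} to identify $\max_{i\neq k} G_i$ as $\operatorname{Gumbel}\bigl(\log \sum_{i\neq k} e^{\nu_i},\,1\bigr)$, apply Lemma \ref{lem:ExpofLargerGumbel}, and simplify the logarithm. The only difference is that you spell out the independence of $G_k$ from $\max_{i\neq k}G_i$ and the negligibility of ties, both of which the paper leaves implicit.
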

\begin{proof}

\begin{align}
    \mathbb{E}\left[G_k \mid G_k = \max G_i\right] &= \mathbb{E}\left[G_k \mid G_k \geq \max_{i\neq k} G_i\right]\notag
    \\
    &=\gamma + \nu_k + \log \left( 1+ e^{\left(-(\nu_k-\log\sum_{i\neq k} e^{\nu_i})\right)}\right)\tag{Lemma \ref{lem:ExpofLargerGumbel}}
    \\
    &=\gamma + \nu_k + \log \left( 1+ \frac{\sum_{i\neq k} e^{\nu_i}}{e^{-\nu_k}}\right) \notag
    \\
    &=\gamma + \nu_k + \log \left(\sum_{i} e^{\nu_i}/e^{\nu_k} \right)\notag
    \\
    &=\gamma + \nu_k -\log \left(e^{\nu_k} / \sum_{i} e^{\nu_i} \right)\notag
\end{align}

\end{proof}

\subsection{Properties of entropy regularization}
Suppose we have a choice out of discrete choice set $\mathcal{A} = \{x_i\}_{i=1}^{|\mathcal{A}|}$. A choice policy can be a deterministic policy such as $\operatorname{argmax}_{i \in 1, \ldots, |\mathcal{A}|} x_i$, or stochastic policy that is characterized by $\mathbf{q}\in \triangle_{\mathcal{A}}$. When we want to enforce smoothness in choice, we can regularize choice by newly defining the choice rule 
$$\arg\max _{\mathbf{q} \in \Delta_{\mathcal{A}}}\left(\langle\mathbf{q}, \mathbf{x}\rangle-\Omega(\mathbf{q})\right)$$
where $\Omega$ is a regularizing function. 

\begin{lem}\label{lem:logsumexp_Shannon}
When the regularizing function is constant $-\tau$ multiple of Shannon entropy $H(\mathbf{q})=-\sum_{i=1}^{|\mathcal{A}|} q_i \log \left(q_i\right)$, $$\max _{\mathbf{q} \in \Delta_{\mathcal{A}}}\left(\langle\mathbf{q}, \mathbf{x}\rangle-\Omega(\mathbf{q})\right)=\tau \log \left(\sum_i \exp \left(x_i / \tau\right)\right)$$ and

$$\arg\max _{\mathbf{q} \in \Delta_{\mathcal{A}}}\left(\langle\mathbf{q}, \mathbf{x}\rangle-\Omega(\mathbf{q})\right)=\frac{\exp \left(\frac{x_i}{\tau}\right)}{\sum_{j=1}^n \exp \left(\frac{x_j}{\tau}\right)}$$
\end{lem}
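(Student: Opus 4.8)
The plan is to solve the constrained concave maximization directly. First I would substitute $\Omega(\mathbf{q}) = -\tau H(\mathbf{q})$ so that the objective becomes $J(\mathbf{q}) := \langle \mathbf{q}, \mathbf{x}\rangle + \tau H(\mathbf{q}) = \sum_i q_i x_i - \tau \sum_i q_i \log q_i$, to be maximized over the simplex $\Delta_{\mathcal{A}} = \{\mathbf{q} : q_i \ge 0,\ \sum_i q_i = 1\}$. Since $H$ is strictly concave and $\langle\mathbf{q},\mathbf{x}\rangle$ is linear, $J$ is strictly concave on the compact set $\Delta_{\mathcal{A}}$, so a maximizer exists and is unique; it then remains only to identify it and to evaluate $J$ there.

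The cleanest route I would take is ``completing the KL divergence.'' Define the candidate softmax distribution $p_i := \exp(x_i/\tau)/Z$ with $Z := \sum_j \exp(x_j/\tau)$, so that $\log p_i = x_i/\tau - \log Z$. A one-line rearrangement (substituting this expression for $\log p_i$) then yields, for every $\mathbf{q}\in\Delta_{\mathcal{A}}$, the exact identity
\[
J(\mathbf{q}) = \tau\log Z - \tau\,\mathrm{KL}(\mathbf{q}\,\|\,\mathbf{p}),
\qquad \mathrm{KL}(\mathbf{q}\,\|\,\mathbf{p}) := \sum_i q_i \log\frac{q_i}{p_i}.
\]
Because $\mathrm{KL}(\mathbf{q}\|\mathbf{p}) \ge 0$ with equality if and only if $\mathbf{q} = \mathbf{p}$ (Gibbs' inequality), the maximum value equals $\tau\log Z = \tau\log\!\big(\sum_i \exp(x_i/\tau)\big)$ and is attained uniquely at $\mathbf{q} = \mathbf{p}$, which is exactly the claimed softmax. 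This establishes both assertions of the lemma simultaneously.

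Alternatively, and perhaps more transparently, I could use Lagrange multipliers: introduce a multiplier $\mu$ for the constraint $\sum_i q_i = 1$, and observe that the nonnegativity constraints can be dropped because $\partial J/\partial q_i = x_i - \tau(\log q_i + 1) \to +\infty$ as $q_i \to 0^+$, forcing an interior optimum. Setting the gradient of the Lagrangian to zero gives $q_i \propto \exp(x_i/\tau)$; normalizing recovers $p_i$, and substituting back while using $\log p_i = x_i/\tau - \log Z$ collapses $J(\mathbf{q})$ to $\tau\log Z$.

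The main obstacle is the boundary behavior of the entropy: one must justify that the optimum does not lie on a face of the simplex where some $q_i = 0$. In the Lagrangian route this requires the gradient-blow-up argument above (equivalently, the convention $0\log 0 = 0$ together with a check that the interior candidate strictly dominates any boundary point). The KL route sidesteps this difficulty entirely, since the displayed identity and Gibbs' inequality hold verbatim on all of $\Delta_{\mathcal{A}}$, making it the more robust presentation and the one I would write up.
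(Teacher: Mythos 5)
Your proposal is correct, and your primary (KL-completion) route is genuinely different from the paper's argument. The paper proceeds by calculus: it computes the gradient $\partial G/\partial q_i = x_i - \tau(\log q_i + 1)$, notes the Hessian is diagonal with entries $-\tau/q_i$ (hence negative definite), sets the unconstrained gradient to zero to get $q_i^* = \exp(x_i/\tau - 1)$, and then normalizes this onto the simplex, asserting the normalized point is still a stationary point and therefore the maximum. Your identity $J(\mathbf{q}) = \tau\log Z - \tau\,\mathrm{KL}(\mathbf{q}\,\|\,\mathbf{p})$ together with Gibbs' inequality replaces all of this with a single algebraic rearrangement and a global inequality: it certifies the optimal value and the unique maximizer simultaneously, requires no differentiation, and is valid verbatim on the boundary of $\Delta_{\mathcal{A}}$ (since $p_i > 0$ always, the divergence is finite for every $\mathbf{q}$ under the convention $0\log 0 = 0$). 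This is a real advantage, because the paper's "set the unconstrained gradient to zero, then normalize" step is not by itself a rigorous treatment of the equality constraint --- it implicitly needs the Lagrange-multiplier argument you sketch as your second route, plus a justification that the optimum is interior. In effect, your alternative Lagrangian route is the paper's proof made precise, while your KL route is a cleaner, more robust proof that sidesteps the constraint and boundary issues entirely; either would be acceptable, and your choice to write up the KL version is sound.
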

\begin{proof}
    In the following, I will assume $\tau>0$. Let
\begin{align}
G(\mathbf{q})&=\langle\mathbf{q}, \mathbf{x}\rangle-\Omega(\mathbf{q})\notag
\\
&=\sum_{i=1}^n q_i x_i-\tau \sum_{i=1}^n q_i \log \left(q_i\right)
\notag
\\
&=\sum_{i=1}^n q_i \left(x_i-\tau \log \left(q_i\right)\right) \notag
\end{align}

We are going to find the max by computing the gradient and setting it to 0 . We have
$$
\frac{\partial G}{\partial q_i}=x_i-\tau\left(\log \left(q_i\right)+1\right)
$$
and
$$
\frac{\partial G}{\partial q_i \partial q_j}=\left\{\begin{array}{l}
-\frac{\tau}{q_i}, \quad \text { if } i=j \\
0, \quad \text { otherwise. }
\end{array}\right.
$$
This last equation states that the Hessian matrix is negative definite (since it is diagonal and $-\frac{\tau}{q_1}<0$ ), and thus ensures that the stationary point we compute is actually the maximum. Setting the gradient to $\mathbf{0}$ yields $q_i^*=\exp \left(\frac{x_i}{\tau}-1\right)$, however the resulting $\mathbf{q}^*$ might not be a probability distribution. To ensure $\sum_{i=1}^n q_i^*=1$, we add a normalization:
$$
q_i^*=\frac{\exp \left(\frac{x_i}{\tau}-1\right)}{\sum_{j=1}^n \exp \left(\frac{x_j}{\tau}-1\right)}=\frac{\exp \left(\frac{x_i}{\tau}\right)}{\sum_{j=1}^n \exp \left(\frac{x_j}{\tau}\right)} .
$$
This new $\mathbf{q}^*$ is still a stationary point and belongs to the probability simplex, so it must be the maximum. Hence, you get
$$
\begin{aligned}
\max _{\tau H}(\mathbf{x})& =G\left(\mathbf{q}^*\right)=\sum_{i=1}^n \frac{\exp \left(\frac{x_i}{\tau}\right)}{\sum_{j=1}^n \exp \left(\frac{x_j}{\tau}\right)} x_i-\tau \sum_{i=1}^n \frac{\exp \left(\frac{x_1}{\tau}\right)}{\sum_{j=1}^n \exp \left(\frac{x_j}{\tau}\right)}\left(\frac{x_i}{\tau}-\log \sum_{i=1}^n \exp \left(\frac{x_j}{\tau}\right)\right) \\
&= \tau \log \sum_{i=1}^n \exp \left(\frac{x_j}{\tau}\right)
\end{aligned}
$$
as desired.
\end{proof}

\subsection{IRL with entropy regularization}
\label{sec:IRLentropy}

\subsubsection*{Markov decision processes} 
Consider an MDP defined by the tuple $\left(\mathcal{S}, \mathcal{A}, P, \nu_0, r, \beta\right)$:
\begin{itemize}
    \item $\mathcal{S}$ and $\mathcal{A}$ denote finite state and action spaces
    \item $P \in \Delta_{\mathcal{S}}^{\mathcal{S} \times \mathcal{A}}$ is a Markovian transition kernel, and $\nu_0 \in \Delta_{\mathcal{S}}$ is the initial state distribution. 
    \item $r \in \mathbb{R}^{\mathcal{S} \times \mathcal{A}}$ is a reward function.
    \item $\beta \in(0,1)$ a discount factor
\end{itemize}
\subsubsection{Agent behaviors}

Denote the distribution of the agent's initial state $s_0\in \mathcal{S}$ as $\nu_0$. Given a stationary Markov policy $\pi \in \Delta_{\mathcal{A}}^{\mathcal{S}}$, an agent starts from initial state $s_0$ and make an action $a_h\in \mathcal{A}$ at state $s_h\in \mathcal{S}$ according to $a_h\sim\pi\left(\cdot \mid s_h\right)$ at each period $h$. We use $\mathbb{P}_{\nu_0}^\pi$ to denote the distribution over the sample space $(\mathcal{S} \times \mathcal{A})^{\infty}=\left\{\left(s_0, a_0, s_1, a_1, \ldots\right): s_h \in \mathcal{S}, a_h \in \mathcal{A}, h \in \mathbb{N}\right\}$ induced by the policy $\pi$ and the initial distribution $\nu_0$. We also use $\mathbb{E}_\pi$ to denote the expectation with respect to $\mathbb{P}_{\nu_0}^\pi$. Maximum entropy inverse reinforcement learning (MaxEnt-IRL) makes the following assumption: 

\begin{asmp}[Assumption \ref{ass:IRLoptimaldecision}] Agent follows the policy 
$$
\pi^*=\operatorname{argmax}_{\pi \in \Delta_{\mathcal{A}}^{\mathcal{S}}}\mathbb{E}_\pi\left[\sum_{h=0}^{\infty} \beta^h \left(r\left(s_h, a_h\right)+\lambda\mathcal{H}\left(\pi\left(\cdot \mid s_h\right)\right)\right)\right]
$$
where $\mathcal{H}$ denotes the Shannon entropy and $\lambda$ is the regularization parameter.
\end{asmp}
For the rest of the section, we use $\lambda=1$. We then define the function $V$ as $V(s_{h^\prime})=\mathbb{E}_{\pi^*}\left[\sum_{h=h^\prime}^{\infty} \beta^{h} \left(r\left(s_h, a_h\right)+\mathcal{H}\left(\pi^*\left(\cdot \mid s_h\right)\right)\right)\right]$ and call it the \textit{value function}. According to Assumption \ref{ass:IRLoptimaldecision}, the value function $V$ must satisfy the Bellman equation, i.e., 
\begin{align}
V\left(s\right)&=\max _{\mathbf{q} \in \triangle_\mathcal{A}}\left\{\mathbb{E}_{a\sim\mathbf{q} }\left[r\left(s, a\right)+\beta \cdot \mathbb{E}\left[V\left(s^\prime\right)\mid s, a\right]\right]+\mathcal{H}(\mathbf{q})\right\}\notag
\\
&=\max _{\mathbf{q} \in \triangle_\mathcal{A}}\left\{\sum_{a\in \mathcal{A}} q_a\left(r\left(s, a\right)+\beta \cdot \mathbb{E}\left[V\left(s^\prime\right)\mid s, a\right]\right)+\mathcal{H}(\mathbf{q})\right\}=\max _{\mathbf{q} \in \triangle_\mathcal{A}}\left\{\sum_{a\in \mathcal{A}} q_a Q(s,a)+\mathcal{H}(\mathbf{q})\right\}\label{eq:VandmaxQ}
\\
&=\ln \left[\sum_{a\in \mathcal{A}}\exp\left(r\left(s, a\right)+\beta \cdot \mathbb{E}\left[{V}\left(s^\prime\right)\mid s, a\right]\right)\right]\label{eq:logsumexp_reg}
\\
&=\ln \left[\sum_{a\in \mathcal{A}}\exp\left(Q(s, a)\right)\right]\label{eq:IRLlogsumQ}
\end{align}
and $\mathbf{q}^*:=\arg\max_{\mathbf{q} \in \triangle_\mathcal{A}} \left\{\mathbb{E}_{a\sim\mathbf{q} }\left[r\left(s, a\right)+\beta \cdot \mathbb{E}\left[V\left(s^\prime\right)\mid s, a\right]\right]+\mathcal{H}(\mathbf{q})\right\}$ is characterized by
\\
\begin{align}
\mathbf{q}^* = [q_1^* \ldots q^*_{|\mathcal{A}|}], \text{ where }
    q^*_a= \frac{\exp \left({Q(s, a)}\right)}{\sum_{a^\prime\in \mathcal{A}} \exp \left({Q(s, a^\prime)}\right)} \text{ for } a\in \mathcal{A}  \label{eq:IRLopt}
\end{align}
where:
\begin{itemize}
    \item $Q(s, a):=r\left(s, a\right)+\beta \cdot \mathbb{E}\left[{V}\left(s^\prime\right)\mid s, a\right]$
    \item Equality in Equation \eqref{eq:logsumexp_reg} and equality in Equation \eqref{eq:IRLopt} is from Lemma \ref{lem:logsumexp_Shannon}
\end{itemize}
This implies that $
    \pi^*(a\mid s) = q^*_a= \frac{\exp \left({Q(s, a)}\right)}{\sum_{a^\prime\in \mathcal{A}} \exp \left({Q(s, a^\prime)}\right)} \text{ for } a\in \mathcal{A}.$ In addition to the Bellman equation in terms of value function $V$, 
Bellman equation in terms of choice-specific value function $Q(s,a)$ can be derived by combining $Q(s, a):=r\left(s, a\right)+\beta \cdot \mathbb{E}\left[{V}\left(s^\prime\right)\mid s, a\right]$ and Equation \eqref{eq:IRLlogsumQ}:
\begin{align}
    Q(s, a)=r(s, a)+\beta \mathbb{E}_{s^\prime \sim P(s, a)}\left[\ln \left(\sum_{a^{\prime} \in \mathcal{A}} \exp \left(Q\left(s^{\prime}, a^{\prime}\right)\right)\right)\mid s, a\right] \notag
\end{align}
\;
\\
We can also derive an alternative form of choice-specific value function $Q(s,a)$ by combining $Q(s, a):=r\left(s, a\right)+\beta \cdot \mathbb{E}_{s^\prime \sim P(s, a)}\left[{V}\left(s^\prime\right)\mid s, a\right]$ and Equation \eqref{eq:VandmaxQ}:

\begin{align}
    Q(s, a)&=r\left(s, a\right)+\beta \cdot \mathbb{E}_{s^\prime \sim P(s, a)}\left[\max _{\mathbf{q} \in \triangle_\mathcal{A}}\left\{\sum_{a\in \mathcal{A}} q_a Q(s^\prime,a)+\mathcal{H}(\mathbf{q})\right\}\mid s, a\right]\notag
    \\
    &=r\left(s, a\right)+\beta \cdot \mathbb{E}_{s^\prime \sim P(s, a)}\left[\max _{\mathbf{q} \in \triangle_\mathcal{A}}\left\{\sum_{a\in \mathcal{A}} q_a \left(Q(s^\prime,a) - \log q_a\right)\right\}\mid s, a\right]\notag
    \\
    &=r\left(s, a\right)+\beta \cdot \mathbb{E}_{s^\prime \sim P(s, a), a^\prime \sim \pi^*(a\mid \cdot)}\left[ \left(Q(s^\prime,a^\prime) - \log \pi^*(a^\prime\mid s^\prime)\right)\mid s, a\right] \label{eqn:IRLQBellman_new}
    \\
    &=r\left(s, a\right)+\beta \cdot \mathbb{E}_{s^\prime \sim P(s, a)}\left[ \left(Q(s^\prime,a^\prime) - \log \pi^*(a^\prime\mid s^\prime)\right)\mid s, a\right]\text{ for all } a^\prime\in \mathcal{A} \notag
\end{align}
The last line comes from the fact that $Q(s^\prime,a^\prime) - \log \pi^*(a^\prime\mid s^\prime)$ is equivalent to $\log \left(\sum_{a^{\prime} \in \mathcal{A}} \exp \left(Q\left(s^{\prime}, a^{\prime}\right)\right)\right)$, which is a quantity that does not depend on the realization of specific action $a^\prime$.

\subsection{Single agent Dynamic Discrete Choice (DDC) model}
\label{sec:SingleDDC}

\subsubsection*{Markov decision processes} 
Consider an MDP $\tau:=\left(\mathcal{S}, \mathcal{A}, P, \nu_0, r, G(\delta,1), \beta \right)$:
\begin{itemize}
    \item $\mathcal{S}$ and $\mathcal{A}$ denote finite state and action spaces
    \item $P \in \Delta_{\mathcal{S}}^{\mathcal{S} \times \mathcal{A}}$ is a Markovian transition kernel, and $\nu_0 \in \Delta_{\mathcal{S}}$ is the initial state distribution. 
    \item $r(s_h,a_h)+\epsilon_{ah}$ is the immediate reward (called the flow utility in the Discrete Choice Model literature) from taking action $a_h$ at state $s_h$ at time-step $h$, where:
    \begin{itemize}
        \item $r \in \mathbb{R}^{\mathcal{S} \times \mathcal{A}}$ is a deterministic reward function
        \item  
    $\epsilon_{ah}\overset{i.i.d.}{\sim}  G(\delta, 1)$ is the random part of the reward, where $G$ is Type 1 Extreme Value (T1EV) distribution (a.k.a. Gumbel distribution). The mean of $G(\delta, 1)$ is $\delta + \gamma$, where $\gamma$ is the Euler constant. 
    \item In the econometrics literature, this reward setting is a result of the combination of two assumptions: conditional independence (CI) and additive separability (AS) \cite{magnac2002identifying}. 
\begin{figure}[H]
    \centering
    \includegraphics[width=0.3\linewidth]{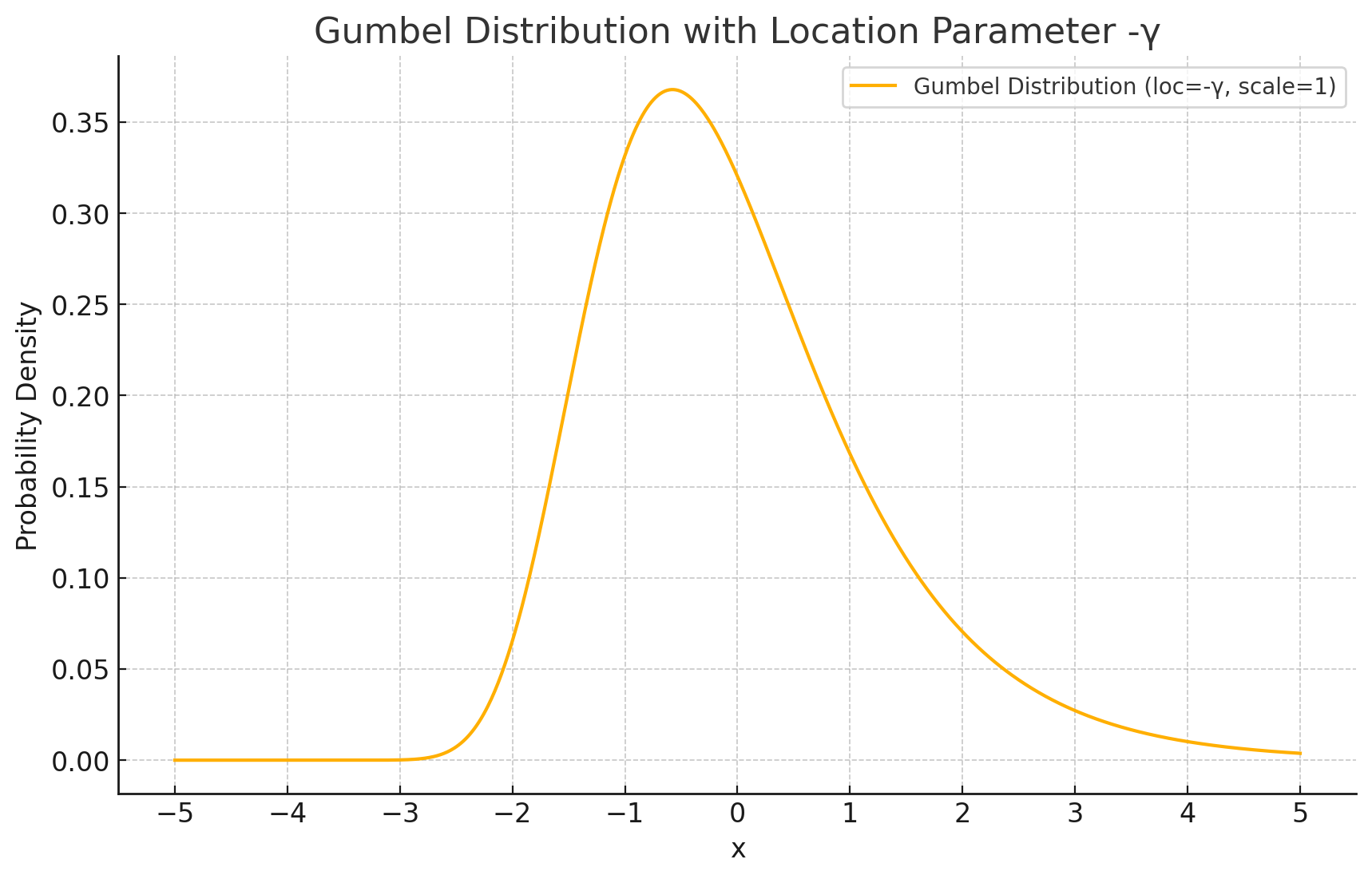}
    \caption{Gumbel distribution $G(-\gamma, 1)$}
\end{figure}
    \end{itemize}
    \item $\beta \in(0,1)$ a discount factor  
\end{itemize}

\subsubsection{Agent behaviors} Denote the distribution of agent's initial state $s_0\in \mathcal{S}$ as $\nu_0$. Given a stationary Markov policy $\pi \in \Delta_{\mathcal{A}}^{\mathcal{S}}$, an agent starts from initial state $s_0$ and make an action $a_h\in \mathcal{A}$ at state $s_h\in \mathcal{S}$ according to $a_h\sim\pi\left(\cdot \mid s_h\right)$ at each period $h$. We use $\mathbb{P}_{\nu_0}^\pi$ to denote the distribution over the sample space $(\mathcal{S} \times \mathcal{A})^{\infty}=\left\{\left(s_0, a_0, s_1, a_1, \ldots\right): s_h \in \mathcal{S}, a_h \in \mathcal{A}, h \in \mathbb{N}\right\}$ induced by the policy $\pi$ and the initial distribution $\nu_0$. We also use $\mathbb{E}_\pi$ to denote the expectation with respect to $\mathbb{P}_{\nu_0}^\pi$. As in Inverse Reinforcement learning (IRL), a Dynamic Discrete Choice (DDC) model makes the following assumption: 

\begin{asmp}\label{ass:optimaldecision} Agent makes decision according to the policy $\operatorname{argmax}_{\pi \in \Delta_{\mathcal{A}}^{\mathcal{S}}}$
$\mathbb{E}_\pi\left[\sum_{h=0}^{\infty} \beta^h( r\left(s_h, a_h\right)+\epsilon_{ah})\right]$.
\end{asmp}

As Assumption \ref{ass:optimaldecision} specifies the agent's policy, we omit $\pi$ in the notations from now on. Define $\boldsymbol{\epsilon}_h = [\epsilon_{1h}\ldots \epsilon_{|\mathcal{A}|h}]$, where $\epsilon_{ih}\overset{i.i.d}{\sim} G(\delta, 1)$ for $i=1\ldots |\mathcal{A}|$. We define a function $V$ as
\begin{align}
    V\left(s_{h^\prime}, \boldsymbol{\epsilon_{h^\prime}}\right) = \mathbb{E}\left[\sum_{h=h^\prime}^{\infty} \beta^h( r\left(s_h, a_h\right)+\epsilon_{ah})\mid s_{h^\prime}\right] \notag
\end{align}
and call it the value function. According to Assumption \ref{ass:optimaldecision}, the value function $V$ must satisfy the Bellman equation, i.e., 
\begin{align}
V\left(s, \boldsymbol{\epsilon}\right)=\max _{a \in \mathcal{A}}\left\{r\left(s, a\right)+\epsilon_{a}+\beta \cdot \mathbb{E}_{s^\prime \sim P(s, a), \boldsymbol{\epsilon^\prime }\sim \boldsymbol{\epsilon}}\left[V\left(s^\prime, \boldsymbol{\epsilon}^\prime\right)\mid s, a\right]\right\} \label{eq:VBellman}.
\end{align}
\;
\\
Define 
\begin{align}
    \bar{V}\left(s\right) &\triangleq E_{\boldsymbol{\epsilon}}\left[V\left(s, \boldsymbol{\epsilon}\right)\right] \notag
    \\
    Q(s, a) &\triangleq r\left(s, a\right)+\beta \cdot \mathbb{E}_{s^\prime \sim P(s, a)}\left[\bar{V}\left(s^\prime\right)\mid s, a\right]\label{eq:QandexpV}
\end{align}
We call $\bar{V}$ the expected value function, and $Q(s, a)$ as the choice-specific value function. Then the Bellman equation can be written as

\;
\begin{align}
\bar{V}\left(s\right) &=\mathbb{E}_{
\boldsymbol{\epsilon}}\left[\max _{a \in \mathcal{A}}\left\{r\left(s, a\right)+\epsilon_{a}+\beta \cdot \mathbb{E}\left[\bar{V}\left(s^\prime\right)\mid s, a\right]\right\}\right]\label{eq:DP_DDC_pre} 
\\
&=\ln \left[\sum_{a\in \mathcal{A}}\exp\left(r\left(s, a\right)+\beta \cdot \mathbb{E}\left[\bar{V}\left(s^\prime\right)\mid s, a\right]\right)\right] + \delta + \gamma \tag{$\because$ Lemma \ref{lem:GumbelMax}}\label{eq:DP_DDC}
\\
&=\ln \left[\sum_{a\in \mathcal{A}}\exp\left(Q(s,a)\right)\right]  + \delta + \gamma \label{eq:logsumQ}
\end{align}
\\
\;
\\
Furthermore, Corollary \ref{cor:GumbelOptProb} characterizes that the agent's optimal policy is characterized by 
\begin{align}
    \pi^*(a \mid s) =\frac{\exp \left({Q(s, a)}\right)}{\sum_{a^\prime\in \mathcal{A}} \exp \left({Q(s, a^\prime)}\right)} \text{ for } a\in \mathcal{A} \label{eq:DDCopt}
\end{align}
\;
\\
In addition to Bellman equation in terms of value function $V$ in Equation \eqref{eq:VBellman}, 
Bellman equation in terms of choice-specific value function $Q$ comes from combining Equation \eqref{eq:QandexpV} and Equation \eqref{eq:logsumQ}:
\begin{align}
    Q(s, a)=r(s, a)+\beta \mathbb{E}_{s^\prime \sim P(s, a)}\left[\ln \left(\sum_{a^{\prime} \in \mathcal{A}} \exp \left(Q\left(s^{\prime}, a^{\prime}\right)\right)\right)\mid s, a\right] + \delta + \gamma
\end{align}
\;
\\
When $\delta = -\gamma$ (i.e., the Gumbel noise is mean 0), we have 
\begin{align}
    Q(s, a)=r(s, a)+\beta \mathbb{E}_{s^\prime \sim P(s, a)}\left[\ln \left(\sum_{a^{\prime} \in \mathcal{A}} \exp \left(Q\left(s^{\prime}, a^{\prime}\right)\right)\right)\mid s, a\right] \tag{ \ref{eq:QBellmanDDC}}
\end{align}
\;
\\
This Bellman equation can be also written in another form.

\begin{align}
  Q(s, a) &\triangleq r\left(s, a\right)+\beta \cdot \mathbb{E}_{s^\prime \sim P(s, a)}\left[\bar{V}\left(s^\prime\right)\mid s, a\right]\tag{Equation \ref{eq:QandexpV}}
  \\
  &= r\left(s, a\right)+\beta \cdot \mathbb{E}_{s^\prime \sim P(s, a), \boldsymbol{\epsilon}^\prime\sim \boldsymbol{\epsilon} }\left[{V}\left(s^\prime, \boldsymbol{\epsilon}^\prime\right)\mid s, a\right]\notag
  \\
  &=r\left(s, a\right)+\beta \cdot \mathbb{E}_{s^\prime \sim P(s, a),  \boldsymbol{\epsilon}^\prime\sim \boldsymbol{\epsilon}}\left[\max_{a^\prime\in \mathcal{A}} \left(Q\left(s^\prime, a^\prime\right)+\epsilon^\prime_a\right)\mid s, a\right] \label{eq:AnotherQBellman}
  \\
  &=r\left(s, a\right)+\beta \cdot \mathbb{E}_{s^\prime \sim P(s, a), a^\prime \sim \pi^*(\cdot\mid s^\prime)}\left[Q(s^\prime, a^\prime) + \delta + \gamma - \log  \pi^*(a^\prime \mid s^\prime) \mid s, a\right] \makebox[2em][l]{\quad(Corollary \ref{cor:GumbelMaxasProb})} \notag
  \\\label{DDCBellman_new}
\end{align}
where $\pi^*(s, a) = \left(\frac{Q(s, a)}{\sum_{a^{\prime}\in \mathcal{A}}Q(s, a^{\prime})}\right)$.

\subsection{Equivalence between DDC and Entropy regularized IRL}

Equation \ref{eq:logsumexp_reg}, Equation \ref{eq:IRLopt} and Equation \ref{eqn:IRLQBellman_new} characterizes the choice-specific value function's Bellman equation and optimal policy in entropy regularized IRL setting when regularizing coefficient is 1: 
$$
    Q(s, a)=r(s, a)+\beta \mathbb{E}_{s^\prime \sim P(s, a)}\left[\ln \left(\sum_{a^{\prime} \in \mathcal{A}} \exp \left(Q\left(s^{\prime},a^{\prime}\right)\right)\right)\mid s, a\right]
$$

$$
\pi^*(a \mid s) =\frac{\exp \left({Q(s, a)}\right)}{\sum_{a^\prime\in \mathcal{A}} \exp \left({Q(s, a^\prime)}\right)} \text{ for } a\in \mathcal{A} 
$$

$$
Q(s,a)=r\left(s, a\right)+\beta \cdot \mathbb{E}_{s^\prime \sim P(s, a), a^\prime \sim \pi^*(\cdot \mid s^\prime)}\left[Q(s^\prime, a^\prime) - \log  \pi^*(a^\prime\mid s^\prime) \mid s, a\right]
$$
\\
Equation \ref{eq:DDCopt}, Equation \ref{eq:QBellmanDDC}, and Equation \ref{DDCBellman_new} (when $\delta = -\gamma$) characterizes the choice-specific value function's Bellman equation and optimal policy of Dynamic Discrete Choice setting:
$$
    Q(s, a)=r(s, a)+\beta \mathbb{E}_{s^\prime \sim P(s, a)}\left[\ln \left(\sum_{a^{\prime} \in \mathcal{A}} \exp \left(Q\left(s^{\prime}, a^{\prime}\right)\right)\right)\mid s, a\right] 
$$

$$
\pi^*(a \mid s) =\frac{\exp \left({Q(s, a)}\right)}{\sum_{a^\prime\in \mathcal{A}} \exp \left({Q(s, a^\prime)}\right)} \text{ for } a\in \mathcal{A} 
$$

$$
Q(s,a) = r\left(s, a\right)+\beta \cdot \mathbb{E}_{s^\prime \sim P(s, a), a^\prime \sim \pi^*(\cdot\mid s^\prime)}\left[Q(s^\prime, a^\prime) - \log  \pi^*(a^\prime \mid s^\prime) \mid s, a\right]
$$
\\
$Q$ that satisfies \ref{eq:DDCopt} is unique \cite{rust1994structural}, and $Q-r$ forms a one-to-one relationship. Therefore, the exact equivalence between these two setups implies that the same reward function $r$ and discount factor $\beta$ will lead to the same choice-specific value function $Q$ and the same optimal policy for the two problems.

\subsection{IRL with occupancy matching}\label{sec:occupancy}

\cite{ho2016generative} defines another inverse reinforcement learning problem that is based on the notion of occupancy matching. Let $\nu_0$ be the initial state distribution and $d^\pi$ be the discounted state-action occupancy of $\pi$ which is defined as $ d^\pi=(1-$ $\beta) \sum_{t=0}^{\infty} \beta^t d_t^\pi$, with $d_t^\pi(s, a)=\mathbb{P}_{\pi, \nu_0}\left[s_t=s, a_t=a\right]$. Note that $Q^\pi(s, a):=\mathbb{E}_\pi\left[\sum_{t=0}^{\infty} \beta^t r(s_t,a_t) \mid s_0=s, a_0=a\right] = \sum_{t=0}^{\infty} \beta^t \mathbb{E}_{(\tilde{s},\tilde{a})\sim d_t^\pi}[r(\tilde{s},\tilde{a})\mid s_0=a,a_0=a].$ Defining the discounted state-action occupancy of the expert policy $\pi^\ast$ as $d^\ast$, \cite{ho2016generative} defines the inverse reinforcement learning problem as the following max-min problem:
\begin{align}
    \underset{r \in \mathcal{C}}{\operatorname{max}}\underset{{\pi \in \Pi}}{\min}\left(\mathbb{E}_{d^\ast}[r(s, a)]-\mathbb{E}_{d^\pi}[r(s, a)]-\mathcal{H}(\pi)-\psi(r)\right) \label{eq:occupancyObj}
\end{align}
where $\mathcal{H}$ is the Shannon entropy we used in MaxEnt-IRL formulation and $\psi$ is the regularizer imposed on the reward model $r$.  

Would occupancy matching find $Q$ that satisfies the Bellman equation? Denote the policy as $\pi^\ast$ and its corresponding discounted state-action occupancy measure as $d^\ast=(1-$ $\beta) \sum_{t=0}^{\infty} \beta^t d_t^\ast$, with $d_t^\ast(s, a)=\mathbb{P}_{\pi^\ast, \nu_0}\left[s_t=s, a_t=a\right]$. We define the expert's action-value function as $Q^\ast(s, a):=\mathbb{E}_{\pi^\ast}\left[\sum_{t=0}^{\infty} \beta^t r(s_t,a_t) \mid s_0=s, a_0=a\right]$ and the Bellman operator of $\pi^\ast$ as $\mathcal{T}^\ast$. Then we have the following Lemma \ref{cor:occp=naiveBE} showing that occupancy matching (even without regularization) may not minimize Bellman error for every state and action.
\begin{lem}[Occupancy matching is equivalent to naive weighted Bellman error sum]\label{cor:occp=naiveBE} The perfect occupancy matching given the same $(s_0, a_0)$ satisfies
    $$\mathbb{E}_{(s, a) \sim d^\ast}[r(s, a)\mid s_0, a_0]-\mathbb{E}_{(s, a) \sim d^\pi}[r(s, a)\mid s_0, a_0] = \mathbb{E}_{(s,a)\sim d^{\ast}}[(\mathcal{T}^\ast Q^\pi-Q^\pi)(s,a)\mid s_0, a_0]$$
\end{lem}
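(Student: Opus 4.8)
The plan is to establish the identity via a telescoping, performance-difference-style argument that exploits the consistency between the operator $\mathcal{T}^\ast$ (which averages the next action over $\pi^\ast$) and the occupancy measure $d^\ast$ (which is itself generated by $\pi^\ast$). The starting observation handles the left-hand side immediately: for any policy $\rho$, unrolling $d^\rho=(1-\beta)\sum_{t\ge 0}\beta^t d_t^\rho$ against the reward gives $\mathbb{E}_{(s,a)\sim d^\rho}[r(s,a)\mid s_0,a_0]=(1-\beta)\sum_{t\ge 0}\beta^t\,\mathbb{E}_{d_t^\rho}[r(s_t,a_t)\mid s_0,a_0]=(1-\beta)\,Q^\rho(s_0,a_0)$, where the last equality is exactly the expansion of $Q^\rho$ recalled in the text. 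Specializing to the expert (whose value is $Q^\ast$) and to $\pi$ shows that the left-hand side equals $(1-\beta)\bigl(Q^\ast(s_0,a_0)-Q^\pi(s_0,a_0)\bigr)$, so it suffices to prove the right-hand side equals the same quantity.

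First I would write the integrand of the right-hand side as the one-step residual $\delta(s,a):=(\mathcal{T}^\ast Q^\pi)(s,a)-Q^\pi(s,a)=r(s,a)-Q^\pi(s,a)+\beta\,\mathbb{E}_{s'\sim P(s,a),\,a'\sim\pi^\ast(\cdot\mid s')}[Q^\pi(s',a')]$. Expanding $\mathbb{E}_{(s,a)\sim d^\ast}[\delta\mid s_0,a_0]=(1-\beta)\sum_{t\ge 0}\beta^t\,\mathbb{E}_{d_t^\ast}[\delta(s_t,a_t)\mid s_0,a_0]$ and applying the tower property, the look-ahead term $\beta\,\mathbb{E}[Q^\pi(s_{t+1},a_{t+1})\mid s_t,a_t]$ averaged over $d_t^\ast$ becomes $\beta\,\mathbb{E}_{d_{t+1}^\ast}[Q^\pi(s_{t+1},a_{t+1})\mid s_0,a_0]$; this is precisely where the agreement between the $\pi^\ast$ inside $\mathcal{T}^\ast$ and the $\pi^\ast$ generating $d^\ast$ is used, since it makes the time-$t$ look-ahead coincide with the genuine time-$(t+1)$ occupancy. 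The series then splits into a reward sum $\sum_t\beta^t\,\mathbb{E}_{d_t^\ast}[r(s_t,a_t)\mid s_0,a_0]=Q^\ast(s_0,a_0)$ and a telescoping pair $-\sum_t\beta^t\,\mathbb{E}_{d_t^\ast}[Q^\pi]+\sum_t\beta^{t+1}\,\mathbb{E}_{d_{t+1}^\ast}[Q^\pi]$, which collapses to $-Q^\pi(s_0,a_0)$ because $d_0^\ast$ conditioned on $(s_0,a_0)$ is the point mass at $(s_0,a_0)$. Multiplying by $(1-\beta)$ gives $(1-\beta)\bigl(Q^\ast(s_0,a_0)-Q^\pi(s_0,a_0)\bigr)$, matching the left-hand side.

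The main obstacle is bookkeeping rather than conceptual. I must justify interchanging the infinite sum with the expectations and performing the telescoping rearrangement; this is legitimate because $\beta<1$ and $r$, $Q^\pi$ are bounded (the rewards lie in the admissible class $\mathcal{C}$ and $Q^\pi$ is a discounted sum of bounded rewards), so every series converges absolutely. I also need to track the conditioning on $(s_0,a_0)$ carefully so that the index shift in the telescoping term genuinely reindexes the \emph{same} discounted occupancy $d^\ast$. A secondary point worth stating explicitly is that the identity hinges on $\mathcal{T}^\ast$ being the $\pi^\ast$-operator: had the residual been defined with $\mathcal{T}^\pi$, the look-ahead term would reindex against $d^\pi$ rather than $d^\ast$ and the telescoping collapse would fail. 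This is exactly why occupancy matching controls only the $\pi^\ast$-weighted aggregate residual $\mathbb{E}_{d^\ast}[\mathcal{T}^\ast Q^\pi-Q^\pi]$ and not the pointwise Bellman error, which is the conclusion the lemma is meant to support.
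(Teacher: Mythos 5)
Your proposal is correct and follows essentially the same route as the paper: both reduce the left-hand side to $(1-\beta)\bigl(Q^\ast(s_0,a_0)-Q^\pi(s_0,a_0)\bigr)$ and then establish the right-hand side by the tower-property/index-shift telescoping argument, which the paper factors out as Lemma \ref{lem:telescoping} (with general $f$ and policy) and you instead carry out inline, specialized to $f=Q^\pi$ and the expert policy $\pi^\ast$. Your explicit remarks on absolute convergence and on why the argument fails if $\mathcal{T}^\pi$ replaces $\mathcal{T}^\ast$ are sound and, if anything, slightly more careful than the paper's presentation.
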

\begin{proof} 
   Note that $\mathbb{E}_{(s, a) \sim d^{\ast}}[r(s, a)\mid s_0, a_0]=\sum_{t=0}^{\infty} \beta^t \mathbb{E}_{(s,a)\sim d_t^\ast}[r(s,a)\mid s_0,a_0]=Q^\ast(s, a)$ and $\mathbb{E}_{(s, a) \sim d^{\pi}}[r(s_, a)\mid s_0, a_0]=\sum_{t=0}^{\infty} \beta^t \mathbb{E}_{(s,a)\sim d_t^\pi}[r(s,a)\mid s_0,a_0] = Q^\pi(s, a)$. Therefore
   \begin{align}
       &\mathbb{E}_{(s, a) \sim d^\ast}[r(s, a)\mid s_0,a_0]-\mathbb{E}_{(s, a) \sim d^\pi}[r(s, a)\mid s_0,a_0] =(1-\beta)Q^\ast(s_0, a_0)-(1-\beta)Q^\pi(s_0, a_0)\notag
       \\
       &=(1-\beta)\frac{1}{1-\beta} \mathbb{E}_{(s,a)\sim d^{\ast}}[(\mathcal{T}^\ast Q^\pi-Q^\pi)(s,a)\mid s_0, a_0]\tag{Lemma \ref{lem:telescoping}}
       \\
       &= \mathbb{E}_{(s,a)\sim d^{\ast}}[(\mathcal{T}^\ast Q^\pi-Q^\pi)(s,a)\mid s_0, a_0] \notag
   \end{align}
\end{proof}
Lemma \ref{cor:occp=naiveBE} implies that occupancy measure matching, even without reward regularization, does not necessarily imply Bellman errors being 0 for every state and action. In fact, what they minimize is the \textit{average Bellman error} \cite{jiang2017contextual,uehara2020minimax}. This implies that $r$ cannot be inferred from $Q$ using the Bellman equation after deriving $Q$ using occupancy matching. 

\begin{lem}[Bellman Error Telescoping]\label{lem:telescoping} 
Let the Bellman operator $\mathcal{T}^\pi$ is defined to map $f\in \mathbb{R}^{S\times A}$ to $\mathcal{T}^\pi f := r(s,a) + \mathbb{E}_{s^\prime\sim P(s,a), a^\prime\sim \pi(\cdot\mid s^\prime)}[f(s^\prime,a^\prime)\mid s,a]$. 
For any $\pi$, and any $f\in \mathbb{R}^{S\times A}$,
$$
Q^\pi(s_0, a_0)-f(s_0, a_0) = \frac{1}{1-\beta} \mathbb{E}_{(s,a)\sim d^\pi}[(\mathcal{T}^\pi f-f)(s,a)\mid s_0, a_0].
$$

\end{lem}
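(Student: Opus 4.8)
The plan is to treat this as a standard Bellman-residual telescoping (performance-difference) identity, proved by unrolling the one-step Bellman recursion for the error and then re-summing it against the discounted occupancy $d^\pi$. First I would recall that $Q^\pi$ is the unique fixed point of the ($\beta$-discounted) Bellman operator $\mathcal{T}^\pi$, i.e. $\mathcal{T}^\pi Q^\pi = Q^\pi$; this is consistent with the usage of $\mathcal{T}^\ast Q^\pi - Q^\pi$ and the factor $\tfrac{1}{1-\beta}$ in the application (Lemma~\ref{cor:occp=naiveBE}). Writing $g := Q^\pi - f$ for the error and $\Delta := \mathcal{T}^\pi f - f$ for the Bellman residual, I would split
$$g = (Q^\pi - \mathcal{T}^\pi f) + (\mathcal{T}^\pi f - f) = (\mathcal{T}^\pi Q^\pi - \mathcal{T}^\pi f) + \Delta.$$
Since $\mathcal{T}^\pi$ acts affinely with linear part $\beta$ times the policy-induced transition operator $P^\pi$ (defined by $(P^\pi h)(s,a) = \mathbb{E}_{s'\sim P(s,a),\, a'\sim\pi(\cdot\mid s')}[h(s',a')\mid s,a]$), the first bracket equals $\beta\, P^\pi g$, giving the one-step recursion $g = \Delta + \beta\, P^\pi g$.

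Next I would iterate this recursion. Because $\beta<1$ and all quantities have finite sup-norm ($\|Q^\pi\|_\infty<\infty$ since $\beta<1$, and $f\in\mathbb{R}^{S\times A}$), the remainder $\beta^T (P^\pi)^T g$ vanishes as $T\to\infty$ and the Neumann series converges, yielding
$$g(s_0,a_0) = \sum_{t=0}^\infty \beta^t \big((P^\pi)^t \Delta\big)(s_0,a_0) = \sum_{t=0}^\infty \beta^t\, \mathbb{E}_{(s_t,a_t)\sim d_t^\pi}\big[\Delta(s_t,a_t)\mid s_0,a_0\big],$$
where $d_t^\pi(s,a)=\mathbb{P}_{\pi,\nu_0}[s_t=s,a_t=a]$ is the $t$-step state–action law started from $(s_0,a_0)$, since applying $(P^\pi)^t$ and conditioning on the fixed start is exactly integration against $d_t^\pi$.

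Finally I would match this against the definition $d^\pi = (1-\beta)\sum_{t=0}^\infty \beta^t d_t^\pi$, so that
$$\mathbb{E}_{(s,a)\sim d^\pi}\big[\Delta(s,a)\mid s_0,a_0\big] = (1-\beta)\sum_{t=0}^\infty \beta^t\, \mathbb{E}_{(s_t,a_t)\sim d_t^\pi}\big[\Delta(s_t,a_t)\mid s_0,a_0\big] = (1-\beta)\, g(s_0,a_0),$$
and dividing by $(1-\beta)$ delivers the claim $Q^\pi(s_0,a_0) - f(s_0,a_0) = \tfrac{1}{1-\beta}\,\mathbb{E}_{(s,a)\sim d^\pi}[(\mathcal{T}^\pi f - f)(s,a)\mid s_0,a_0]$. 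There is no deep obstacle here; the identity is elementary once the recursion is set up. The main care is bookkeeping: keeping every expectation conditioned on the fixed start $(s_0,a_0)$ so that $(P^\pi)^t$ corresponds to the conditional law $d_t^\pi$, justifying the interchange of limit and expectation (uniform boundedness plus $\beta<1$), and confirming the discount factor inside $\mathcal{T}^\pi$ is aligned so that the geometric weights $\beta^t$ exactly reconstruct the normalizing factor $(1-\beta)$ embedded in $d^\pi$.
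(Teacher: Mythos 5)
Your proof is correct, but it runs in the opposite direction from the paper's. The paper starts from the right-hand side: it expands $\mathbb{E}_{(s,a)\sim d^\pi}[(\mathcal{T}^\pi f-f)(s,a)\mid s_0,a_0]$ as the discounted sum $\sum_{t\ge 0}\beta^t\,\mathbb{E}_{(s,a)\sim d_t^\pi}[\,r(s,a)+\beta\,\mathbb{E}[f(s',a')]-f(s,a)\,]$ and observes that the $f$-terms cancel telescopically between consecutive time steps, leaving $\sum_t \beta^t\,\mathbb{E}_{d_t^\pi}[r]-f(s_0,a_0)=Q^\pi(s_0,a_0)-f(s_0,a_0)$; it never invokes the fixed-point property of $Q^\pi$, only its series definition. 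You instead start from the left-hand side: you use $\mathcal{T}^\pi Q^\pi=Q^\pi$ (a fact you assume rather than prove, though it follows from one-step conditioning on the series definition) and the affine structure of $\mathcal{T}^\pi$ to obtain the error recursion $g=\Delta+\beta P^\pi g$, unroll it into a Neumann series with an explicitly controlled remainder $\beta^T(P^\pi)^T g$, and then re-aggregate against $d^\pi=(1-\beta)\sum_t\beta^t d_t^\pi$. The two arguments encode the same cancellation, but yours buys a cleaner treatment of convergence — the paper's term-by-term cancellation leaves the interchange of limits implicit, whereas your uniform-boundedness argument for the vanishing remainder makes it explicit — at the cost of needing the additional (standard) fixed-point fact for $Q^\pi$, which the paper's route avoids entirely.
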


\begin{proof}
Note that the right-hand side of the statement can be expanded as
\begin{align}
    &r(s_0, a_0)+\beta \cancel{\mathbb{E}_{s^\prime\sim P(s,a), a^\prime\sim \pi(\cdot\mid s^\prime)}[f(s^\prime,a^\prime)\mid s,a]}-f(s_0,a_0)\notag
    \\
    &+\beta\mathbb{E}_{(s,a)\sim d_1^\pi}\left[r(s, a)+\beta \cancel{\mathbb{E}_{s^\prime\sim P(s,a), a^\prime\sim \pi(\cdot\mid s^\prime)}[f(s^\prime,a^\prime)\mid s,a]}-\cancel{f(s,a)}\mid s_0, a_0\right]\notag
    \\
    &+\beta^2\mathbb{E}_{(s,a)\sim d_2^\pi}\left[r(s, a)+\beta \cancel{\mathbb{E}_{s^\prime\sim P(s,a), a^\prime\sim \pi(\cdot\mid s^\prime)}[f(s^\prime,a^\prime)\mid s,a]}-\cancel{f(s,a)}\mid s_0, a_0\right]\notag
    \\&\quad\quad\quad\quad\quad\quad\quad\quad\quad\quad\quad\quad\ldots\notag
    \\
    &=Q^\pi(s_0, a_0)-f(s_0, a_0)\notag
\end{align}
which is the left-hand side of the statement.

\end{proof}

\section{Technical Proofs}

\subsection{Theory of TD correction using biconjugate trick}\label{sec:BiconjProofs}

\begin{proof}[Proof of Lemma \ref{lem:OurBiconj}]
      \begin{align}
     &\mathcal{L}_{BE}(s,a)(Q):=\mathbb{E}_{ s^\prime \sim P(s, a)}\left[\delta_{Q}\left(s,a, s^\prime\right)\mid s, a\right]^2\notag
     \\
     &=\max_{h\in \mathbb{R}}2\cdot\mathbb{E}_{ s^\prime \sim P(s, a)}\left[\delta_{Q}\left(s,a, s^\prime\right)\mid s, a\right]\cdot h-h^2\tag{Biconjugate}
     \\
     &=\max_{h\in \mathbb{R}}2\cdot\mathbb{E}_{ s^\prime \sim P(s, a)}\left[\hat{\mathcal{T}}Q - Q\mid s, a\right]\cdot \underbrace{h}_{=\rho-Q(s,a)}-h^2\notag
    \\
     &=\max_{\rho(s,a)\in \mathbb{R}}\mathbb{E}_{ s^\prime \sim P(s, a)}\left[2\left(\hat{\mathcal{T}}Q - Q\right)\left(\rho-Q\right)
    -\left(\rho-Q\right)^2\mid s, a\right]\notag
    \\
    &= \max_{\rho(s,a)\in \mathbb{R}}\mathbb{E}_{ s^\prime \sim P(s, a)}\left[   \left(\hat{\mathcal{T}}{Q}-Q\right)^2-\left(\hat{\mathcal{T}}{Q}- \rho\right)^2\mid s, a\right]\label{eq:SBEED} 
    \end{align}
where the unique maximum is with 
\begin{align}
    \rho^{*}(s,a) &= h^{*}(s,a)+Q(s,a) = \mathcal{T}Q(s,a)- Q(s,a)+Q(s,a)\notag
    \\    &=\mathcal{T}Q(s,a)\notag
\end{align}
and where the equality of \ref{eq:SBEED} is from
\begin{align}
&\!\!\!\!\!\!\!\!\!2\left(\hat{\mathcal{T}}Q - Q\right)\left(\rho-Q\right)
    -\left(\rho-Q\right)^2\notag
    \\
    &=2(\hat{\mathcal{T}}{Q}\rho - \hat{\mathcal{T}}{Q}Q - \cancel{Q\rho} + Q^2)  - (\rho^2 - \cancel{2Q\rho} + Q^2)\notag
    \\
    &=2\hat{\mathcal{T}}{Q}\rho - 2\hat{\mathcal{T}}{Q}Q +\cancel{2} Q^2- \rho^2 - \cancel{Q^2}\notag
    \\
    &=\hat{\mathcal{T}}{Q}^2- 2\hat{\mathcal{T}}{Q}Q+Q^2-\hat{\mathcal{T}}{Q}^2+2\hat{\mathcal{T}}{Q}\rho- \rho^2\notag
    \\
    &=\left(\hat{\mathcal{T}}{Q}-Q\right)^2-\left(\hat{\mathcal{T}}{Q}- \rho\right)^2\notag
\end{align}

Now note that    
     \begin{align}
     &\mathcal{L}_{BE}(s,a)(Q)= \max_{\rho(s,a)\in \mathbb{R}}\mathbb{E}_{ s^\prime \sim P(s, a)}\left[   \left(\hat{\mathcal{T}}{Q}-Q\right)^2-\left(\hat{\mathcal{T}}{Q}- \rho\right)^2\mid s, a\right]] \tag{equation \ref{eq:SBEED}}
    \\
    &= \mathbb{E}_{ s^\prime \sim P(s, a)}\left[ \left(\hat{\mathcal{T}}{Q}-Q\right)^2\mid s,a \right]-\min_{\rho(s,a)\in \mathbb{R}}\mathbb{E}_{ s^\prime \sim P(s, a)}\left[\left(\hat{\mathcal{T}}{Q}- \underbrace{\rho}_{= r+\beta \zeta}\right)^2\mid s, a\right]  \notag
    \\
    &= \mathbb{E}_{ s^\prime \sim P(s, a)}\left[\mathcal{L}_{TD}(Q)(s,a,s^\prime)\right]-\beta^2 \min_{\zeta\in \mathbb{R}}\mathbb{E}_{ s^\prime \sim P(s, a)}\left[\left(\hat{V}(s^\prime)- \zeta\right)^2\mid s, a\right]\label{eq:yesmin}
    \\
    &= \mathbb{E}_{ s^\prime \sim P(s, a)}\left[\mathcal{L}_{TD}(Q)(s,a, s^\prime)\right]-\beta^2 \mathbb{E}_{ s^\prime \sim P(s, a)}\left[\left(\hat{V}(s^\prime)- \mathbb{E}_{ s^\prime \sim P(s, a)}[\hat{V}(s^\prime)\mid s,a]\right)^2\mid s, a\right]\label{eq:nomin}
    \end{align}
where the equality of Equation \eqref{eq:nomin} comes from the fact that the $\zeta$ that maximize Equation \eqref{eq:yesmin} is $\zeta^* := \mathbb{E}_{ s^\prime \sim P(s, a)}[\hat{V}(s')\mid s,a]$, because
\begin{align}
r(s,a)+\beta \cdot \zeta^{*} (s,a) &:=  \rho^{*}(s,a)\notag
     \\
     &= \mathcal{T}Q(s,a) \notag
     \\
     &:=r(s,a)+\beta \cdot \mathbb{E}_{ s^\prime \sim P(s, a)}\left[\hat{V}(s^\prime)\mid s,a\right]\notag
\end{align}
For $Q^\ast$, $\mathcal{T}Q^\ast=Q^\ast$ holds. Therefore, we get
\begin{align}
r(s,a)+\beta \cdot \zeta^{*} (s,a) &:=  \rho^{*}(s,a)\notag
     \\
     &= \mathcal{T}Q^\ast(s,a) = Q^\ast(s,a) \notag
\end{align}
\end{proof}

\subsection{Proof of Lemma \ref{thm:MagnacThesmar}}
\label{sec:PfMagnac}
\begin{proof}
    Suppose that the system of equations (Equation \eqref{eq:HotzMillereqs})
\begin{equation}
\left\{
\begin{array}{l}
    \dfrac{\exp({Q}\left(s,a\right))}{\sum_{a^\prime\in \mathcal{A}} \exp({Q}\left(s,a^\prime\right))} = \pi^*(\;a
    \mid s) \; \; \; \forall s\in \mathcal{S}, a\in\mathcal{A}
    \\[1em]
    r(s, a_s)+\beta \cdot \mathbb{E}_{s^{\prime} \sim P(s, a_s)}\left[\log(\sum_{a^\prime\in\mathcal{A}}\exp Q(s^\prime, a^\prime)) \mid s, a_s\right]-Q(s, a_s)=0 \;\; \; \forall s\in \mathcal{S} 
\end{array}
\right.
\notag
\end{equation} 
is satisfied for $Q\in \mathcal{Q}$, where $\mathcal{Q}$ denote the space of all $Q$ functions. Then we have the following equivalent recharacterization of the second condition $\forall s\in \mathcal{S}$,
\begin{align}
     Q(s, a_s)&=r(s, a_s)+\beta \cdot \mathbb{E}_{s^{\prime} \sim P(s, a_s)}\left[\log(\sum_{a^\prime\in\mathcal{A}}\exp Q(s^\prime, a^\prime)) \mid s, a_s\right]\;\; \; \notag
     \\
     &=  r(s, a_s)+\beta \cdot \mathbb{E}_{s^\prime \sim P(s, a_s)}\left[Q(s^\prime, a^\prime) - \log \pi^*(a^\prime \mid s^\prime) \mid s, a_s\right] \;\; \forall a^\prime\in\mathcal{A} \notag
     \\
     &=  r(s, a_s)+\beta \cdot \mathbb{E}_{s^\prime \sim P(s, a_s)}\left[Q(s^\prime, a_{s^\prime}) - \log \pi^*(a_{s^\prime} \mid s^\prime) \mid s, a_s\right]
\end{align}

We will now show the existence and uniqueness of a solution using a standard fixed point argument on a Bellman operator. Let $\mathcal{F}$ be the space of functions $f: \mathcal{S} \rightarrow \mathbb{R}$ induced by elements of $\mathcal{Q}$, where each $Q \in \mathcal{Q}$ defines an element of $\mathcal{F}$ via

$$
f_Q(s):=Q\left(s, a_s\right)
$$

and define an operator $\mathcal{T}_f: \mathcal{F} \rightarrow$ $\mathcal{F}$ that acts on functions $f_Q$ :

$$
\left(\mathcal{T}_f f_Q\right)(s):=r\left(s, a_s\right)+\beta \sum_{s^{\prime}} P\left(s^{\prime} \mid s, a_s\right)\left[f_Q\left(s^{\prime}\right)-\log \pi^*\left(a_{s^{\prime}} \mid s^{\prime}\right)\right]
$$

Then for $Q_1, Q_2 \in\mathcal{Q}$, We have
\begin{align} & \left(\mathcal{T}_f f_{Q_1}\right)(s):=r\left(s, a_s\right)+\beta \sum_{s^{\prime}} P\left(s^{\prime} \mid s, a_s\right)\left[f_{Q_1}\left(s^{\prime}\right)-\log \pi^*\left(a_{s^{\prime}} \mid s^{\prime}\right)\right] \notag
\\ & \left(\mathcal{T}_f f_{Q_2}\right)(s):=r\left(s, a_s\right)+\beta \sum_{s^{\prime}} P\left(s^{\prime} \mid s, a_s\right)\left[f_{Q_2}\left(s^{\prime}\right)-\log \pi^*\left(a_{s^{\prime}} \mid s^{\prime}\right)\right]\notag
\end{align}

Subtracting the two, we get
\begin{align}
\left|\left(\mathcal{T}_f f_{Q_1}\right)(s)-\left(\mathcal{T}_f f_{Q_2}\right)(s)\right| 
&\leq \beta \sum_{s^{\prime}} P\left(s^{\prime} \mid s, a_s\right)\left|f_{Q_1}\left(s^{\prime}\right)-f_{Q_2}\left(s^{\prime}\right)\right| \notag
\\
&\leq \beta\left\|f_{Q_1}-f_{Q_2}\right\|_{\infty} \notag
\end{align}

Taking supremum norm over $s\in\mathcal{S}$, we get
$$\left\|\mathcal{T}_f f_{Q_1}-\mathcal{T}_f f_{Q_2}\right\|_{\infty} \leq \beta\left\|f_{Q_1}-f_{Q_2}\right\|_\infty$$

This implies that $\mathcal{T}_f$ is a contraction mapping under supremum norm, with $\beta\in (0,1)$. Since $\mathcal{Q}$ is a Banach space under sup norm (Lemma \ref{lem:completeMetric}), we can apply Banach fixed point theorem to show that there exists a unique $f_Q$ that satisfies $\mathcal{T}_f(f_Q) = f_Q$, and by definition of $f_Q$ there exists a unique $Q$ that satisfies $\mathcal{T}_f(f_Q) = f_Q$, i.e., 
$$r\left(s, a_s\right)+\beta \cdot \mathbb{E}_{s^{\prime} \sim P\left(s, a_s\right)}\left[\log \left(\sum_{a^{\prime} \in \mathcal{A}} \exp Q\left(s^{\prime}, a^{\prime}\right)\right) \mid s, a_s\right]-Q\left(s, a_s\right)=0 \quad \forall s \in \mathcal{S}$$

Since $Q^\ast$ satisfies the system of equations \eqref{eq:HotzMillereqs}, $Q^\ast$ is the only solution to the system of equations.

Also, since $Q^\ast = \mathcal{T}Q^\ast = r(s,a)+\beta \cdot \mathbb{E}_{s^{\prime} \sim P(s, a)}\bigl[\log(\sum_{a^\prime\in\mathcal{A}}\exp Q^\ast(s^\prime, a^\prime)) \mid s, a\bigr]$ holds, we can identify $r$ as
\begin{align}
    r(s,a) &= Q^\ast(s, a) - \beta \cdot \mathbb{E}_{s^{\prime} \sim P(s, a)}\bigl[\log(\sum_{a^\prime\in\mathcal{A}}\exp Q^\ast(s^\prime, a^\prime)) \mid s, a\bigr] \notag
\end{align}

\end{proof}
\begin{lem}\label{lem:completeMetric} Suppose that $\mathcal{Q}$ consists of bounded functions on $\mathcal{S} \times \mathcal{A}$. Then $\mathcal{Q}$ is a Banach space with the supremum norm as the induced norm.
\end{lem}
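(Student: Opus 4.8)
The plan is to verify the three defining properties of a Banach space in turn: that $\mathcal{Q}$ is a real vector space, that $\|\cdot\|_\infty$ is a genuine norm on it, and—most substantively—that it is complete. The first two I would dispatch quickly. Under pointwise addition and scalar multiplication, any linear combination of bounded functions is again bounded, so $\mathcal{Q}$ is closed under these operations and forms a vector space. For the norm axioms, positive definiteness holds because $\|Q\|_\infty = \sup_{(s,a)}|Q(s,a)| = 0$ forces $Q \equiv 0$; absolute homogeneity is immediate from $\sup|\alpha Q| = |\alpha|\sup|Q|$; and the triangle inequality follows pointwise from $|Q_1(s,a)+Q_2(s,a)| \le |Q_1(s,a)|+|Q_2(s,a)| \le \|Q_1\|_\infty+\|Q_2\|_\infty$, after which I take the supremum over $(s,a)$.

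The core of the argument is completeness, which I would establish by the standard technique of building a candidate limit pointwise and exploiting the completeness of $\mathbb{R}$. Given a Cauchy sequence $(Q_n)$ in $(\mathcal{Q}, \|\cdot\|_\infty)$, I would first fix an arbitrary $(s,a)$ and note that $|Q_n(s,a)-Q_m(s,a)| \le \|Q_n-Q_m\|_\infty$, so $(Q_n(s,a))_n$ is Cauchy in $\mathbb{R}$; by completeness of $\mathbb{R}$ it converges to a real number I define as $Q(s,a)$. This defines a candidate limit function $Q$ pointwise.

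What remains—and where the genuine content lies—is showing that $Q$ actually belongs to $\mathcal{Q}$ (i.e., is bounded) and that $Q_n \to Q$ in sup norm rather than merely pointwise. I would handle both at once: given $\varepsilon>0$, choose $N$ with $\|Q_n-Q_m\|_\infty < \varepsilon$ for all $n,m \ge N$, so $|Q_n(s,a)-Q_m(s,a)| < \varepsilon$ for every $(s,a)$; letting $m \to \infty$ yields $|Q_n(s,a)-Q(s,a)| \le \varepsilon$ for every $(s,a)$ and every $n \ge N$, with a single $N$ working uniformly. Taking the supremum gives $\|Q_n - Q\|_\infty \le \varepsilon$, which is exactly sup-norm convergence, and boundedness of $Q$ follows from $\|Q\|_\infty \le \|Q - Q_N\|_\infty + \|Q_N\|_\infty \le \varepsilon + \|Q_N\|_\infty < \infty$, so $Q \in \mathcal{Q}$.

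The main obstacle—really the only non-formulaic step—is justifying the interchange of the limit $m\to\infty$ with the supremum over $(s,a)$: one cannot simply pass the supremum inside the limit, so the careful move is to establish the estimate $|Q_n(s,a)-Q(s,a)| \le \varepsilon$ pointwise but with $N$ independent of $(s,a)$, and only then take the supremum. I would also flag that finiteness of $\mathcal{S}\times\mathcal{A}$ is never used; the result holds for an arbitrary index set, which is reassuring given that the surrounding MDP setup does not require $\mathcal{S}$ to be finite.
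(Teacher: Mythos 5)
Your proof is correct, and it follows the same overall skeleton as the paper's: build the candidate limit $Q$ pointwise from the completeness of $\mathbb{R}$, then show $Q$ is bounded, then check the norm/metric axioms. The substantive difference is at the critical step, where your argument is actually more complete than the paper's. The paper stops after establishing pointwise convergence $Q_n(s,a)\to Q^*(s,a)$ and boundedness of $Q^*$ (and its boundedness argument asserts $\sup_{s,a}|Q^*(s,a)|=\lim_n\sup_{s,a}|Q_n(s,a)|$, an interchange of limit and supremum that is not justified as stated—only the inequality needed for boundedness follows, from the fact that Cauchy sequences are norm-bounded). Crucially, the paper never shows that $\|Q_n-Q^*\|_\infty\to 0$, which is what completeness actually requires; pointwise convergence to a bounded limit is strictly weaker. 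Your proof supplies exactly this missing step via the standard maneuver you correctly flag as the only non-formulaic part: fix $N$ from the Cauchy condition uniformly in $(s,a)$, let $m\to\infty$ in the pointwise estimate $|Q_n(s,a)-Q_m(s,a)|<\varepsilon$, and only then take the supremum, yielding $\|Q_n-Q\|_\infty\le\varepsilon$ for all $n\ge N$. Your observation that finiteness of $\mathcal{S}\times\mathcal{A}$ is never used is also accurate and relevant, since the lemma is invoked in a setting where $\mathcal{S}$ need not be finite. In short: same route, but your version closes a genuine gap in the paper's own writeup.
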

\begin{proof}
Suppose a sequence of functions $\left\{Q_n\right\}$ in $\mathcal{Q}$ is Cauchy in the supremum norm. We must show that $ Q_n\rightarrow Q^\ast$ as $n\rightarrow \infty$ for some $Q^\ast$ and $Q^\ast$ is also bounded. 
Note that $Q_n$ being Cauchy in sup norm implies that for every $(s, a)$, the sequence $\left\{Q_n(s, a)\right\}$ is Cauchy in $\mathbb{R}$. Since $\mathbb{R}$ is a complete space, every Cauchy sequence of real numbers has a limit; this allows us to define function $Q^\ast:\mathcal{S}\times \mathcal{A} \mapsto \mathbb{R}$ such that $Q^*(s, a)=\lim _{n \rightarrow \infty} Q_n(s, a)$. Then we can say that $Q_n(s, a) \rightarrow Q^*(s, a) $ for every $(s, a) \in \mathcal{S} \times \mathcal{A}$. Since each $Q_n$ is bounded, we take the limit and obtain:
$$
\sup _{s, a}\left|Q^*(s, a)\right|=\lim _{n \rightarrow \infty} \sup _{s, a}\left|Q_n(s, a)\right| \leq M
$$
which implies $Q^* \in \mathcal{Q}$.

\noindent Now what's left is to show that the supremum norm
$$
\|Q\|_{\infty}=\sup _{(s, a) \in \mathcal{S} \times \mathcal{A}}|Q(s, a)|
$$
induces the metric, i.e., 
$$
d\left(Q_1, Q_2\right):=\left\|Q_1-Q_2\right\|_{\infty}=\sup _{(s, a) \in \mathcal{S} \times \mathcal{A}}\left|Q_1(s, a)-Q_2(s, a)\right|
$$
The function $d$ satisfies the properties of a metric:

- Non-negativity: $d\left(Q_1, Q_2\right) \geq 0$ and $d\left(Q_1, Q_2\right)=0$ if and only if $Q_1=$ $Q_2$.

- Symmetry: $d\left(Q_1, Q_2\right)=d\left(Q_2, Q_1\right)$ by the absolute difference.

- Triangle inequality:
$$
d\left(Q_1, Q_3\right)=\sup _{s, a}\left|Q_1(s, a)-Q_3(s, a)\right| \leq \sup _{s, a}\left|Q_1(s, a)-Q_2(s, a)\right|+\sup _{s, a}\left|Q_2(s, a)-Q_3(s, a)\right|
$$
which shows $d\left(Q_1, Q_3\right) \leq d\left(Q_1, Q_2\right)+d\left(Q_2, Q_3\right)$.
\end{proof}

\subsection{Proof of Theorem 
\ref{thm:mainopt}}\label{sec:pfOfmainOpt}

Define $\hat{Q}$ as 
\begin{align}
    \hat{Q} &\in \underset{Q\in \mathcal{Q}}{\arg\min } \; \;\mathbb{E}_{(s, a)\sim \pi^*, \nu_0}  \left[-\log\left(\hat{p}_{Q}(a
\mid s)\right)\right] + \mathbb{E}_{(s, a)\sim \pi^*, \nu_0}\left[ \mathbf{1}_{a = a_s} \mathcal{L}_{BE}(Q)(s,a)\right] \tag{Equation \ref{eq:mainopt}}
\end{align}
From Lemma \ref{thm:MagnacThesmar}, it is sufficient to show that $\hat{Q}$
satisfies the equations \eqref{eq:HotzMillereqs} of Lemma \ref{thm:MagnacThesmar}, i.e., 
\begin{equation}
\left\{
\begin{array}{l}
    \dfrac{\exp({\hat{Q}}\left(s,a\right))}{\sum_{a^\prime\in \mathcal{A}} \exp({\hat{Q}}\left(s,a^\prime\right))} = \pi^*(a
    \mid s) \; \; \; \forall s\in \bar{\mathcal{S}}, a \in \mathcal{A}
    \\[1em]
    r(s, a_s)+\beta \cdot \mathbb{E}_{s^{\prime} \sim P(s, a_s)}\left[\log(\sum_{a^\prime\in\mathcal{A}}\exp \hat{Q}(s^\prime, a^\prime)) \mid s, a_s\right]-\hat{Q}(s, a_s)=0 \;\;\; \forall s\in \bar{\mathcal{S}}
    
\end{array}\tag{Equation \ref{eq:HotzMillereqs}}
\right. 
\end{equation}
where $\bar{\mathcal{S}}$ (the reachable states from $\nu_0$, $\pi^\ast$) was defined as:
$$
\bar{\mathcal{S}}=\left\{s \in \mathcal{S} \mid \operatorname{Pr}\left(s_t=s \mid s_0 \sim \nu_0, \pi^*\right)>0 \text { for some } t \geq 0\right\} 
$$
Now note that:
  \begin{align}
     &\left\{Q \in \mathcal{Q} \mid\hat{p}_{Q}(\;\cdot
    \mid s) = \pi^*(\;\cdot
    \mid s)\quad  \forall s\in\bar{\mathcal{S}}\quad\text{a.e.}\right\} \notag
    \\
    &=\underset{Q\in \mathcal{Q}}{\arg\max } \; \;\mathbb{E}_{(s, a)\sim \pi^*, \nu_0}  \left[\log\left(\hat{p}_{Q}(\;\cdot
    \mid s)\right)\right] \tag{$\because$ Lemma \ref{lem:minMLE}}
    \\
    &=\underset{Q\in \mathcal{Q}}{\arg\min } \; \;\mathbb{E}_{(s, a)\sim \pi^*, \nu_0}  \left[-\log\left(\hat{p}_{Q}(\;\cdot
    \mid s)\right)\right] \notag
    \end{align}
and 
    \begin{align}
     &\left\{Q \in \mathcal{Q} \mid\mathcal{L}_{BE}(Q)(s,a_s) = 0\quad  \forall s\in\bar{\mathcal{S}}\right\} \notag
    \\
    &=\underset{Q\in \mathcal{Q}}{\arg\min } \; \;\mathbb{E}_{(s, a)\sim \pi^*, \nu_0}  \left[\mathbf{1}_{a = a_s} \mathcal{L}_{BE}(Q)(s,a)\right] \notag
    \end{align}
Therefore what we want to prove, Equations \eqref{eq:HotzMillereqs}, becomes the following Equation \eqref{eq:modifiedHotz}:

\begin{equation}
\left\{
\begin{array}{l}
    \hat{Q} \in \underset{Q\in \mathcal{Q}}{\arg\min } \; \;\mathbb{E}_{(s, a)\sim \pi^*, \nu_0}  \left[-\log\left(\hat{p}_{Q}(\;\cdot
    \mid s)\right)\right] 
    \\[1em]
     \hat{Q} \in \underset{Q\in \mathcal{Q}}{\arg\min } \; \;\mathbb{E}_{(s, a)\sim \pi^*, \nu_0}  \left[\mathbf{1}_{a = a_s} \mathcal{L}_{BE}(Q)(s,a)\right]
    
\end{array} \tag{equation \ref{eq:modifiedHotz}}
\right. 
\end{equation}
where its solution set is nonempty by Lemma \ref{thm:MagnacThesmar}, i.e., 
$$ \underset{Q\in \mathcal{Q}}{\arg\min } \; \;\mathbb{E}_{(s, a)\sim \pi^*, \nu_0} \left[-\log\left(\hat{p}_{Q}(a
\mid s)\right)\right] \;\; \cap \;\; \underset{Q\in \mathcal{Q}}{\arg\min } \; \; \mathbb{E}_{(s, a)\sim \pi^*, \nu_0} \left[\mathbf{1}_{a = a_s}\mathcal{L}_{BE}(\hat{Q})(s,a)\right] \;\; \neq \;\; \emptyset$$ 

Under this non-emptiness, according to Lemma \ref{lem:sharingsol}, $\hat{Q}$ satisfies Equation \eqref{eq:modifiedHotz}. This implies that $\hat{Q}(s,a) = Q^\ast(s,a)$ for $s\in\bar{\mathcal{S}}$ and $a\in\mathcal{A}$, as the solution to set of Equations \eqref{eq:HotzMillereqs} is $Q^*$. This implies that 
\begin{align}
    r(s, a)=\hat{Q}(s, a)-\beta \cdot \mathbb{E}_{s^{\prime} \sim P(s, a)}\left[\log \left(\sum_{a^{\prime} \in \mathcal{A}} \exp \hat{Q}\left(s^{\prime}, a^{\prime}\right)\right) \mid s, a\right] \notag
\end{align}
for $s\in\bar{\mathcal{S}}$ and $a\in\mathcal{A}$.
\QED

\begin{lem}\label{lem:minMLE}
    \begin{align}
           \underset{Q\in \mathcal{Q}}{\arg\min } &\; \;\mathbb{E}_{(s, a)\sim \pi^*, \nu_0}  \left[-\log\left(\hat{p}_{Q}(\;\cdot
    \mid s)\right)\right] \notag
    \\
     &=\left\{Q \in \mathcal{Q} \mid\hat{p}_{Q}(\;\cdot
    \mid s) = \pi^*(\;\cdot
    \mid s)\quad  \forall s\in\bar{\mathcal{S}}\quad\text{a.e.}\right\}\notag
    \\
     &=\left\{Q \in \mathcal{Q} \mid Q(s,a_1)-Q(s,a_2)= Q^*(s,a_1)-Q^*(s,a_2) \quad \forall a_1, a_2\in\mathcal{A}, s\in\bar{\mathcal{S}}\right\} \notag
    \end{align}
\end{lem}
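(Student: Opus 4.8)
The plan is to prove the two claimed set equalities in turn—first that the argmin set (line~1) equals the policy-matching set (line~2), and then that the policy-matching set equals the difference-matching set (line~3)—and then chain them. Throughout I would read $\hat{p}_Q(\cdot\mid s)$ inside the $\log$ as the likelihood $\hat{p}_Q(a\mid s)=\exp(Q(s,a))/\sum_{a'}\exp(Q(s,a'))$ of the sampled action, and I would work with the state marginal $\rho$ of $s$ under $(\nu_0,\pi^*)$, whose support is exactly $\bar{\mathcal{S}}$ (so $\rho(s)>0$ there).

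For the first equality I would rewrite the objective as a $\rho$-weighted average of per-state cross-entropies and subtract the (fixed) entropy term to expose a Kullback--Leibler divergence:
$$\mathbb{E}_{(s,a)\sim\pi^*,\nu_0}\bigl[-\log\hat{p}_Q(a\mid s)\bigr]-\int_{\bar{\mathcal{S}}}H\bigl(\pi^*(\cdot\mid s)\bigr)\,d\rho(s)=\int_{\bar{\mathcal{S}}}\mathrm{KL}\bigl(\pi^*(\cdot\mid s)\,\big\|\,\hat{p}_Q(\cdot\mid s)\bigr)\,d\rho(s)\ \ge\ 0,$$
where the inequality is Gibbs' inequality applied stateswise. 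The right-hand integral of a nonnegative integrand vanishes iff the integrand is zero $\rho$-a.e., i.e.\ iff $\hat{p}_Q(\cdot\mid s)=\pi^*(\cdot\mid s)$ for $\rho$-a.e.\ $s\in\bar{\mathcal{S}}$. Since the lower bound $\int_{\bar{\mathcal{S}}}H(\pi^*(\cdot\mid s))\,d\rho(s)$ is actually attained—at $Q^*$, because $\hat{p}_{Q^*}=\pi^*$ by the characterization preceding Equation~\eqref{eq:QBellman} and $Q^*\in\mathcal{Q}$ by \citet{rust1994structural}—every global minimizer must saturate it, which forces exactly the a.e.\ policy-matching condition; conversely any such $Q$ saturates it and is a minimizer. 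This gives (1)$=$(2).

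For the second equality I would exploit that the softmax map is invariant precisely under stateswise additive shifts of $Q$. Taking logarithms in $\hat{p}_Q(a\mid s)=\pi^*(a\mid s)=\hat{p}_{Q^*}(a\mid s)$ yields $Q(s,a)-\log Z_Q(s)=Q^*(s,a)-\log Z_{Q^*}(s)$ with $Z_Q(s)=\sum_{a'}\exp Q(s,a')$, so $Q(s,a)-Q^*(s,a)=\log Z_Q(s)-\log Z_{Q^*}(s)=:c(s)$ is independent of $a$; conversely, if $Q(s,\cdot)-Q^*(s,\cdot)$ is constant in $a$ the factor $e^{c(s)}$ cancels in the softmax and $\hat{p}_Q(\cdot\mid s)=\pi^*(\cdot\mid s)$. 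Finally, ``$Q(s,\cdot)-Q^*(s,\cdot)$ constant in $a$'' is equivalent by subtraction to the pairwise condition $Q(s,a_1)-Q(s,a_2)=Q^*(s,a_1)-Q^*(s,a_2)$ for all $a_1,a_2\in\mathcal{A}$, giving (2)$=$(3).

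The main obstacle is the measure-theoretic bookkeeping in the first step: because the state space may be a continuum and the objective depends on $Q$ only through the pushforward measure $\rho$, the minimizer set is determined only up to $\rho$-null subsets of $\bar{\mathcal{S}}$ (hence the ``a.e.''), and one must justify that minimizing the integral coincides with stateswise minimization of the integrand. The lower-bound-and-attainment argument above is designed to sidestep this cleanly: it does \emph{not} require the class $\mathcal{Q}$ to independently realize each state's policy, only that the single function $Q^*\in\mathcal{Q}$ achieves the pointwise lower bound globally. With that in place, the remaining ingredients—Gibbs' inequality and the shift-invariance of the softmax—are routine.
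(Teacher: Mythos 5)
Your proof is correct and follows essentially the same route as the paper's: both decompose the expected negative log-likelihood into a statewise KL divergence plus the (fixed) entropy of $\pi^*$, use the fact that $Q^*\in\mathcal{Q}$ attains zero KL to identify the argmin with the policy-matching set, and then use the shift-invariance of the softmax to pass to the pairwise $Q$-difference characterization. Your phrasing of the last step via ``$Q(s,\cdot)-Q^*(s,\cdot)$ is constant in $a$'' is just a cosmetic variant of the paper's ratio-matching argument, so no substantive difference.
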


\begin{proof}[Proof of Lemma \ref{lem:minMLE}]
    \begin{align}
    \mathbb{E}_{(s, a)\sim \pi^*, \nu_0}  \left[\log\left(\hat{p}_{Q}(
    a
    \mid s)\right)\right] & = 
\mathbb{E}_{(s,a)\sim\pi^*, \nu_0} [\log \hat{p}_{Q}(a|s) - \ln \pi^*(a|s) + \ln \pi^*(a|s)]\notag \\
    &=-\mathbb{E}_{(s,a)\sim\pi^*, \nu_0} \left[\ln \frac{\pi^*(a|s)}{\hat{p}_{Q}(a|s)} \right] + \mathbb{E}_{(s,a)\sim\pi^*, \nu_0} [\ln \pi^*(a|s)]\notag \\
    &= -\mathbb{E}_{s\sim\pi^*, \nu_0} \left[D_{KL}(\pi^*(\cdot\mid s) \| \hat{p}_{Q}(\cdot\mid s))\right] + \mathbb{E}_{(s,a)\sim\pi^*, \nu_0} [\ln \pi^*(a|s)]\notag \notag
\end{align}
Therefore,
\begin{align}
    \underset{Q\in\mathcal{Q}}{\arg\max}&\; \mathbb{E}_{(s, a)\sim \pi^*, \nu_0}  \left[\log\left(\hat{p}_{Q}(\;\cdot
    \mid s)\right)\right] =\underset{Q\in\mathcal{Q}}{\arg\min}\;\mathbb{E}_{s\sim\pi^*, \nu_0} \left[D_{KL}(\pi^*(\cdot\mid s) \| \hat{p}_{Q}(\cdot\mid s))\right]\notag
    \\
    &=\{Q\in\mathcal{Q}\mid D_{KL}(\pi^*(\cdot\mid s) \| \hat{p}_{Q}(\cdot\mid s))=0 \text{ for all }s\in\bar{\mathcal{S}}\} \tag{$\because \; Q^*\in \mathcal{Q}$ and $D_{KL}(\pi^* \| \pi^*)=0$}\notag
    \\
    &=\{Q\in\mathcal{Q}\mid \hat{p}_Q(\cdot \mid s)=\pi^*(\cdot \mid s)\; \; \text{a.e.} \text{ for all }s\in\bar{\mathcal{S}}\}\notag
    \\
    &= \{Q\in\mathcal{Q}\mid \frac{\hat{p}_Q\left(a_1 \mid s\right)}{\hat{p}_Q\left(a_2 \mid s\right)}=\frac{\pi^*\left(a_1 \mid s\right)}{\pi^*\left(a_2 \mid s\right)} \quad \forall a_1, a_2 \in \mathcal{A}, s\in\bar{\mathcal{S}}\}\notag   
    \\
    &=\left\{Q \in \mathcal{Q} \mid \exp (Q(s,a_1)-Q(s,a_2))=\exp \left(Q^*(s,a_1)-Q^*(s,a_2)\right) \quad \forall a_1, a_2\in\mathcal{A}, s\in\bar{\mathcal{S}} \right\}\notag
    \\
    &=\left\{Q \in \mathcal{Q} \mid Q(s,a_1)-Q(s,a_2)= Q^*(s,a_1)-Q^*(s,a_2) \quad \forall a_1, a_2\in\mathcal{A}, s\in\bar{\mathcal{S}}\right\} \notag
\end{align}
\end{proof}

\begin{lem}\label{lem:sharingsol}
    Let $f_1: \mathcal{X} \rightarrow \mathbb{R}$ and $f_2: \mathcal{X} \rightarrow \mathbb{R}$ be two functions defined on a common domain $\mathcal{X}$. Suppose the sets of minimizers of $f_1$ and $f_2$ intersect, i.e.,
$$
\arg \min f_1 \cap \arg \min f_2 \neq \emptyset
$$
Then, any minimizer of the sum $f_1+f_2$ is also a minimizer of both $f_1$ and $f_2$ individually. That is, if
$$
x^* \in \arg \min \left(f_1+f_2\right)
$$
then
$$
x^* \in \arg \min f_1 \; \cap \; \arg \min f_2
$$
\end{lem}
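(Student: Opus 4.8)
The plan is to exploit the nonempty intersection to pin down the minimum value of the sum, and then use the pointwise lower bounds to force each summand to be individually minimal at any minimizer of $f_1+f_2$. First I would fix a witness $\bar{x}\in \arg\min f_1 \cap \arg\min f_2$, which exists by hypothesis, and abbreviate the two individual minima as $m_1 := f_1(\bar{x}) = \min_{x} f_1(x)$ and $m_2 := f_2(\bar{x}) = \min_{x} f_2(x)$.

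The first key step is to establish that $\min_x\,(f_1+f_2)(x) = m_1 + m_2$. For the lower bound, I would note that for every $x\in\mathcal{X}$ we have $f_1(x)\ge m_1$ and $f_2(x)\ge m_2$ by definition of the individual minima, so adding these gives $(f_1+f_2)(x)\ge m_1+m_2$ for all $x$. For the matching upper bound, evaluating at the witness gives $(f_1+f_2)(\bar{x}) = m_1+m_2$, so this bound is attained; hence $\bar{x}$ minimizes $f_1+f_2$ and the minimum value is exactly $m_1+m_2$.

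The second key step is the converse extraction. Let $x^\ast \in \arg\min(f_1+f_2)$ be arbitrary. By the previous step, $f_1(x^\ast)+f_2(x^\ast) = m_1+m_2$. I would then argue by the ``no slack'' principle: since $f_1(x^\ast)\ge m_1$ and $f_2(x^\ast)\ge m_2$, if either inequality were strict the sum would strictly exceed $m_1+m_2$, contradicting the displayed equality. Therefore both hold with equality, i.e. $f_1(x^\ast)=m_1$ and $f_2(x^\ast)=m_2$, which is precisely the assertion that $x^\ast\in\arg\min f_1\cap\arg\min f_2$.

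There is no genuine technical obstacle here; the argument is elementary and requires no structure on $\mathcal{X}$ (no topology, convexity, or differentiability) beyond the existence of the minima. The only conceptual point worth emphasizing is that the whole proof hinges on the intersection hypothesis: it is exactly what makes $\min(f_1+f_2)$ decompose additively as $m_1+m_2$. Without a common minimizer one would only have the generic bound $\min(f_1+f_2)\ge m_1+m_2$, which may be strict, and the conclusion would fail. I would make this explicit so the reader sees that the nonempty-intersection assumption (guaranteed in our application by Lemma~\ref{thm:MagnacThesmar}) is not merely convenient but the crux of the statement.
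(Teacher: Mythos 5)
Your proof is correct and follows essentially the same route as the paper's: fix a common minimizer, use it to bound (in your case, exactly determine) the minimum of the sum as $m_1+m_2$, and then conclude from the pointwise bounds $f_1(x^\ast)\ge m_1$, $f_2(x^\ast)\ge m_2$ that both must hold with equality. The paper phrases the final step as a proof by contradiction while you phrase it as a ``no slack'' argument, but these are the same reasoning.
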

\begin{proof}
    Since \( \arg\min f_1 \cap \arg\min f_2 \neq \emptyset \), let \( x^\dagger \) be a common minimizer such that $$
x^\dagger \in \arg\min f_1 \cap \arg\min f_2$$
This implies that  
\begin{align}
    f_1(x^\dagger) &= \min_{x \in \mathcal{X}} f_1(x) =: m_1, \notag
    \\
    f_2(x^\dagger) &= \min_{x \in \mathcal{X}} f_2(x) =: m_2. \notag
\end{align}

Now, let \( x^* \) be any minimizer of \( f_1 + f_2 \), so  
\begin{align}
    x^* \in \arg\min (f_1 + f_2) &\iff f_1(x^*) + f_2(x^*) \leq f_1(x) + f_2(x), \quad \forall x \in \mathcal{X}. \notag
\end{align}
Evaluating this at \( x^\dagger \), we obtain  
\begin{align}
    f_1(x^*) + f_2(x^*) &\leq f_1(x^\dagger) + f_2(x^\dagger) \notag
    \\
    &= m_1 + m_2. \notag
\end{align}

Now, suppose for contradiction that $x^* \notin \arg\min f_1$, 
meaning  
\begin{align}
    f_1(x^*) &> m_1 \notag
\end{align}
But then
\begin{align}
    f_2(x^*) & \le m_1 + m_2 - f_1(x^*) \notag
    \\
    &< m_1 + m_2 - m_1 = m_2 \notag
\end{align}

This contradicts the fact that \( m_2 = \min f_2 \), so \( x^* \) must satisfy  
\begin{align}
    f_1(x^*) &= m_1 \notag
\end{align}

By symmetry, assuming $x^* \notin \arg\min f_2$ leads to the same contradiction, forcing  
\begin{align}
    f_2(x^*) &= m_2 \notag
\end{align}

Thus, we conclude  
\begin{align}
    x^* \in \arg\min f_1 \cap \arg\min f_2 \notag
\end{align}
\end{proof}

\subsection{Grouped empirical notation for the PL proof}\label{app:empirical_grouping}
For the PL proof, we group the empirical averages by the distinct sampled states and state-action pairs that appear in the observed finite dataset \(\mathcal D\). The underlying state space may be continuous; the matrices below are finite-dimensional only because an empirical risk is a finite sample average. This grouping is not an additional statistical assumption; it is the standard rewriting of a finite empirical average by distinct sampled states and their empirical weights.

For notational convenience, write the finite observations entering the NLL, Bellman-error, and auxiliary \(\zeta\)-regression terms as
\[
\mathcal D_{\rm NLL}=\{(s_i^{\rm NLL},a_i^{\rm NLL})\}_{i=1}^{N_{\rm NLL}},
\quad
\mathcal D_{\rm BE}=\{(s_j^{\rm BE},a_j^{\rm BE},s_j^{\prime\rm BE})\}_{j=1}^{N_{\rm BE}},
\quad
\mathcal D_{\zeta}=\{(\bar s_k,\bar a_k,\bar s_k')\}_{k=1}^{N_\zeta}.
\]
These collections are not assumed to be independent or disjoint; they are objective-specific projections or re-indexings of the same observed dataset \(\mathcal D\). For the Bellman-error term, \(a_j^{\rm BE}\) denotes the anchor action \(a_{s_j^{\rm BE}}\).

Let $\mathcal X_{\rm NLL}$ be the set of distinct states in the NLL sample. For $x\in\mathcal X_{\rm NLL}$, define
\[
    n_x:=\#\{i:s_i^{\rm NLL}=x\},
    \qquad
    \omega_x^{\rm NLL}:={n_x\over N_{\rm NLL}},
    \qquad
    \widehat\pi_x(a):={\#\{i:s_i^{\rm NLL}=x,\,a_i^{\rm NLL}=a\}\over n_x}.
\]
Then the empirical NLL can be written exactly as
\[
    \widehat L_{\rm NLL}(\theta)
    :=
    \sum_{x\in\mathcal X_{\rm NLL}}
    \omega_x^{\rm NLL}
    \sum_{a\in\mathcal A}
    \widehat\pi_x(a)
    \left[-\log \hat p_{Q_\theta}(a\mid x)\right],
\]
where
\[
    \hat p_{Q_\theta}(a\mid s)
    =
    {\exp(Q_\theta(s,a))\over \sum_{b\in\mathcal A}\exp(Q_\theta(s,b))}.
\]
This is identical to the sample average
\[
    {1\over N_{\rm NLL}}
    \sum_{i=1}^{N_{\rm NLL}}
    \left[-Q_\theta(s_i^{\rm NLL},a_i^{\rm NLL})+V_{Q_\theta}(s_i^{\rm NLL})\right].
\]

Let $\mathcal X_{\rm BE}=\{x_1,\ldots,x_B\}$ be the set of distinct current anchor states in the Bellman-error sample. For each $x_j\in\mathcal X_{\rm BE}$, define
\[
    m_j:=\#\{\ell:s_\ell^{\rm BE}=x_j\},
    \qquad
    \omega_j^{\rm BE}:={m_j\over N_{\rm BE}},
\]
\[
    \widehat P_j(y):={\#\{\ell:s_\ell^{\rm BE}=x_j,\,s_\ell^{\prime\rm BE}=y\}\over m_j},
    \qquad
    \widehat r_j:={1\over m_j}\sum_{\ell:s_\ell^{\rm BE}=x_j}r(s_\ell^{\rm BE},a_\ell^{\rm BE}).
\]
If the reward is deterministic in $(s,a)$, then $\widehat r_j=r(x_j,a_{x_j})$. Define the grouped empirical Bellman residual
\[
    \Psi_j(\theta)
    :=
    \widehat r_j
    +\beta\sum_y\widehat P_j(y)V_{Q_\theta}(y)
    -Q_\theta(x_j,a_{x_j}),
    \qquad
    j=1,\ldots,B,
\]
and the empirical Bellman-error loss
\[
    \widehat L_{\rm BE}(\theta)
    :=
    \sum_{j=1}^B\omega_j^{\rm BE}\Psi_j(\theta)^2.
\]
When the current anchor states are distinct in the raw sample, this reduces to the ungrouped sample average. In the general case it is the squared Bellman residual of the empirical transition kernel induced by the sample.

For the auxiliary regression term, let $Z_\zeta=\{\bar z_1,\ldots,\bar z_G\}$ be the distinct state-action pairs in $\{(\bar s_k,\bar a_k)\}_{k=1}^{N_\zeta}$. For $g\in[G]$, define
\[
    n_g^\zeta:=\#\{k:(\bar s_k,\bar a_k)=\bar z_g\},
    \qquad
    \omega_g^\zeta:={n_g^\zeta\over N_\zeta},
\]
\[
    \bar y_g(\theta_2)
    :=
    {1\over n_g^\zeta}
    \sum_{k:(\bar s_k,\bar a_k)=\bar z_g}
    V_{Q_{\theta_2}}(\bar s_k').
\]
Up to an additive constant independent of $\phi$, the auxiliary regression loss is
\[
    \widehat F_\zeta(\phi;\theta_2)
    :=
    \beta^2
    \sum_{g=1}^G
    \omega_g^\zeta
    \left(\zeta_\phi(\bar z_g)-\bar y_g(\theta_2)\right)^2.
\]
The inner maximization in Equation \eqref{eq:paraEmp} is the maximization of $-\widehat F_\zeta$.

Let $\mathcal X_Q$ be the finite set of all states at which $Q_\theta$ is evaluated:
\[
    \mathcal X_Q
    :=
    \mathcal X_{\rm NLL}
    \cup
    \mathcal X_{\rm BE}
    \cup
    \bigcup_{j=1}^B\operatorname{supp}(\widehat P_j).
\]
The empirical $Q$-evaluation set is the full action-block evaluation set
\[
    Z_Q:=\mathcal X_Q\times\mathcal A
    =\{z_1,\ldots,z_M\}.
\]
Define
\[
    \mathbf Q_\theta(Z_Q):=(Q_\theta(z_1),\ldots,Q_\theta(z_M))^\top\in\mathbb R^M,
    \qquad
    J_Q(\theta;Z_Q):=D_\theta\mathbf Q_\theta(Z_Q)\in\mathbb R^{M\times d_\theta}.
\]
Similarly,
\[
    \boldsymbol\zeta_\phi(Z_\zeta):=(\zeta_\phi(\bar z_1),\ldots,\zeta_\phi(\bar z_G))^\top,
    \qquad
    J_\zeta(\phi;Z_\zeta):=D_\phi\boldsymbol\zeta_\phi(Z_\zeta).
\]
Writing $q_\theta:=\mathbf Q_\theta(Z_Q)$ and
$\boldsymbol\Psi(q_\theta):=(\Psi_1(\theta),\ldots,\Psi_B(\theta))^\top$, define the Bellman residual output Jacobian
\[
    A_\theta:=D_q\boldsymbol\Psi(q)\big|_{q=q_\theta}
    \in\mathbb R^{B\times M}.
\]
For later use, define
\[
    \omega_{\rm NLL,min}:=\min_{x\in\mathcal X_{\rm NLL}}\omega_x^{\rm NLL},
    \quad
    \omega_{\rm BE,min}:=\min_{j\in[B]}\omega_j^{\rm BE},
    \quad
    \omega_{\rm BE,max}:=\max_{j\in[B]}\omega_j^{\rm BE},
    \quad
    \omega_{\zeta,min}:=\min_{g\in[G]}\omega_g^\zeta.
\]

\subsection{Parametrization details for Assumption~\ref{ass:nonSingularJac}}\label{app:empirical_param_proofs}
\subsubsection{Proof of Lemma \ref{lem:linPolyNonsingular}}
\begin{proof}
Define
\[
    \Phi_Q(Z_Q)
    :=
    \begin{bmatrix}
        \varphi(z_1)^\top\\
        \vdots\\
        \varphi(z_M)^\top
    \end{bmatrix}
    \in\mathbb R^{M\times d_\theta},
    \qquad
    X_\zeta(Z_\zeta)
    :=
    \begin{bmatrix}
        \chi(\bar z_1)^\top\\
        \vdots\\
        \chi(\bar z_G)^\top
    \end{bmatrix}.
\]
For the linear $Q$ class,
\[
    \mathbf Q_\theta(Z_Q)=\Phi_Q(Z_Q)\theta,
    \qquad
    J_Q(\theta;Z_Q)=D_\theta\mathbf Q_\theta(Z_Q)=\Phi_Q(Z_Q).
\]
Hence
\[
    J_Q(\theta;Z_Q)J_Q(\theta;Z_Q)^\top
    =
    \Phi_Q(Z_Q)\Phi_Q(Z_Q)^\top.
\]
If this empirical Gram matrix is positive definite with minimum eigenvalue at least $\mu_Q>0$, then Assumption~\ref{ass:nonSingularJac}(ii) holds.

The smoothness and derivative bounds in Assumption~\ref{ass:nonSingularJac}(i) are immediate. Indeed,
\[
    \nabla_\theta Q_\theta(z)=\varphi(z),
    \qquad
    \nabla_\theta^2Q_\theta(z)=0.
\]
Since $Z_Q$ is finite and the feature evaluations are finite,
\[
    \sup_{z\in Z_Q}\|\nabla_\theta Q_\theta(z)\|_2
    =
    \max_{z\in Z_Q}\|\varphi(z)\|_2<\infty,
    \qquad
    \sup_{z\in Z_Q}\|\nabla_\theta^2Q_\theta(z)\|_{\rm op}=0.
\]

The auxiliary calculation is identical. Since
\[
    \boldsymbol\zeta_\phi(Z_\zeta)=X_\zeta(Z_\zeta)\phi,
    \qquad
    J_\zeta(\phi;Z_\zeta)=X_\zeta(Z_\zeta),
\]
the lower bound
\[
    X_\zeta(Z_\zeta)X_\zeta(Z_\zeta)^\top\succeq \mu_\zeta I_G
\]
implies Assumption~\ref{ass:nonSingularJac}(iii). Also,
\[
    \nabla_\phi\zeta_\phi(\bar z)=\chi(\bar z),
    \qquad
    \nabla_\phi^2\zeta_\phi(\bar z)=0,
\]
so the auxiliary derivative bounds in Assumption~\ref{ass:nonSingularJac}(i) follow because $Z_\zeta$ is finite.
\end{proof}

\begin{rem}[A probabilistic sufficient condition for linear features]
Conditional on the realized empirical evaluation set $Z_Q(\mathcal D)$, if the feature rows $\varphi(z_1)^\top,\ldots,\varphi(z_M)^\top$ are independent centered isotropic sub-Gaussian vectors in $\mathbb R^{d_\theta}$ and $d_\theta\ge C M$, then standard smallest-singular-value bounds for random rectangular matrices imply
\[
    \Phi_Q(Z_Q)\Phi_Q(Z_Q)^\top\succeq c d_\theta I_M
\]
with probability at least $1-e^{-c'd_\theta}$ for constants $c,c'>0$ depending only on the sub-Gaussian norm; see, for example, \citet{rudelson2009smallest}. Thus the empirical-output conditioning required in Assumption~\ref{ass:nonSingularJac} is the standard full-row-rank condition for over-parameterized linear systems.
\end{rem}

\subsubsection{Proof of Lemma \ref{lem:NNenjoysPL}}
\begin{proof}
We first verify Assumption~\ref{ass:nonSingularJac}(i). Since the hidden activations are $C^2$ and the layers are affine, each empirical output map $\theta\mapsto Q_\theta(z)$ is $C^2$ on $B_Q$, and each map $\phi\mapsto\zeta_\phi(\bar z)$ is $C^2$ on $B_\zeta$. The balls $B_Q$ and $B_\zeta$ are compact, and the empirical evaluation sets $Z_Q$ and $Z_\zeta$ are finite. Therefore the continuous functions
\[
    (\theta,z)\mapsto\|\nabla_\theta Q_\theta(z)\|_2,
    \qquad
    (\theta,z)\mapsto\|\nabla_\theta^2Q_\theta(z)\|_{\rm op}
\]
attain finite maxima on $B_Q\times Z_Q$, and the analogous maxima for $\zeta_\phi$ are finite on $B_\zeta\times Z_\zeta$. This proves the smoothness and bounded-derivative part of the assumption.

It remains to verify the empirical-output Jacobian conditions. The ERM objective depends on the finite vector $\mathbf Q_\theta(Z_Q(\mathcal D))$ induced by the observed continuous-state data. Thus the relevant tangent kernel is the finite matrix
\[
    K_Q(\theta;Z_Q(\mathcal D))
    =
    J_Q(\theta;Z_Q(\mathcal D))J_Q(\theta;Z_Q(\mathcal D))^\top,
\]
which is exactly the finite-output tangent kernel $D\mathcal F(\theta)D\mathcal F(\theta)^\top$ used in the PL$^*$ theory of \citet{liu2022loss}. The initial positive-definiteness step is supplied by standard finite-data NTK results: \citet{jacot2018neural} establish positive definiteness of limiting smooth NTKs under nondegenerate input conditions, while \citet{du2019gradient,allenzhu2019convergence} use analogous finite-data input conditions for over-parameterized ReLU networks. In a continuous-state setting, the hypotheses are satisfied on the corresponding full-probability event whenever the sampling distribution is not concentrated on the degeneracy sets excluded by those results, and accidental repeated evaluations are removed by the grouping in Appendix~\ref{app:empirical_grouping}. Hence the limiting empirical kernel satisfies
\[
    \lambda_\star:=\lambda_{\min}(K_\infty(Z_Q(\mathcal D)))>0.
\]

We first consider a network $\widetilde Q_\theta$ with a linear output layer. Define
\[
    \widetilde{\mathbf Q}_\theta(Z)
    :=
    (\widetilde Q_\theta(z_1),\ldots,\widetilde Q_\theta(z_M))^\top,
    \qquad
    \widetilde J(\theta;Z):=D_\theta\widetilde{\mathbf Q}_\theta(Z),
\]
and
\[
    \widetilde K(\theta;Z):=\widetilde J(\theta;Z)\widetilde J(\theta;Z)^\top.
\]
Write $Z=Z_Q(\mathcal D)$ for brevity. Conditional on the realized empirical evaluation set $Z$, finite-width NTK concentration at initialization gives, after increasing the width $m$ if necessary,
\[
    \|\widetilde K(\theta_0;Z)-K_\infty(Z)\|_2\le {\lambda_\star\over4}
\]
with probability at least $1-\delta/3$. This is the finite-width concentration step that connects the random finite network to the limiting NTK matrix; combined with the positive-definiteness results for nondegenerate finite inputs cited above, it gives a strictly positive initial empirical NTK eigenvalue. Weyl's inequality then yields
\[
    \lambda_{\min}(\widetilde K(\theta_0;Z))
    \ge
    \lambda_{\min}(K_\infty(Z))-
    \|\widetilde K(\theta_0;Z)-K_\infty(Z)\|_2
    \ge
    {3\lambda_\star\over4}.
\]
The same concentration bound gives a finite initialization upper bound
\[
    \|\widetilde J(\theta_0;Z)\|_2^2
    =
    \lambda_{\max}(\widetilde K(\theta_0;Z))
    \le
    \lambda_{\max}(K_\infty(Z))+{\lambda_\star\over4}
    =:C_0^2
\]
with probability at least $1-\delta/3$, after increasing $m$ if necessary.

It remains to control the kernel on the ball $B(\theta_0,R)$. The transition-to-linearity/Hessian-control theorem of \citet{liu2022loss} gives, for smooth wide networks with a linear output layer and every fixed finite radius $R$, a high-probability bound
\[
    \sup_{\theta\in B(\theta_0,R)}
    \max_{i\in[M]}
    \|\nabla_\theta^2\widetilde Q_\theta(z_i)\|_{\rm op}
    \le
    C_R(m),
    \qquad
    C_R(m)\to0
\]
as $m\to\infty$; in the notation of \citet{liu2022loss}, $C_R(m)=\widetilde O(R^{3L}/\sqrt m)$. For any $\theta\in B(\theta_0,R)$ and any $i\in[M]$, the fundamental theorem of calculus gives
\[
\begin{aligned}
    \nabla_\theta\widetilde Q_\theta(z_i)-\nabla_\theta\widetilde Q_{\theta_0}(z_i)
    &=
    \int_0^1
    \nabla_\theta^2\widetilde Q_{\theta_0+t(\theta-\theta_0)}(z_i)(\theta-\theta_0)\,dt,
\end{aligned}
\]
and therefore
\[
    \|\nabla_\theta\widetilde Q_\theta(z_i)-\nabla_\theta\widetilde Q_{\theta_0}(z_i)\|_2
    \le C_R(m)R.
\]
Consequently,
\[
    \|\widetilde J(\theta;Z)-\widetilde J(\theta_0;Z)\|_F
    \le
    \sqrt M C_R(m)R.
\]
Thus
\[
    \sup_{\theta\in B(\theta_0,R)}\|\widetilde J(\theta;Z)\|_2
    \le
    C_0+\sqrt M C_R(m)R
    =:C_J(m).
\]
Choose $m$ large enough that $C_J(m)\le 2C_0$ and
\[
    2C_J(m)\sqrt M C_R(m)R\le {\lambda_\star\over4}.
\]
Then, for every $\theta\in B(\theta_0,R)$,
\[
\begin{aligned}
    \|\widetilde K(\theta;Z)-\widetilde K(\theta_0;Z)\|_2
    &\le
    \|\widetilde J(\theta)(\widetilde J(\theta)-\widetilde J(\theta_0))^\top\|_2
    +
    \|(\widetilde J(\theta)-\widetilde J(\theta_0))\widetilde J(\theta_0)^\top\|_2\\
    &\le
    (\|\widetilde J(\theta;Z)\|_2+\|\widetilde J(\theta_0;Z)\|_2)
    \|\widetilde J(\theta;Z)-\widetilde J(\theta_0;Z)\|_2\\
    &\le
    2C_J(m)\sqrt M C_R(m)R
    \le
    {\lambda_\star\over4}.
\end{aligned}
\]
A second application of Weyl's inequality gives, for every $\theta\in B(\theta_0,R)$,
\[
    \lambda_{\min}(\widetilde K(\theta;Z))
    \ge
    {3\lambda_\star\over4}-{\lambda_\star\over4}
    =
    {\lambda_\star\over2}.
\]
Thus the empirical-output Jacobian condition holds with $\mu_Q=\lambda_\star/2$.

If the network is a coordinate-wise smooth output transformation of a linear-output network,
\[
    Q_\theta(z)=\sigma_{\rm out}(\widetilde Q_\theta(z)),
\]
and if
\[
    \rho:=\inf_{\theta\in B(\theta_0,R)}\min_{i\in[M]}
    |\sigma_{\rm out}'(\widetilde Q_\theta(z_i))|>0,
\]
then
\[
    J_Q(\theta;Z)=D_\sigma(\theta)\widetilde J(\theta;Z),
\]
where $D_\sigma(\theta)$ is diagonal with entries $\sigma_{\rm out}'(\widetilde Q_\theta(z_i))$. Hence
\[
    K_Q(\theta;Z)=D_\sigma(\theta)\widetilde K(\theta;Z)D_\sigma(\theta),
\]
and for every $v\in\mathbb R^M$,
\[
    v^\top K_Q(\theta;Z)v
    =
    (D_\sigma(\theta)v)^\top\widetilde K(\theta;Z)(D_\sigma(\theta)v)
    \ge
    {\lambda_\star\over2}\rho^2\|v\|_2^2.
\]
Thus the lower bound is preserved with $\mu_Q=\rho^2\lambda_\star/2$.

The proof for the auxiliary network $\zeta_\phi$ on the empirical evaluation set $Z_\zeta(\mathcal D)$ is identical.
\end{proof}

\subsection{Technical output-space lemmas}\label{app:technical_output_lemmas}

\begin{lem}[Empirical-output PL transfer]
\label{lem:finite_output_pl_transfer}
Let $q_\theta=\mathbf Q_\theta(Z_Q)$ and $J_Q(\theta;Z_Q)=D_\theta q_\theta$. Suppose
\[
    J_Q(\theta;Z_Q)J_Q(\theta;Z_Q)^\top\succeq \mu_Q I_M,
    \qquad
    \forall\theta\in B_Q.
\]
Let $\varphi:\mathbb R^M\to\mathbb R$ be differentiable and suppose that $\varphi(q_\theta)$ satisfies a PL inequality on $\{q_\theta:\theta\in B_Q\}$ with constant $\alpha>0$. Then $L(\theta):=\varphi(q_\theta)$ satisfies a PL inequality on $B_Q$ with constant $\alpha\mu_Q$.
\end{lem}

\subsection{Proof of Lemma \ref{lem:finite_output_pl_transfer}}
\begin{proof}
By the chain rule,
\[
    \nabla_\theta L(\theta)
    =
    J_Q(\theta;Z_Q)^\top\nabla_q\varphi(q_\theta).
\]
Therefore
\[
\begin{aligned}
    \|\nabla_\theta L(\theta)\|_2^2
    &=
    \|J_Q(\theta;Z_Q)^\top\nabla_q\varphi(q_\theta)\|_2^2\\
    &=
    \nabla_q\varphi(q_\theta)^\top
    J_Q(\theta;Z_Q)J_Q(\theta;Z_Q)^\top
    \nabla_q\varphi(q_\theta)\\
    &\ge
    \mu_Q\|\nabla_q\varphi(q_\theta)\|_2^2\\
    &\ge
    2\alpha\mu_Q\bigl(\varphi(q_\theta)-\varphi_B^*\bigr).
\end{aligned}
\]
Dividing by two gives the result, because $L(\theta)=\varphi(q_\theta)$ and $L_B^*=\varphi_B^*$ by definition.
\end{proof}

\begin{lem}[Bellman constant-shift coercivity]
\label{lem:bellman_shift_coercivity}
Let $\mathcal U\subset\mathbb R^M$ be the product of within-state zero-sum action-logit subspaces over the action blocks in $Z_Q=\mathcal X_Q\times\mathcal A$, and let $P_\perp$ denote the orthogonal projection onto $\mathcal U^\perp$. Then the empirical Bellman residual output Jacobian satisfies
\[
    \|P_\perp A_\theta^\top u\|_2
    \ge
    \gamma_\perp\|u\|_2,
    \qquad
    \forall u\in\mathbb R^B,
    \quad \forall\theta\in B_Q,
\]
where one may take
\[
    \gamma_\perp:={1-\beta\over\sqrt{|\mathcal A|\,|\mathcal X_Q|}}>0.
\]
Consequently,
\[
    A_\theta A_\theta^\top\succeq \gamma_\perp^2 I_B,
    \qquad
    \forall\theta\in B_Q.
\]
\end{lem}

\subsection{Proof of the Bellman constant-shift coercivity lemma}
\begin{proof}
For a state $x\in\mathcal X_Q$, let $\mathbf 1_x\in\mathbb R^{|\mathcal A|}$ denote the all-ones vector on the action block at $x$. The orthogonal complement $\mathcal U^\perp$ consists exactly of vectors that are constant over each action block. The orthogonal projection of an action-coordinate vector $e_{(x,a)}$ onto $\mathcal U^\perp$ is
\[
    P_\perp e_{(x,a)}={1\over |\mathcal A|}\mathbf 1_x,
    \qquad
    \left\|{1\over |\mathcal A|}\mathbf 1_x\right\|_2={1\over\sqrt{|\mathcal A|}}.
\]
Moreover, for every $x$ and every probability vector $p(\cdot\mid x)$,
\[
    P_\perp\sum_{a\in\mathcal A}p(a\mid x)e_{(x,a)}
    ={1\over |\mathcal A|}\mathbf 1_x.
\]
Let $u\in\mathbb R^B$. Since
\[
    \nabla_q\Psi_j(q_\theta)
    =
    \beta\sum_y\widehat P_j(y)
    \sum_{a\in\mathcal A}\hat p_{Q_\theta}(a\mid y)e_{(y,a)}
    -e_{(x_j,a_{x_j})},
\]
we have
\[
    P_\perp A_\theta^\top u
    ={1\over |\mathcal A|}
    \sum_{x\in\mathcal X_Q}
    \left(\beta(Tu)_x-(Eu)_x\right)\mathbf 1_x,
\]
where the linear maps $E,T:\mathbb R^B\to\mathbb R^{|\mathcal X_Q|}$ are defined by
\[
    (Eu)_x:=\sum_{j:x_j=x}u_j,
    \qquad
    (Tu)_x:=\sum_{j=1}^B\widehat P_j(x)u_j.
\]
Because the states $x_1,\ldots,x_B$ are distinct after grouping, each column of $E$ has exactly one nonzero entry and no two columns share that entry. Hence
\[
    \|Eu\|_1=\|u\|_1.
\]
Because each $\widehat P_j$ is a probability distribution, $T$ is column-stochastic with nonnegative entries, and therefore
\[
    \|Tu\|_1\le \|u\|_1.
\]
Thus
\[
    \|Eu-\beta Tu\|_1
    \ge
    \|Eu\|_1-\beta\|Tu\|_1
    \ge
    (1-\beta)\|u\|_1
    \ge
    (1-\beta)\|u\|_2.
\]
Since $Eu-\beta Tu\in\mathbb R^{|\mathcal X_Q|}$,
\[
    \|Eu-\beta Tu\|_2
    \ge
    {1\over\sqrt{|\mathcal X_Q|}}\|Eu-\beta Tu\|_1.
\]
Finally,
\[
\begin{aligned}
    \|P_\perp A_\theta^\top u\|_2
    &=
    {1\over\sqrt{|\mathcal A|}}
    \|\beta Tu-Eu\|_2\\
    &\ge
    {1-\beta\over\sqrt{|\mathcal A|\,|\mathcal X_Q|}}
    \|u\|_2.
\end{aligned}
\]
This proves the asserted lower bound on $\sigma_{\min}(P_\perp A_\theta^\top)$. Since projection cannot increase norm,
\[
    \|A_\theta^\top u\|_2\ge \|P_\perp A_\theta^\top u\|_2\ge\gamma_\perp\|u\|_2,
\]
which is equivalent to $A_\theta A_\theta^\top\succeq\gamma_\perp^2 I_B$.
\end{proof}

\subsection{Proof of Lemma \ref{lem:BE_PL}}
\begin{proof}
Let
\[
    \boldsymbol\Psi(q_\theta):=(\Psi_1(\theta),\ldots,\Psi_B(\theta))^\top,
    \qquad
    W_{\rm BE}:=\operatorname{diag}(\omega_1^{\rm BE},\ldots,\omega_B^{\rm BE}).
\]
Then
\[
    \widehat L_{\rm BE}(\theta)
    =
    \boldsymbol\Psi(q_\theta)^\top W_{\rm BE}\boldsymbol\Psi(q_\theta).
\]
The output-space gradient is
\[
    \nabla_q\widehat L_{\rm BE}(q_\theta)
    =
    2A_\theta^\top W_{\rm BE}\boldsymbol\Psi(q_\theta).
\]
Using the Bellman constant-shift coercivity lemma,
\[
\begin{aligned}
    {1\over2}\|\nabla_q\widehat L_{\rm BE}(q_\theta)\|_2^2
    &=
    2\|A_\theta^\top W_{\rm BE}\boldsymbol\Psi(q_\theta)\|_2^2\\
    &\ge
    2\gamma_\perp^2\|W_{\rm BE}\boldsymbol\Psi(q_\theta)\|_2^2\\
    &\ge
    2\gamma_\perp^2(\omega_{\rm BE,min})^2
    \|\boldsymbol\Psi(q_\theta)\|_2^2\\
    &\ge
    2\gamma_\perp^2{(\omega_{\rm BE,min})^2\over\omega_{\rm BE,max}}
    \widehat L_{\rm BE}(\theta).
\end{aligned}
\]
Thus the Bellman-error objective is output-space PL with constant
\[
    2\gamma_\perp^2{(\omega_{\rm BE,min})^2\over\omega_{\rm BE,max}}.
\]
Applying Lemma~\ref{lem:finite_output_pl_transfer} gives the stated parameter-space PL inequality.
\end{proof}

\subsection{Proof of Lemma \ref{lem:NLL_PL}}
\begin{proof}
For $x\in\mathcal X_{\rm NLL}$, write
\[
    p_{\theta,x}:=\hat p_{Q_\theta}(\cdot\mid x)\in\Delta(\mathcal A),
    \qquad
    \widehat\pi_x\in\Delta(\mathcal A).
\]
We first record the empirical softmax lower bound used below. By Assumption~\ref{ass:nonSingularJac}(i), $\theta\mapsto Q_\theta(z)$ is continuous for every $z\in Z_Q$. Since $B_Q$ is compact and $Z_Q$ is finite, there exist finite constants $Q_-<Q_+$ such that
\[
    Q_-\le Q_\theta(z)\le Q_+,
    \qquad
    \forall z\in Z_Q,
    \quad
    \forall \theta\in B_Q.
\]
Therefore, for every $x\in\mathcal X_Q$, $a\in\mathcal A$, and $\theta\in B_Q$,
\[
    \hat p_{Q_\theta}(a\mid x)
    =
    {\exp(Q_\theta(x,a))\over \sum_{b\in\mathcal A}\exp(Q_\theta(x,b))}
    \ge
    {\exp(Q_-)\over |\mathcal A|\exp(Q_+)}
    =:p_->0.
\]
The NLL loss can be decomposed as
\[
\begin{aligned}
    \widehat L_{\rm NLL}(\theta)
    &=
    \sum_{x\in\mathcal X_{\rm NLL}}
    \omega_x^{\rm NLL}
    \left[-\sum_{a\in\mathcal A}\widehat\pi_x(a)\log p_{\theta,x}(a)\right]\\
    &=
    \sum_{x\in\mathcal X_{\rm NLL}}
    \omega_x^{\rm NLL}
    H(\widehat\pi_x)
    +
    \sum_{x\in\mathcal X_{\rm NLL}}
    \omega_x^{\rm NLL}
    \mathrm{KL}(\widehat\pi_x\|p_{\theta,x}),
\end{aligned}
\]
where $H(\widehat\pi_x)$ is the entropy of the empirical action distribution at state $x$ and does not depend on $\theta$.

The output-space gradient on the action block for state $x$ is
\[
    \nabla_{q_x}\widehat L_{\rm NLL}(q_\theta)
    =
    \omega_x^{\rm NLL}(p_{\theta,x}-\widehat\pi_x).
\]
Therefore
\[
    {1\over2}\|\nabla_q\widehat L_{\rm NLL}(q_\theta)\|_2^2
    =
    {1\over2}
    \sum_{x\in\mathcal X_{\rm NLL}}
    (\omega_x^{\rm NLL})^2
    \|p_{\theta,x}-\widehat\pi_x\|_2^2.
\]
We use the elementary inequality
\[
    \mathrm{KL}(\pi\|p)
    \le
    {1\over p_-}\|\pi-p\|_2^2
\]
which holds for every pair of probability vectors $\pi,p\in\Delta(\mathcal A)$ with $p(a)\ge p_-$ for all $a$. To verify it, write $r_a=\pi(a)/p(a)$ and use
\[
    r\log r-r+1\le (r-1)^2,
    \qquad r\ge0.
\]
Since $\sum_a p(a)(r_a-1)=0$,
\[
\begin{aligned}
    \mathrm{KL}(\pi\|p)
    &=
    \sum_a p(a)r_a\log r_a
    =
    \sum_a p(a)(r_a\log r_a-r_a+1)\\
    &\le
    \sum_a p(a)(r_a-1)^2
    =
    \sum_a {(\pi(a)-p(a))^2\over p(a)}
    \le
    {1\over p_-}\|\pi-p\|_2^2.
\end{aligned}
\]
Hence
\[
    \|p_{\theta,x}-\widehat\pi_x\|_2^2
    \ge
    p_-\,\mathrm{KL}(\widehat\pi_x\|p_{\theta,x}).
\]
It follows that
\[
\begin{aligned}
    {1\over2}\|\nabla_q\widehat L_{\rm NLL}(q_\theta)\|_2^2
    &\ge
    {\omega_{\rm NLL,min}p_-\over2}
    \sum_{x\in\mathcal X_{\rm NLL}}
    \omega_x^{\rm NLL}
    \mathrm{KL}(\widehat\pi_x\|p_{\theta,x})\\
    &\ge
    {\omega_{\rm NLL,min}p_-\over2}
    \left(\widehat L_{\rm NLL}(\theta)-\widehat L_{\rm NLL}^*\right),
\end{aligned}
\]
because $\widehat L_{\rm NLL}^*$ is at least the entropy term
$\sum_x\omega_x^{\rm NLL}H(\widehat\pi_x)$.
Applying Lemma~\ref{lem:finite_output_pl_transfer} gives
\[
    {1\over2}\|\nabla_\theta\widehat L_{\rm NLL}(\theta)\|_2^2
    \ge
    {\mu_Qp_-\omega_{\rm NLL,min}\over2}
    \left(\widehat L_{\rm NLL}(\theta)-\widehat L_{\rm NLL}^*\right).
\]
\end{proof}

\subsection{Proof of Lemma \ref{lem:zeta_empirical_pl}}
\begin{proof}
Let
\[
    z_\phi:=\boldsymbol\zeta_\phi(Z_\zeta),
    \qquad
    \bar y(\theta_2):=(\bar y_1(\theta_2),\ldots,\bar y_G(\theta_2))^\top,
    \qquad
    W_\zeta:=\operatorname{diag}(\omega_1^\zeta,\ldots,\omega_G^\zeta).
\]
Up to an additive constant independent of $\phi$,
\[
    \widehat F_\zeta(\phi;\theta_2)
    =
    \beta^2(z_\phi-\bar y(\theta_2))^\top W_\zeta(z_\phi-\bar y(\theta_2)).
\]
Consequently,
\[
    \widehat F_\zeta(\phi;\theta_2)-\widehat F_\zeta^*(\theta_2)
    \le
    \beta^2(z_\phi-\bar y(\theta_2))^\top W_\zeta(z_\phi-\bar y(\theta_2)).
\]
The output-space gradient is
\[
    \nabla_z\widehat F_\zeta(z_\phi;\theta_2)
    =
    2\beta^2 W_\zeta(z_\phi-\bar y(\theta_2)).
\]
Therefore
\[
\begin{aligned}
    {1\over2}\|\nabla_z\widehat F_\zeta(z_\phi;\theta_2)\|_2^2
    &=
    2\beta^4\|W_\zeta(z_\phi-\bar y(\theta_2))\|_2^2\\
    &\ge
    2\beta^4\omega_{\zeta,min}
    (z_\phi-\bar y(\theta_2))^\top W_\zeta(z_\phi-\bar y(\theta_2))\\
    &\ge
    2\beta^2\omega_{\zeta,min}
    \left(\widehat F_\zeta(\phi;\theta_2)-\widehat F_\zeta^*(\theta_2)\right).
\end{aligned}
\]
By the chain rule,
\[
    \nabla_\phi\widehat F_\zeta(\phi;\theta_2)
    =
    J_\zeta(\phi;Z_\zeta)^\top
    \nabla_z\widehat F_\zeta(z_\phi;\theta_2).
\]
Using Assumption~\ref{ass:nonSingularJac}(iii),
\[
\begin{aligned}
    {1\over2}\|\nabla_\phi\widehat F_\zeta(\phi;\theta_2)\|_2^2
    &\ge
    {\mu_\zeta\over2}
    \|\nabla_z\widehat F_\zeta(z_\phi;\theta_2)\|_2^2\\
    &\ge
    2\beta^2\mu_\zeta\omega_{\zeta,min}
    \left(\widehat F_\zeta(\phi;\theta_2)-\widehat F_\zeta^*(\theta_2)\right).
\end{aligned}
\]
This proves the minimization PL inequality. The maximization formulation follows by replacing $\widehat F_\zeta$ by $-\widehat F_\zeta$.
\end{proof}

\begin{lem}[Automatic control of NLL--Bellman gradient interactions]
\label{lem:auto_cancellation_control}
Let $P_\parallel$ and $P_\perp$ be the orthogonal projections onto $\mathcal U$ and $\mathcal U^\perp$, respectively, and define
\[
    C_\parallel:=\sup_{\theta\in B_Q}\|P_\parallel A_\theta^\top\|_2,
    \qquad
    \kappa_\perp:={C_\parallel\over \gamma_\perp}.
\]
Then there exist constants $a_{\rm NLL},a_{\rm BE}>0$ such that, for all $\theta\in B_Q$,
\[
    \|\nabla_q\widehat L_{\rm NLL}(q_\theta)
      +\nabla_q\widehat L_{\rm BE}(q_\theta)\|_2^2
    \ge
    a_{\rm NLL}\|\nabla_q\widehat L_{\rm NLL}(q_\theta)\|_2^2
    +a_{\rm BE}\widehat L_{\rm BE}(\theta).
\]
In particular, the combined $Q$-objective does not require a separate nonnegative-gradient-cross-term assumption.
\end{lem}

\subsection{Proof of Lemma \ref{lem:auto_cancellation_control}}
\begin{proof}
For every $x\in\mathcal X_{\rm NLL}$,
\[
    \sum_{a\in\mathcal A}
    \left(p_{\theta,x}(a)-\widehat\pi_x(a)\right)=0.
\]
The NLL gradient is zero on state blocks not belonging to $\mathcal X_{\rm NLL}$. Therefore the full output-space NLL gradient belongs to $\mathcal U$, and
\[
    P_\perp\nabla_q\widehat L_{\rm NLL}(q_\theta)=0.
\]

The constant $C_\parallel$ is finite because $\theta\mapsto A_\theta$ is continuous on the compact ball $B_Q$. Let
\[
    g_{\rm NLL}:=\nabla_q\widehat L_{\rm NLL}(q_\theta),
    \qquad
    g_{\rm BE}:=\nabla_q\widehat L_{\rm BE}(q_\theta)
    =2A_\theta^\top W_{\rm BE}\boldsymbol\Psi(q_\theta).
\]
Decompose
\[
    g_{\rm BE}=g_\parallel+g_\perp,
    \qquad
    g_\parallel:=P_\parallel g_{\rm BE},
    \qquad
    g_\perp:=P_\perp g_{\rm BE}.
\]
Since $g_{\rm NLL}\in\mathcal U$ and $g_\perp\in\mathcal U^\perp$,
\[
    \|g_{\rm NLL}+g_{\rm BE}\|_2^2
    =
    \|g_{\rm NLL}+g_\parallel\|_2^2+
    \|g_\perp\|_2^2.
\]
By the definitions of $C_\parallel$ and $\gamma_\perp$,
\[
    \|g_\parallel\|_2
    \le
    2C_\parallel\|W_{\rm BE}\boldsymbol\Psi(q_\theta)\|_2,
    \qquad
    \|g_\perp\|_2
    \ge
    2\gamma_\perp\|W_{\rm BE}\boldsymbol\Psi(q_\theta)\|_2.
\]
Hence
\[
    \|g_\parallel\|_2\le \kappa_\perp\|g_\perp\|_2.
\]
For arbitrary vectors $a,b,c$ with $a,b$ in the same Euclidean space, $c$ orthogonal to that space, and $\|b\|\le\kappa\|c\|$, the following deterministic inequality holds:
\[
    \|a+b\|^2+\|c\|^2
    \ge
    {1\over2(1+\kappa^2)}\|a\|^2
    +{1\over2}\|c\|^2.
\]
If $\kappa=0$, then $b=0$ and the displayed inequality is immediate. Suppose $\kappa>0$. Since $\|a+b\|\ge \max\{\|a\|-\|b\|,0\}$ and $\|b\|\le\kappa\|c\|$, the left-hand side is bounded below by
\[
    \max\{\|a\|-t,0\}^2+{t^2\over\kappa^2}
\]
for some $t\ge0$. Minimizing this expression over $t\ge0$ gives $\|a\|^2/(1+\kappa^2)$. The left-hand side is also at least $\|c\|^2$. Averaging these two lower bounds gives the displayed inequality.

Applying the inequality with $a=g_{\rm NLL}$, $b=g_\parallel$, $c=g_\perp$, and $\kappa=\kappa_\perp$ gives
\[
    \|g_{\rm NLL}+g_{\rm BE}\|_2^2
    \ge
    {1\over2(1+\kappa_\perp^2)}
    \|g_{\rm NLL}\|_2^2
    +{1\over2}\|g_\perp\|_2^2.
\]
Finally,
\[
\begin{aligned}
    \|g_\perp\|_2^2
    &\ge
    4\gamma_\perp^2\|W_{\rm BE}\boldsymbol\Psi(q_\theta)\|_2^2\\
    &\ge
    4\gamma_\perp^2(\omega_{\rm BE,min})^2
    \|\boldsymbol\Psi(q_\theta)\|_2^2\\
    &\ge
    4\gamma_\perp^2{(\omega_{\rm BE,min})^2\over\omega_{\rm BE,max}}
    \widehat L_{\rm BE}(\theta).
\end{aligned}
\]
Substitution proves the lemma.
\end{proof}

\subsection{Proof of Theorem \ref{thm:bothPL}}\label{sec:PfofbothPL}
\begin{proof}
By Lemma~\ref{lem:NLL_PL} before applying the empirical-output transfer step, the empirical NLL has output-space PL constant
\[
    \alpha_{\rm NLL}={p_-\omega_{\rm NLL,min}\over2},
\]
meaning
\[
    {1\over2}\|\nabla_q\widehat L_{\rm NLL}(q_\theta)\|_2^2
    \ge
    \alpha_{\rm NLL}
    \left(\widehat L_{\rm NLL}(\theta)-\widehat L_{\rm NLL}^*\right).
\]
Lemma~\ref{lem:auto_cancellation_control} gives
\[
\begin{aligned}
    {1\over2}
    \|\nabla_q(\widehat L_{\rm NLL}+\widehat L_{\rm BE})(q_\theta)\|_2^2
    &\ge
    {1\over4(1+\kappa_\perp^2)}
    \|\nabla_q\widehat L_{\rm NLL}(q_\theta)\|_2^2\\
    &\quad+
    \gamma_\perp^2{(\omega_{\rm BE,min})^2\over\omega_{\rm BE,max}}
    \widehat L_{\rm BE}(\theta)\\
    &\ge
    {\alpha_{\rm NLL}\over2(1+\kappa_\perp^2)}
    \left(\widehat L_{\rm NLL}(\theta)-\widehat L_{\rm NLL}^*\right)\\
    &\quad+
    \gamma_\perp^2{(\omega_{\rm BE,min})^2\over\omega_{\rm BE,max}}
    \widehat L_{\rm BE}(\theta).
\end{aligned}
\]
Since $\widehat L_{\rm BE}(\theta)\ge \widehat L_{\rm BE}(\theta)-\widehat L_{\rm BE}^*$ and
\[
    \widehat R_Q^*
    =
    \inf_\theta\{\widehat L_{\rm NLL}(\theta)+\widehat L_{\rm BE}(\theta)\}
    \ge
    \widehat L_{\rm NLL}^*+\widehat L_{\rm BE}^*,
\]
we have
\[
    \left(\widehat L_{\rm NLL}(\theta)-\widehat L_{\rm NLL}^*\right)
    +
    \left(\widehat L_{\rm BE}(\theta)-\widehat L_{\rm BE}^*\right)
    \ge
    \widehat R_Q(\theta)-\widehat R_Q^*.
\]
Therefore
\[
    {1\over2}
    \|\nabla_q\widehat R_Q(q_\theta)\|_2^2
    \ge
    \alpha_{\widehat R}
    \left(\widehat R_Q(q_\theta)-\widehat R_Q^*\right),
\]
where
\[
    \alpha_{\widehat R}
    =
    \min\left\{
        {p_-\omega_{\rm NLL,min}\over4(1+\kappa_\perp^2)},
        \gamma_\perp^2{(\omega_{\rm BE,min})^2\over\omega_{\rm BE,max}}
    \right\}.
\]
Applying Lemma~\ref{lem:finite_output_pl_transfer} to $\widehat R_Q(q_\theta)$ gives
\[
    {1\over2}\|\nabla_\theta\widehat R_Q(\theta)\|_2^2
    \ge
    \mu_Q\alpha_{\widehat R}
    \left(\widehat R_Q(\theta)-\widehat R_Q^*\right).
\]
Thus one admissible PL constant in Theorem~\ref{thm:bothPL} is
\[
    c_{\widehat R}
    =
    \mu_Q
    \min\left\{
        {p_-\omega_{\rm NLL,min}\over4(1+\kappa_\perp^2)},
        \gamma_\perp^2{(\omega_{\rm BE,min})^2\over\omega_{\rm BE,max}}
    \right\}.
\]
The auxiliary statement is exactly Lemma~\ref{lem:zeta_empirical_pl}.
\end{proof}

\subsection{Proof of Theorem \ref{prop:linConvergence} (empirical convergence and population excess-risk control)}
\label{sec:ProofLinConv}

\subsubsection*{1. Empirical optimization error}
By Theorem~\ref{thm:bothPL}, the empirical $Q$-objective satisfies a PL inequality in $\theta$ with some constant $c_{\widehat R}>0$. By Lemma~\ref{lem:zeta_empirical_pl}, for every fixed $\theta_2$, the inner auxiliary loss satisfies PL in $\phi$, and therefore the maximization objective $-\widehat F_\zeta(\phi;\theta_2)$ satisfies the corresponding PL inequality for ascent. The inner objective is a negative squared loss in the finite vector $\boldsymbol\zeta_\phi(Z_\zeta)$, hence it is concave in that empirical output vector. Under Assumption~\ref{ass:nonSingularJac}(iii), the empirical-output parametrization transfers this output-space curvature to the parameterized inner problem in the PL sense. Consequently, the empirical minimax objective has the two-sided PL structure required by Theorem~3.3 of \citet{yang2020global}.

Let $f(\theta,\phi)$ denote the empirical minimax objective, and define
\[
    g(\theta):=\max_\phi f(\theta,\phi),
    \qquad
    g^*:=\min_\theta g(\theta).
\]
Let
\[
    a(\theta):=g(\theta)-g^*,
    \qquad
    b(\theta,
    \phi):=g(\theta)-f(\theta,\phi).
\]
Both $a(\theta)$ and $b(\theta,\phi)$ are nonnegative and vanish at a minimax point. Define the potential
\[
    P_T:=a(\widehat\theta_T)+\alpha b(\widehat\theta_T,\widehat\phi_T),
\]
where $\alpha=1/10$ as in \citet{yang2020global}. Under the stepsize choices and bounded-variance assumptions stated in Theorem~3.3 of \citet{yang2020global}, there exist constants $\nu,\gamma>0$ such that
\[
    P_T\le {\nu\over \gamma+T}.
\]
Since $a(\widehat\theta_T)\le P_T$, we obtain
\[
    \widehat R_Q(\widehat\theta_T)-\widehat R_Q^*
    =a(\widehat\theta_T)
    \le {\nu\over \gamma+T}.
\]
This proves the empirical excess-risk bound.

Because $\widehat R_Q$ is twice continuously differentiable as a function of the empirical evaluation vector and satisfies the PL inequality on $B_Q$, the local PL--quadratic-growth equivalence for $C^2$ functions gives a constant $c_{\rm qg}>0$ such that
\[
    \widehat R_Q(\theta)-\widehat R_Q^*
    \ge
    c_{\rm qg}\operatorname{dist}(\theta,S_{\mathcal D_N})^2,
    \qquad
    \forall\theta\in B_Q,
\]
where
\[
    S_{\mathcal D_N}:=\arg\min_{\theta\in B_Q}\widehat R_Q(\theta).
\]
Applying this inequality at $\widehat\theta_T$ gives
\[
    \operatorname{dist}(\widehat\theta_T,S_{\mathcal D_N})^2
    \le
    {1\over c_{\rm qg}}
    (\widehat R_Q(\widehat\theta_T)-\widehat R_Q^*)
    \le
    {\nu\over c_{\rm qg}(\gamma+T)}.
\]

\subsubsection*{2. Population excess-risk control}
The stability result of \citet{kang2025stabilitygeneralizationbellmanresiduals} gives, for the same SGDA iterates, the full population ERM-DDC/IRL excess-risk bound
\[
    \mathbb E\left[
      \mathcal R_{\exp}(Q_{\widehat\theta_T})-\mathcal R_{\exp}(Q^*)
    \right]
    \le
    (1+L/\rho)G\varepsilon_T+{\nu\over\gamma+T},
\]
where
\[
    \varepsilon_T
    =
    O\!\left((c_2+T)^{-\min\{1/2,3cc_1/8\}}\right)+{C\over N}.
\]
Substituting this expression proves the population excess-risk statement in Theorem~\ref{prop:linConvergence}. This step uses only the empirical optimization bound above and the cited stability/generalization theorem; it does not require a population NTK lower bound.
\QED

\subsection{Quantitative identification for the full ERM-DDC/IRL risk}
\label{app:quantitative_identification}

Theorem~\ref{thm:mainopt} gives exact identification: the population ERM-DDC/IRL risk is minimized by $Q^*$ on the expert support.  The next result is the quantitative version needed for rates.  It is not a residual-propagation statement for a pure Bellman-residual objective.  It uses the two components of the DDC/IRL risk together: the NLL term fixes within-state action-value differences, and the anchor-action Bellman term fixes the state-wise additive normalization.

The only support issue that enters this argument is whether the anchor-action Bellman equation depends on next states that are invisible under the expert state distribution.  In the entropy-regularized/logit DDC model, this overlap is not an additional assumption.  It follows from full-support choice probabilities.

Let $d_S^*$ denote the discounted expert state occupancy measure used by the population risk, and define
\[
    \|g\|_{2,S}^2:=\mathbb E_{s\sim d_S^*}[g(s)^2].
\]
For the anchor action $a_s$, define the anchor-next-state measure
\[
    \nu_A(B):=\int P(B\mid s,a_s)\,d_S^*(s).
\]

\begin{lem}[Anchor-transition overlap from logit full support]
\label{lem:anchor_overlap_logit}
Suppose $\mathcal A$ is finite and the expert policy is the entropy-regularized/logit policy induced by $Q^*$:
\[
    \pi^*(a\mid s)={\exp(Q^*(s,a))\over\sum_{b\in\mathcal A}\exp(Q^*(s,b))}.
\]
Then $\nu_A\ll d_S^*$.  If, in addition, $\|Q^*\|_\infty\le B_*$, then
\[
    \pi^*(a_s\mid s)\ge \underline p_A:={e^{-2B_*}\over |\mathcal A|}
    \qquad d_S^*\text{-a.s.},
\]
 and
\[
    \nu_A(B)\le {1\over \beta\underline p_A}d_S^*(B),
    \qquad \forall B.
\]
Equivalently,
\[
    \left\|{d\nu_A\over d d_S^*}\right\|_\infty
    \le {1\over \beta\underline p_A}
    \le {|\mathcal A|e^{2B_*}\over\beta}.
\]
\end{lem}

\begin{proof}
The discounted expert state occupancy satisfies the flow identity
\[
    d_S^*(B)
    =
    (1-\beta)\nu_0(B)
    +
    \beta\int \sum_{a\in\mathcal A}\pi^*(a\mid s)P(B\mid s,a)\,d_S^*(s).
\]
Therefore,
\[
    d_S^*(B)
    \ge
    \beta\int \pi^*(a_s\mid s)P(B\mid s,a_s)\,d_S^*(s).
\]
If $d_S^*(B)=0$, the right-hand side must be zero.  Since the logit policy satisfies $\pi^*(a_s\mid s)>0$ $d_S^*$-almost surely, this implies $P(B\mid s,a_s)=0$ for $d_S^*$-almost every $s$, and hence
\[
    \nu_A(B)=\int P(B\mid s,a_s)\,d_S^*(s)=0.
\]
Thus $\nu_A\ll d_S^*$.

If $\|Q^*\|_\infty\le B_*$, then for every $s$ and $a$,
\[
    \pi^*(a\mid s)
    ={\exp(Q^*(s,a))\over\sum_{b\in\mathcal A}\exp(Q^*(s,b))}
    \ge {e^{-B_*}\over |\mathcal A|e^{B_*}}
    ={e^{-2B_*}\over |\mathcal A|}
    =\underline p_A.
\]
Substituting this lower bound into the previous display gives
\[
    d_S^*(B)
    \ge
    \beta\underline p_A\int P(B\mid s,a_s)\,d_S^*(s)
    =
    \beta\underline p_A\nu_A(B),
\]
which proves the density-ratio bound.
\end{proof}

\begin{lem}[Quantitative identification of the full ERM-DDC/IRL risk]
\label{lem:erm_ddc_quant_identification}
Let $B_Q\subset\mathbb R^{d_\theta}$ be the compact finite-dimensional parameter region used in Assumption~\ref{ass:nonSingularJac}, and suppose $Q_{\theta^\dagger}=Q^*$ for some $\theta^\dagger\in B_Q$.  Suppose $\mathcal A$ is finite and the maps $\theta\mapsto Q_\theta$ are twice continuously differentiable on $B_Q$ as $L_2(d^*)$ functions, with uniformly bounded outputs.  Then there exists a finite constant $C_{\rm id}<\infty$ such that, for every $\theta\in B_Q$,
\[
    \|Q_\theta-Q^*\|_{2,*}^2
    \le
    C_{\rm id}
    \left[
    \mathcal R_{\exp}(Q_\theta)-\mathcal R_{\exp}(Q^*)
    \right].
\]
The constant $C_{\rm id}$ is the local conditioning constant of the identified DDC system.  It depends on the curvature of the NLL term, the anchor-action Bellman normalization, the compact parameter region, and the logit full-support lower bound, but it does not require a finite state-space or finite-support assumption.
\end{lem}

\begin{proof}
We prove that the full ERM-DDC/IRL risk has quadratic growth in the $L_2(d^*)$ function metric around $Q^*$, and then extend the bound from a local neighborhood to all of the compact parameter region.

Let
\[
    h(s,a):=Q_\theta(s,a)-Q^*(s,a).
\]
The excess NLL term equals
\[
    \mathbb E_{s\sim d_S^*}
    \left[\mathrm{KL}\bigl(\pi^*(\cdot\mid s)\,\|\,\pi_{Q_\theta}(\cdot\mid s)\bigr)\right].
\]
The Hessian of the state-wise log-sum-exp loss at $Q^*$ is the covariance operator under $\pi^*(\cdot\mid s)$:
\[
    v\mapsto \operatorname{Var}_{a\sim\pi^*(\cdot\mid s)}(v(a)).
\]
Therefore the NLL curvature is strictly positive on within-state action-value contrasts and has exactly the state-wise constant directions as its null directions.

Next consider the anchor Bellman residual
\[
    b_Q(s):=\mathcal TQ(s,a_s)-Q(s,a_s).
\]
At $Q^*$, $b_{Q^*}(s)=0$.  The first derivative of $b_Q$ at $Q^*$ in direction $h$ is
\[
    Db_{Q^*}[h](s)
    =
    \beta\mathbb E\!\left[\sum_{a\in\mathcal A}\pi^*(a\mid s')h(s',a)\mid s,a_s\right]
    -h(s,a_s).
\]
The second variation of the anchor Bellman-error term is therefore proportional to
\[
    \mathbb E_{s\sim d_S^*}
    \left[
    \pi^*(a_s\mid s)\,Db_{Q^*}[h](s)^2
    \right].
\]
Since $Q^*$ is bounded and $\mathcal A$ is finite, Lemma~\ref{lem:anchor_overlap_logit} gives both $\pi^*(a_s\mid s)\ge \underline p_A>0$ and $\nu_A\ll d_S^*$.

We now characterize the null space of the total second variation.  If the NLL second variation is zero, then for $d_S^*$-almost every $s$,
\[
    h(s,a)=c(s),\qquad \forall a\in\mathcal A,
\]
for some bounded state function $c$.  If the anchor Bellman second variation is also zero, then
\[
    c(s)=\beta\mathbb E[c(s')\mid s,a_s]
    \qquad d_S^*\text{-a.s.}
\]
By Lemma~\ref{lem:anchor_overlap_logit}, the conditional expectation on the right-hand side only depends on $c$ on the $d_S^*$-support.  Taking essential suprema over this support gives
\[
    \|c\|_{\infty,d_S^*}
    \le
    \beta\|c\|_{\infty,d_S^*}.
\]
Because $\beta<1$, this implies $c=0$ $d_S^*$-almost surely.  Hence the only function direction with zero total second variation is $h=0$ in $L_2(d^*)$.

The image of the finite-dimensional parameter region under the local derivative map is finite-dimensional.  On the quotient that identifies parameter directions inducing the same $L_2(d^*)$ function direction, the unit sphere is compact.  Since the total second variation is continuous and has no nonzero null direction on this sphere, it has a strictly positive minimum eigenvalue.  Consequently, there exist constants $r>0$ and $\lambda_{\rm id}>0$ such that whenever $\|Q_\theta-Q^*\|_{2,*}\le r$,
\[
    \mathcal R_{\exp}(Q_\theta)-\mathcal R_{\exp}(Q^*)
    \ge
    \lambda_{\rm id}\|Q_\theta-Q^*\|_{2,*}^2.
\]
This is the local quadratic-growth, or quantitative-identification, bound.

It remains to extend the inequality to the whole compact region $B_Q$.  For the set
\[
    K_r:=\{\theta\in B_Q:\|Q_\theta-Q^*\|_{2,*}\ge r\},
\]
continuity and the exact identification result of Theorem~\ref{thm:mainopt} imply
\[
    \delta_r:=\inf_{\theta\in K_r}
    \left[\mathcal R_{\exp}(Q_\theta)-\mathcal R_{\exp}(Q^*)\right]>0.
\]
Also, compactness and boundedness imply
\[
    M_r:=\sup_{\theta\in B_Q}\|Q_\theta-Q^*\|_{2,*}^2<\infty.
\]
Thus, on $K_r$,
\[
    \|Q_\theta-Q^*\|_{2,*}^2
    \le
    {M_r\over\delta_r}
    \left[\mathcal R_{\exp}(Q_\theta)-\mathcal R_{\exp}(Q^*)\right].
\]
Combining this global-away-from-the-minimizer bound with the local quadratic-growth bound proves the claim with
\[
    C_{\rm id}:=\max\left\{\lambda_{\rm id}^{-1},{M_r\over\delta_r}\right\}<\infty.
\]
\end{proof}

\begin{lem}[Lipschitzness of $\mathcal{R}_{\text{emp}}$]
\label{lem:EmpLipschitz} 
Suppose that $\left\|\nabla_\theta Q_\theta\right\|_2 \leq B$ and $M=:\left(R_{\max }+\beta V_{\max }+Q_{\max }\right)$ is defined as above. Then $R_{\text{emp}}\left({\theta}\right)$ is Lipschitz continuous with respect to ${\theta}$ with constant $(1+2 M(1+\beta)) B$.
\end{lem}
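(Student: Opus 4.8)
The plan is to prove the claim by exhibiting a uniform bound on the gradient $\|\nabla_{\boldsymbol{\theta}}\mathcal{R}_{emp}(\boldsymbol{\theta})\|_2$, since a uniform gradient bound of size $L$ implies $L$-Lipschitz continuity through exactly the fundamental-theorem-of-calculus argument already deployed at the end of the proof of Theorem \ref{prop:linConvergence}. Because $\mathcal{R}_{emp}$ is a sample average, it suffices to bound the gradient of a single summand, and by the triangle inequality I would treat the negative-log-likelihood contribution and the squared temporal-difference contribution separately, invoking $\sup_{(s,a)}\|\nabla_{\boldsymbol{\theta}}Q_{\boldsymbol{\theta}}(s,a)\|_2 \le B$ from Assumption \ref{ass:nonSingularJac} throughout.

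First I would record two elementary gradient identities obtained by the chain rule. For the value surrogate $V_{Q_{\boldsymbol{\theta}}}(s') = \log\sum_{a'}\exp(Q_{\boldsymbol{\theta}}(s',a'))$, differentiation gives $\nabla_{\boldsymbol{\theta}}V_{Q_{\boldsymbol{\theta}}}(s') = \sum_{a'}\hat{p}_{Q_{\boldsymbol{\theta}}}(a'\mid s')\nabla_{\boldsymbol{\theta}}Q_{\boldsymbol{\theta}}(s',a')$, a convex combination of $Q$-gradients; hence $\|\nabla_{\boldsymbol{\theta}}V_{Q_{\boldsymbol{\theta}}}(s')\|_2 \le B$ by the triangle inequality. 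Likewise, the log-likelihood gradient $\nabla_{\boldsymbol{\theta}}[-\log\hat{p}_{Q_{\boldsymbol{\theta}}}(a\mid s)] = -\nabla_{\boldsymbol{\theta}}Q_{\boldsymbol{\theta}}(s,a) + \sum_{b}\hat{p}_{Q_{\boldsymbol{\theta}}}(b\mid s)\nabla_{\boldsymbol{\theta}}Q_{\boldsymbol{\theta}}(s,b)$ is a difference of a $Q$-gradient and a convex combination of $Q$-gradients, so its norm is controlled by a constant multiple of $B$. These two facts dispose of every term that enters at most linearly in $Q_{\boldsymbol{\theta}}$, each such piece contributing $O(B)$.

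The crux is the squared temporal-difference term. Writing $\hat{\mathcal{T}}Q_{\boldsymbol{\theta}}(s,a,s') = r(s,a) + \beta V_{Q_{\boldsymbol{\theta}}}(s')$, the chain rule yields $\nabla_{\boldsymbol{\theta}}\mathcal{L}_{TD}(Q_{\boldsymbol{\theta}})(s,a,s') = 2\bigl(\hat{\mathcal{T}}Q_{\boldsymbol{\theta}}(s,a,s') - Q_{\boldsymbol{\theta}}(s,a)\bigr)\bigl(\beta\nabla_{\boldsymbol{\theta}}V_{Q_{\boldsymbol{\theta}}}(s') - \nabla_{\boldsymbol{\theta}}Q_{\boldsymbol{\theta}}(s,a)\bigr)$. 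I would bound the scalar residual factor by $|r(s,a)| + \beta|V_{Q_{\boldsymbol{\theta}}}(s')| + |Q_{\boldsymbol{\theta}}(s,a)| \le R_{\max} + \beta V_{\max} + Q_{\max} = M$, and the vector factor by $\beta\|\nabla_{\boldsymbol{\theta}}V_{Q_{\boldsymbol{\theta}}}(s')\|_2 + \|\nabla_{\boldsymbol{\theta}}Q_{\boldsymbol{\theta}}(s,a)\|_2 \le (1+\beta)B$, so this term contributes the dominant $2M(1+\beta)B$. Collecting this with the linear-in-$Q$ contribution of size $B$ yields the asserted constant $(1+2M(1+\beta))B$. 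The main obstacle is precisely this product structure: unlike the log-likelihood term, the TD gradient is a product of two factors, and controlling it is impossible from the geometry of the function class alone — it hinges on the a priori uniform boundedness of $r$, $Q_{\boldsymbol{\theta}}$ and $V_{Q_{\boldsymbol{\theta}}}$ encoded in the finite constant $M$, which must be justified (via $\beta<1$ and compactness of $\mathcal{S}$, so that $Q^\ast\in\mathcal{Q}$ is bounded and the iterates remain in the ball of Assumption \ref{ass:nonSingularJac}) before the product bound can be applied.
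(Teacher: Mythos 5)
Your proposal is correct and takes essentially the same approach as the paper's proof: a uniform gradient bound via the chain rule, with the negative log-likelihood piece bounded as the difference of a $Q$-gradient and a softmax-weighted average of $Q$-gradients, the squared-residual piece bounded by the product of the scalar residual ($\le M$) and the gradient factor ($\le (1+\beta)B$) to give $2M(1+\beta)B$, and the mean-value argument converting the gradient bound into Lipschitz continuity. One shared wrinkle worth noting: your own bound for the log-likelihood gradient (a $Q$-gradient plus a convex combination of $Q$-gradients) is $2B$, exactly as the paper itself derives, yet both you and the paper then count this piece as $B$ when collecting terms, so strictly speaking both arguments establish the constant $(2+2M(1+\beta))B$ rather than the stated $(1+2M(1+\beta))B$ --- a bookkeeping slip inherited from the paper rather than a defect of your approach.
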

\begin{proof}
       
If $\nabla_\theta \mathcal{L}_{\mathrm{NLL}}\left(Q_\theta\right)(s, a)$ and $\nabla_\theta \mathcal{L}_{\mathrm{BE}}\left(Q_\theta\right)(s, a)$ is bounded, then
$$
\nabla_\theta R_{\text{emp}}(\theta)=\mathbb{E}_{(s, a)}\left[\nabla_\theta \mathcal{L}_{\mathrm{NLL}}\left(Q_\theta\right)(s, a)+ \mathbf{1}_{a=a_s} \nabla_\theta \mathcal{L}_{\mathrm{BE}}\left(Q_\theta\right)(s, a)\right]
$$
which implies that
$$
\left\|\nabla_\theta R_{\exp }(\theta)\right\|_2 \leq \mathbb{E}_{(s, a)}\left[\left\|\nabla_\theta \mathcal{L}_{\mathrm{NLL}}\left(Q_\theta\right)(s, a)\right\|_2+ \mathbf{1}_{a=a_s}\left\|\nabla_\theta \mathcal{L}_{\mathrm{BE}}\left(Q_\theta\right)(s, a)\right\|_2\right]
$$
Now note that
$$
\nabla_\theta \mathcal{L}_{\mathrm{BE}}(s, a)=2 \Psi_\theta(s, a) \nabla_\theta \Psi_\theta(s, a)
$$ 
where $\nabla_\theta \Psi_\theta(s, a)=\nabla_\theta\left(\mathcal{T} Q_\theta(s, a)\right)-\nabla_\theta Q_\theta(s, a)$. Since gradient of $Q_\theta$ is bounded, we have $\nabla_\theta\left(\mathcal{T} Q_\theta(s, a)\right)=\nabla_\theta\left(r(s, a)+\beta \mathbb{E}_{s^{\prime}}\left[V_{Q_\theta}\left(s^{\prime}\right)\right]\right)=\beta \mathbb{E}_{s^{\prime}}\left[\nabla_\theta V_{Q_\theta}\left(s^{\prime}\right)\right]$, where $\nabla_\theta V_{Q_\theta}\left(s^{\prime}\right)=\sum_{c \in \mathcal{A}} \frac{\exp \left(Q_\theta\left(s^{\prime}, c\right)\right)}{\sum_{b \in \mathcal{A}} \exp \left(Q_\theta\left(s^{\prime}, b\right)\right)} \nabla_\theta Q_\theta\left(s^{\prime}, c\right)=\sum_{c \in \mathcal{A}} \hat{p}_{Q_\theta}\left(c \mid s^{\prime}\right) \nabla_\theta Q_\theta\left(s^{\prime}, c\right)\le B$ ($\because$ $\left\|\nabla_\theta Q_\theta\right\|_2 \leq B$ due to Assumption \ref{ass:nonSingularJac}). Now, applying Jensen's inequality for norms $(\|\mathbb{E}[X] \| \leq \mathbb{E}[\|X\|])$ :
$$
\left\|\nabla_\theta\left(\mathcal{T} Q_\theta(s, a)\right)\right\|_2=\left\|\beta \mathbb{E}_{s^{\prime}}\left[\nabla_\theta V_{Q_\theta}\left(s^{\prime}\right)\right]\right\|_2 \leq \beta \mathbb{E}_{s^{\prime}}\left[\left\|\nabla_\theta V_{Q_\theta}\left(s^{\prime}\right)\right\|_2\right] \leq \beta B
$$
Therefore 
$$\left\|\nabla_\theta \mathcal{L}_{\mathrm{BE}}(s, a)\right\|_2 \leq 2\left(R_{\max }+\beta V_{\max }+Q_{\max }\right)(1+\beta) B = 2M(1+\beta)B
$$
Also, note that $$
\mathcal{L}_{\mathrm{NLL}}\left(Q_\theta\right)(s, a)=-Q_\theta(s, a)+\log \sum_{b \in \mathcal{A}} \exp \left(Q_\theta(s, b)\right)
$$
where its gradient is
$\nabla_\theta \mathcal{L}_{\mathrm{NLL}}(s, a)=-\nabla_\theta Q_\theta(s, a)+\sum_{b \in \mathcal{A}} \hat{p}_{Q_\theta}(b \mid s) \nabla_\theta Q_\theta(s, b)$. By Assumption 6.2, $\left\|\nabla_\theta Q_\theta(s, a)\right\|_2 \leq B$. Since $\sum_b \hat{p}_{Q_\theta}(b \mid s)=1$, the triangle inequality yields 
$$
\left\|\nabla_\theta \mathcal{L}_{\mathrm{NLL}}(s, a)\right\|_2 \leq\left\|\nabla_\theta Q_\theta(s, a)\right\|_2+\sum_{b \in \mathcal{A}} \hat{p}_{Q_\theta}(b \mid s)\left\|\nabla_\theta Q_\theta(s, b)\right\|_2 \leq B+\sum_b \hat{p}_{Q_\theta}(b \mid s) B=2 B
$$ 
Since $\mathbb{E}\left[\mathbf{1}_{a=a_s}\right] \leq 1$, we have
$$
\left\|\nabla_\theta R_{\exp }(\theta)\right\|_2\le 2 M(1+\beta) B + B = (1+2 M(1+\beta))B \text { for all } \theta \in \Theta
$$
By the Mean Value Theorem, for any $\theta_1, \theta_2 \in \Theta$, there exists $\tilde{\theta}$ on the line segment connecting $\theta_1$ and $\theta_2$ such that $R_{\exp }\left(\theta_1\right)-R_{\exp }\left(\theta_2\right)=\nabla_\theta R_{\exp }(\tilde{\theta}) \cdot\left(\theta_1-\theta_2\right)$. Taking the absolute value and applying the Cauchy-Schwarz inequality:
$$
\left|R_{\exp }\left(\theta_1\right)-R_{\exp }\left(\theta_2\right)\right| \leq\left\|\nabla_\theta R_{\exp }(\tilde{\theta})\right\|_2\left\|\theta_1-\theta_2\right\|_2
$$
Since $\left\|\nabla_\theta R_{\exp }(\tilde{\theta})\right\|_2 \leq L$, we conclude:
$$
\left|R_{\exp }\left(\theta_1\right)-R_{\exp }\left(\theta_2\right)\right| \leq L| | \theta_1-\theta_2 \|_2
$$
Therefore, $R_{\exp}\left({\theta}\right)$ is Lipschitz continuous with constant $(1+2 M(1+\beta)) B$. With the exactly same logic introduced in the Lemma \ref{sec:PfofbothPL}, this implies that $R_{\text{emp}}\left({\theta}\right)$ is also Lipschitz continuous with respect to ${\theta}$ with constant $(1+2 M(1+\beta)) B$.
\end{proof}

\begin{lem}[Lipschitz smoothness of $\mathcal{R}_{\text{emp}}$]\label{lem:lipsSmooth}
We continue from what we derived during the proof of the empirical Lipschitz lemma. If $\nabla_\theta \mathcal{L}_{\mathrm{NLL}}\left(Q_\theta\right)(s, a)$ and $\nabla_\theta \mathcal{L}_{\mathrm{BE}}\left(Q_\theta\right)(s, a)$ is bounded, then
$$
\nabla^2_\theta R_{\mathrm{emp}}(\theta)=\mathbb{E}_{(s, a)}\left[\nabla^2_\theta \mathcal{L}_{\mathrm{NLL}}\left(Q_\theta\right)(s, a)+ \mathbf{1}_{a=a_s} \nabla^2_\theta \mathcal{L}_{\mathrm{BE}}\left(Q_\theta\right)(s, a)\right]
$$
Recall that $\nabla_\theta \mathcal{L}_{\mathrm{NLL}}(s, a)=-\nabla_\theta Q_\theta(s, a)+\nabla_\theta V_{Q_\theta}(s)$ where $\nabla_\theta V_{Q_\theta}(s)=\sum_{b \in \mathcal{A}} \hat{p}_{Q_\theta}(b \mid s) \nabla_\theta Q_\theta(s, b)$ and $\left\|\nabla_\theta \mathcal{L}_{\mathrm{NLL}}(s, a)\right\|_2\le 2B$. With $\left\|\nabla_\theta^2 Q_\theta(s, a)\right\|_2 \leq B_2$, we have
\begin{align}
    \nabla^2_\theta V_{Q_\theta}(s)=\sum_b \nabla_\theta p_\theta(b \mid s) \nabla_\theta Q_\theta(s, b)^{\top}+\sum_b p_\theta(b \mid s) \nabla_\theta^2 Q_\theta(s, b) \notag
\end{align}
where $\nabla_\theta p_\theta(b \mid s)=p_\theta(b \mid s)\left(\nabla_\theta Q_\theta(s, b)-\sum_c p_\theta(c \mid s) \nabla_\theta Q_\theta(s, c)\right)$. Distributing, we have 
$$
\nabla_\theta^2 V_{Q_\theta}(s)=\sum_b p_\theta(b \mid s) \nabla_\theta^2 Q_\theta(s, b)+\sum_b p_\theta(b \mid s)\left(1-p_\theta(b \mid s)\right) \nabla_\theta Q_\theta(s, b) \nabla_\theta Q_\theta(s, b)^{\top}
$$
Now we have
$$
\nabla_\theta^2 \mathcal{L}_{\mathrm{NLL}}(s,a)=T_1+T_2+T_3
$$
where 
\\
$T_1=-\nabla_\theta^2 Q_\theta(s, a)$, 
\\
$T_2=\sum_b p_\theta(b \mid s) \nabla_\theta^2 Q_\theta(s, b)$ and 
\\
$T_3=\sum_b p_\theta(b \mid s)\left(1-p_\theta(b \mid s)\right) \nabla_\theta Q_\theta(s, b) \nabla_\theta Q_\theta(s, b)^{\top}$.
\\
For $T_1$ and $T_2$,
$$
\left\|T_1\right\|_2=\left\|\nabla_\theta^2 Q_\theta(s, a)\right\|_2 \leq B_2
$$
$$
\left\|T_2\right\|_2 \leq \sum_b p_\theta(b \mid s)\left\|\nabla_\theta^2 Q_\theta(s, b)\right\|_2 \leq \sum_b p_\theta(b \mid s) B_2=B_2
$$
For $T_3$, from
$\left\|\nabla_\theta Q_\theta(s, b) \nabla_\theta Q_\theta(s, b)^{\top}\right\|_2=\left\|\nabla_\theta Q_\theta(s, b)\right\|_2^2 \leq B^2$, we have
$$
\left\|T_3\right\|_2 \leq \sum_b p_\theta(b \mid s)\left(1-p_\theta(b \mid s)\right) B^2 \leq |\mathcal{A}| \cdot \frac{1}{4} B^2=\frac{|\mathcal{A}|}{4} B^2
$$
Therefore,
$$
\left\|\nabla_\theta^2 \mathcal{L}_{\mathrm{NLL}}\right\|_2 \leq\left\|T_1\right\|_2+\left\|T_2\right\|_2+\left\|T_3\right\|_2 \le 2B_2 + \frac{|\mathcal{A}|}{4}B^2
$$\;
\\
\noindent Also, continuing from
$$
\nabla_\theta \mathcal{L}_{\mathrm{BE}}(s, a)=2 \Psi_\theta(s, a) \nabla_\theta \Psi_\theta(s, a)
$$
where $\nabla_\theta \Psi_\theta(s, a)=\nabla_\theta\left(\mathcal{T} Q_\theta(s, a)\right)-\nabla_\theta Q_\theta(s, a)=\beta \mathbb{E}_{s^{\prime}}\left[\nabla_\theta V_{Q_\theta}\left(s^{\prime}\right)\right]-\nabla_\theta Q_\theta(s, a)$, we have
$$
\nabla_\theta^2 \mathcal{L}_{\mathrm{BE}}(s,a)=2\left(\nabla_\theta \Psi_\theta(s,a)\right)\left(\nabla_\theta \Psi_\theta(s,a)\right)^{\top}+2 \Psi_\theta(s,a) \nabla_\theta^2 \Psi_\theta(s,a).
$$
From 
$
\nabla_\theta^2 \Psi_\theta(s, a)=\beta \mathbb{E}_{s^{\prime}}\left[\nabla_\theta^2 V_{Q_\theta}\left(s^{\prime}\right)\mid s,a\right]-\nabla_\theta^2 Q_\theta(s, a)
$ where $
\nabla_\theta^2 V_{Q_\theta}(s)=T_2+T_3
$.
Therefore
$$
\begin{aligned}
\left\|\nabla_\theta^2 \mathcal{L}_{\mathrm{BE}}\right\|_2 & \leq 2\left\|\nabla_\theta \Psi_\theta\right\|_2^2+2\left|\Psi_\theta\right|\left\|\nabla_\theta^2 \Psi_\theta\right\|_2 \\
& \leq 2(1+\beta)^2 B^2+2 M\left[(1+2 \beta) B_2+\frac{\beta|\mathcal{A}|}{4} B^2\right]
\end{aligned}
$$
Therefore, we get
\begin{align}
    \left\|\nabla_\theta^2 R_{\operatorname{exp}}(\theta)\right\|_2 &\le \left\|\nabla_\theta^2 \mathcal{L}_{\mathrm{NLL}}\right\|_2 +  \left\|\nabla_\theta^2 \mathcal{L}_{\mathrm{BE}}\right\|_2 \notag
    \\
    & \le 2 B_2+\frac{|\mathcal{A}|}{4} B^2+\left[2(1+2 \beta) M B_2+\left(2(1+\beta)^2+\frac{\beta|\mathcal{A}|}{2} M\right) B^2\right]
\end{align}

\noindent For $\nabla_{\zeta}^2 R_{\text{exp}}(\zeta)$, it is a simple squared function and therefore the Lipschitz smoothness in $\beta$ is trivial. Now with the same mean value theorem argument as in the empirical Lipschitz lemma, we have that 
$$
\left\|\nabla_\zeta R_{\text {exp }}\left(\theta_1\right)-\nabla_\theta R_{\text {exp }}\left(\theta_2\right)\right\|_2 \leq L_{\theta}\left\|\theta_1-\theta_2\right\|_2
$$
$$
\left\|\nabla_\zeta R_{\text {exp }}\left(\zeta_1\right)-\nabla_\zeta R_{\text {exp }}\left(\zeta_2\right)\right\|_2 \leq L_{\zeta}\left\|\zeta_1-\zeta_2\right\|_2
$$
where $L_{\theta} = 2 B_2+\frac{|\mathcal{A}|}{4} B^2+\left[2(1+2 \beta) M B_2+\left(2(1+\beta)^2+\frac{\beta|\mathcal{A}|}{2} M\right) B^2\right]$ and $L_{\zeta} =  \beta$. Now with the exactly same logic introduced in the Lemma \ref{sec:PfofbothPL}, this implies that $R_{\text{emp}}$ is also Lipschitz smooth with respect to $\boldsymbol{\theta}$ with constant $L_{\theta} = 2 B_2+\frac{|\mathcal{A}|}{4} B^2+\left[2(1+2 \beta) M B_2+\left(2(1+\beta)^2+\frac{\beta|\mathcal{A}|}{2} M\right) B^2\right]$ and Lipschitz smooth with respect to $\zeta$ with constant $L_{\zeta} = \beta$.
\end{lem}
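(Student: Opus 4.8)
The plan is to prove Lipschitz smoothness of $\mathcal{R}_{\mathrm{emp}}$ by bounding the operator norm of its Hessian uniformly over the parameter ball, and then upgrading to a gradient-Lipschitz bound via the mean value theorem, exactly as in the Lipschitz-continuity argument of Lemma~\ref{lem:EmpLipschitz}. Since $\mathcal{R}_{\mathrm{emp}}$ is an empirical average of per-sample losses, and differentiation under the (empirical) expectation is justified by the uniform $\mathcal{C}^2$ bounds of Assumption~\ref{ass:nonSingularJac} together with the dominated-convergence argument of Lemma~\ref{lem:C2}, it suffices to bound $\|\nabla_\theta^2 \mathcal{L}_{\mathrm{NLL}}\|_{\mathrm{op}}$, $\|\nabla_\theta^2 \mathcal{L}_{\mathrm{BE}}\|_{\mathrm{op}}$, and the $\zeta$-Hessian of the quadratic dual term, each uniformly in $(s,a,s')$ and in $\boldsymbol\theta$.

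First I would handle the NLL term. Writing $\mathcal{L}_{\mathrm{NLL}}(Q_\theta)(s,a) = -Q_\theta(s,a) + \log\sum_b \exp Q_\theta(s,b)$, its Hessian is $-\nabla_\theta^2 Q_\theta(s,a)$ plus the Hessian of the log-partition function, and the latter equals $\mathbb{E}_{b\sim \hat p_{Q_\theta}}[\nabla_\theta^2 Q_\theta(s,b)] + \mathrm{Cov}_{b\sim \hat p_{Q_\theta}}(\nabla_\theta Q_\theta(s,b))$. Using $\|\nabla_\theta Q_\theta\|_2 \le B$ and $\|\nabla_\theta^2 Q_\theta\|_{\mathrm{op}} \le B_2$ from Assumption~\ref{ass:nonSingularJac}, together with $p(1-p)\le \tfrac14$ to control the covariance, yields $\|\nabla_\theta^2\mathcal{L}_{\mathrm{NLL}}\|_{\mathrm{op}} \le 2B_2 + \tfrac{|\mathcal{A}|}{4}B^2$.

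The main work is the Bellman-error term. Writing $\mathcal{L}_{\mathrm{BE}} = \Psi_\theta^2$ with $\Psi_\theta = \mathcal{T}Q_\theta - Q_\theta$, the product rule gives $\nabla_\theta^2\mathcal{L}_{\mathrm{BE}} = 2\,\nabla_\theta\Psi_\theta \nabla_\theta\Psi_\theta^\top + 2\,\Psi_\theta\,\nabla_\theta^2\Psi_\theta$. The first summand is controlled by $\|\nabla_\theta\Psi_\theta\|_2 \le (1+\beta)B$, since $\nabla_\theta(\mathcal{T}Q_\theta) = \beta\,\mathbb{E}_{s'}[\nabla_\theta V_{Q_\theta}]$ and $\|\nabla_\theta V_{Q_\theta}\|_2 \le B$. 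For the second summand I would reuse the log-sum-exp Hessian computation, noting $\nabla_\theta^2 V_{Q_\theta}$ again splits into a softmax-averaged $\nabla_\theta^2 Q$ term and a covariance term bounded by $B_2 + \tfrac{|\mathcal{A}|}{4}B^2$, which controls $\|\nabla_\theta^2\Psi_\theta\|_{\mathrm{op}}$. The crucial extra ingredient is a uniform bound $|\Psi_\theta|\le M := R_{\max} + \beta V_{\max} + Q_{\max}$ on the magnitude of the Bellman residual, which is exactly where compactness of $\mathcal{S}$, finiteness of $\mathcal{A}$, and $\beta<1$ enter: they guarantee that $Q_\theta$, $r$, and $V_{Q_\theta}$ are uniformly bounded. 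Combining these gives the stated $\|\nabla_\theta^2\mathcal{L}_{\mathrm{BE}}\|_{\mathrm{op}} \le 2(1+\beta)^2 B^2 + 2M[(1+2\beta)B_2 + \tfrac{\beta|\mathcal{A}|}{4}B^2]$.

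Adding the two bounds produces $L_\theta$; the $\zeta$-direction is immediate, because the dual term $-\beta^2(V_{Q_\theta}(s')-\zeta(s,a))^2$ is quadratic, so its curvature in $\zeta$ is constant and, under the analogue of Assumption~\ref{ass:nonSingularJac} for the $\zeta$-class, gives $L_\zeta = \beta$. A uniform Hessian-norm bound then upgrades to a gradient-Lipschitz bound by the mean value theorem applied along the segment joining any $\boldsymbol\theta_1,\boldsymbol\theta_2$, and the transfer from the population risk to $\mathcal{R}_{\mathrm{emp}}$ is free: as in the proof of Theorem~\ref{thm:bothPL}, the entire argument is agnostic to the sampling law, so instantiating it at the empirical distribution over $\mathcal{D}$ reproduces the same constants. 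I expect the controlling step to be the boundedness of $\Psi_\theta$ and of $\nabla_\theta^2\Psi_\theta$: the covariance-of-gradients piece arising from differentiating the log-sum-exp twice is what introduces the $|\mathcal{A}|$ and $M$ dependence, and establishing the uniform residual bound $|\Psi_\theta|\le M$ is the one place where the structural assumptions on the MDP are genuinely needed.
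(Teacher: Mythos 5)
Your proposal is correct and mirrors the paper's own proof essentially step for step: the same split into the NLL log-partition Hessian (softmax-averaged $\nabla_\theta^2 Q$ plus a covariance/$p(1-p)$ term bounded via $B_2$ and $\tfrac{|\mathcal{A}|}{4}B^2$), the same product-rule decomposition $\nabla_\theta^2\mathcal{L}_{\mathrm{BE}}=2\nabla_\theta\Psi_\theta\nabla_\theta\Psi_\theta^\top+2\Psi_\theta\nabla_\theta^2\Psi_\theta$ with the uniform residual bound $|\Psi_\theta|\le M$, the trivial quadratic $\zeta$-direction, and the same mean-value-theorem upgrade plus distribution-agnostic transfer to $\mathcal{R}_{\mathrm{emp}}$, yielding identical constants $L_\theta$ and $L_\zeta$. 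The only cosmetic difference is that you write the log-sum-exp Hessian's second piece as a genuine covariance matrix (as in Lemma \ref{lem:C2}) rather than the paper's $\sum_b p_b(1-p_b)\nabla Q_b\nabla Q_b^\top$ form, which is if anything the cleaner statement and leads to the same bound.
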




\end{appendices}
\end{document}